\documentclass[letterpaper]{article} 
\usepackage{aaai23}  
\usepackage{times}  
\usepackage{helvet}  
\usepackage{courier}  
\usepackage[hyphens]{url}  
\usepackage{graphicx} 
\urlstyle{rm} 
\usepackage{natbib}  
\usepackage{caption} 
\frenchspacing  
\setlength{\pdfpagewidth}{8.5in}  
\setlength{\pdfpageheight}{11in}  
%
\usepackage{algorithm}

\usepackage[utf8]{inputenc} 

\usepackage{url}            
\usepackage{booktabs}       
\usepackage{amsfonts}       
\usepackage{nicefrac}       
\usepackage{microtype}      

\usepackage{microtype}
\usepackage{graphicx}
\usepackage{algpseudocode}
\usepackage{booktabs} 
\usepackage{amsmath}
\usepackage{amsthm}
\usepackage{amssymb} 

\usepackage{enumerate}
\usepackage{bm}
\usepackage{graphicx}
\usepackage{thmtools}
\usepackage{thm-restate}
\usepackage{mathtools}
\usepackage[skip=0pt]{caption}
\usepackage[skip=0pt]{subcaption}

\usepackage{booktabs}
\usepackage{fontawesome}
\usepackage{enumitem}


\newtheorem{lemma}{\bf Lemma}
\newtheorem{fact}{\bf Fact}

\newtheorem{definition}{\bf Definition}


%
\usepackage{newfloat}
\usepackage{listings}
\DeclareCaptionStyle{ruled}{labelfont=normalfont,labelsep=colon,strut=off} 
\lstset{%
	basicstyle={\footnotesize\ttfamily},
	numbers=left,numberstyle=\footnotesize,xleftmargin=2em,
	aboveskip=0pt,belowskip=0pt,%
	showstringspaces=false,tabsize=2,breaklines=true}
\floatstyle{ruled}
\newfloat{listing}{tb}{lst}{}
\floatname{listing}{Listing}
%
\pdfinfo{
/TemplateVersion (2023.1)
}

\setcounter{secnumdepth}{2} 

%


\title{Understanding Representation Learnability of Nonlinear Self-Supervised Learning}
\author{
    Ruofeng Yang\textsuperscript{\rm 1}, Xiangyuan Li\textsuperscript{\rm 1}, Bo Jiang\textsuperscript{\rm 1}, Shuai Li\textsuperscript{\rm 1}\thanks{Corresponding author}\\
}
\affiliations{
    \textsuperscript{\rm 1}Shanghai Jiao Tong University\\


    wanshuiyin@sjtu.edu.cn, lixiangyuan19@sjtu.edu.cn,  bjiang@sjtu.edu.cn, shuaili8@sjtu.edu.cn
%
}

\usepackage{bibentry}

\begin{document}

\maketitle

\begin{abstract}
Self-supervised learning (SSL) has empirically shown its data representation learnability in many downstream tasks. There are only a few theoretical works on data representation learnability, and many of those focus on final data representation, treating the nonlinear neural network as a ``black box". However, the accurate learning results of neural networks are crucial for describing the data distribution features learned by SSL models. Our paper is the first to analyze the learning results of the nonlinear SSL model accurately. We consider a toy data distribution that contains two features: the label-related feature and the hidden feature. Unlike previous linear setting work that depends on closed-form solutions, we use the gradient descent algorithm to train a 1-layer nonlinear SSL model with a certain initialization region and prove that the model converges to a local minimum. Furthermore, different from the complex iterative analysis, we propose a new analysis process which uses the \textbf{exact version of  Inverse Function Theorem} to accurately describe the features learned by the local minimum. With this local minimum, we prove that the nonlinear SSL model can capture the label-related feature and hidden feature at the same time. In contrast, the nonlinear supervised learning (SL) model can only learn the label-related feature. We also present the learning processes and results of the nonlinear SSL and SL model via simulation experiments. 
\end{abstract}

\section{Introduction}\label{sec:introduction}

In recent years, self-supervised learning has become an important paradigm in machine learning because it can use datasets without expensive target labels to learn useful data representations for many downstream tasks \cite{devlin2018bert,radford2019language,wu2020self}. 

At present, contrastive learning, a common self-supervised learning method, has shown superior performance in learning data representations and outperformed supervised learning in some downstream tasks \cite{he2020momentum,DBLP:conf/cvpr/ChenH21,grill2020bootstrap,caron2020unsupervised,WangWSFLJWZL22}. Contrastive learning methods usually form a dual pair of siamese
networks \cite{bromley1993signature} and use data augmentations for each datapoint. They treat two augmented datapoints of the same datapoint as positive pairs and maximize the similarity between positive pairs to learn data representations. However, the siamese networks often collapse to a trivial solution during the training process, rendering the learned representation meaningless.

To avoid the above problem, earlier contrastive learning methods such as MoCo \cite{he2020momentum} and SimCLR \cite{chen2020simple} 
treat augmented datapoints from different datapoints as negative pairs and prevent model collapse by the trade-off between positive and negative pairs. However, obtaining high-quality negative pairs is difficult \cite{khosla2020supervised}, which in turn requires additional changes to the model. Recently, other classes of the SSL model, such as BYOL~\cite{grill2020bootstrap} and SimSiam~\cite{DBLP:conf/cvpr/ChenH21}, which do not use negative pairs, have been studied. These models will not collapse to a trivial solution because they construct subtle asymmetry in the structure of the siamese network and create a dynamic buffer area \cite{tian2021understanding}. SimSiam further simplifies the structure of BYOL and only retains the core asymmetry. The simplified model makes training and analysis more convenient while obtaining competitive and meaningful data representations.

Despite the empirical success of SSL \cite{he2020momentum,chen2020simple,DBLP:conf/cvpr/ChenH21,zhong2022self}, there are only a few works that focus on data representation learnability \cite{arora2019theoretical,tosh2021contrastive,lee2021predicting,haochen2021provable,haochen2022beyond,tian2022deep,tian2022understanding,wen2021toward,liu2021self}. However, studying the learnability is helpful in understanding why SSL models can obtain meaningful data representations. 
Many of the above works used final data representation to study the data representation learnability. \citet{arora2019theoretical} obtained the data representation function by minimizing the empirical SSL loss in a special data representation function class. 
\citet{haochen2021provable} and \citet{haochen2022beyond} studied final data representation by closed-form solutions. They viewed the nonlinear neural network as a ``black box" and ignored the learning result of the nonlinear neural network. Thus their results do not describe the features accurately captured by SSL models and explain the encoding process of neural networks. 

\citet{wen2021toward} and \citet{tian2020understanding} tried to understand the learning results of nonlinear SSL models by analyzing a relatively overparameterized neural network. However, their results do not provide an accurate answer to whether SSL models could exactly capture the important features of data distribution or just capture a mixture of features. 

\citet{liu2021self} studied the learning results of SSL models, and it is the most relevant work to us. They proved that SSL models could learn the label-related features and hidden features at the same time. However, their work is a linear framework, and their results depend on the closed-form solutions of the learning results. When considering a nonlinear SSL model, we can not get closed-form solutions due to the nonconvexity. Therefore, which  features can be exactly learned by nonlinear SSL models remains an important open question. We need a new analysis process to analyze the specific learning results of the nonlinear SSL model.

In this work, for the first time, we use gradient descent to train a \textbf{nonlinear SSL model} and analyze the data representation learnability by using the learning results of neural networks. We accurately describe the data distribution features captured by the SSL model. Specifically, we accomplish:
\begin{enumerate}
    \item With a designed data distribution, we use gradient descent (GD) to train a 1-layer nonlinear SSL model and prove that the model can converge to a local minimum under a certain initialization region. Using locally strong convexity, we also obtain the convergence rate of the algorithm. 
    \item We describe the properties of the local minimum using \textbf{the exact version of Inverse Function Theorem}. Using these properties, we prove that the SSL model learns the label-related feature and hidden feature at the same time. 
    \item We prove that the nonlinear SL model can only learn the label-related feature. In other words, SSL is superior to SL in learning data representation. We verify the correctness of the above results through simulation experiments.
\end{enumerate}

\section{Related Work}\label{sec:related_work}

\paragraph{Theoretical analyses for final data representation.} For the analysis of the data representation learnability, many works focus on the final data representation (the optimal solution of the pretext task) and measure the quality of the final data representation in the downstream tasks by using a linear classifier \cite{haochen2021provable,haochen2022beyond,arora2019theoretical,lee2021predicting,tosh2021contrastive}. The main difference in this line of work is how to obtain the final data representation.  \citet{arora2019theoretical} assumed that the data representation function class contains a function with low SSL loss and minimized the empirical SSL loss in this class.
\citet{haochen2021provable} constructed the population positive-pair graph with augmented datapoints as vertices and the correlation of augmented datapoints as edge weights. Then they proved that the closed-form solutions of the data representations are approximately equivalent to the eigenvectors of the adjacency matrix of the above graph. \citet{lee2021predicting} used the nonlinear canonical correlation analysis (CCA) method to obtain the final data representation. The above works viewed the nonlinear neural network as a ``black box" and ignored the learning results of the neural network. However, the learning results are crucial for analyzing which features are exactly captured by SSL methods. Hence we need to propose a new method to analyze the learning results.

\paragraph{Theoretical analyses for learning results of SSL.}
\citet{liu2021self} analyzed the learning results of SSL methods. With a 1-layer linear SSL model, similar to SimSiam, they demonstrated that the SSL models could learn label-related and hidden features simultaneously. Because of the linear structure and the objective function with a designed quartic regularization, they can directly obtain the closed-form solutions of the learning results by using spectral decomposition of the matrix related to the data distribution. \citet{tian2022deep} and \citet{tian2022understanding} 
dealt with the learning results of the nonlinear SSL model by analyzing an objective function similar to traditional Principal Component Analysis (PCA). However, their results were extended by a hidden neuron. Hence their results can not definitively answer which data features are captured by the model and which are ignored. \citet{wen2021toward} and \citet{tian2020understanding} tried to understand the learning results of the nonlinear SSL by using stochastic gradient descent (SGD). However, their results relied heavily on special data augmentation and relatively overparameterized neural networks. Furthermore, their results only showed that with a large number of neurons, the neural networks contain all data features. They did not accurately characterize the learning result of each neuron. In other words, these results did not 
show the features exactly captured by the SSL methods.

\paragraph{Theoretical guarantees for supervised learning.}
For the analysis of the supervised learning, researchers focus on (1) How to characterize the landscape of the objective function; (2) How to converge to the local minima through algorithms (such as GD and SGD); (3) How fast the algorithm converges to the local minimum \cite{allen2019convergence,du2017gradient,li2017convergence,brutzkus2017globally,du2019gradient}. Hence they focus on characterizing the relationship between the objective function and its gradient and less on the specific form or the properties of local minima. However, the specific forms of local minima are helpful to determine whether SSL methods can capture important data distribution features. 

\section{Problem Formulation}
In this section, we introduce the data distribution and the nonlinear SSL and SL model to be studied in this paper.
\subsection{Data Distribution}
The classification problem is a typical downstream task in machine learning, which can be used to measure the quality of data representation. We start with a simple binary classification and want to explore the differences in the data representations learned by SSL and SL models.

To train models, we first build the data distribution. In most cases, the data distribution contains not only label-related features but also some hidden features. These hidden features may not be helpful for the current task but may be useful for other downstream tasks. We want to determine whether the nonlinear SSL models capture hidden features, resulting in a richer data representation. At the same time, we also wonder whether the SL models only learn label-related features.

For the simplicity of analysis, we consider the label-related features as a group, represented by the feature $e_1$. We also use $e_2$ to represent the hidden features. Inspired by previous work \cite{liu2021self}, which solved the above question in the linear setting, we construct a data distribution containing four kinds of datapoints. The number of these four kinds of datapoints are $n_1, n_2, n_3, n_4$ and $n=n_1+n_2+n_3+n_4$. Every time we generate a datapoint, we draw among the four kinds of datapoints with a probability of $1/4$, which means $\mathbb{E}[n_l] = n/4, \forall l\in[4]$. Let $\tau>1, \rho>0$ are two hyperparameters of the data distribution and $\xi_1,..., \xi_n\in \mathbb{R}^d$ are datapoint noise terms sampled from a Gaussian distribution $\mathcal{N}(0,I)$. Define 
\begin{align}
    \mathcal{D}_1 &= \{x_i|x_i=e_1+\rho \xi_i\}_{i=1}^{n_1}\,,\notag\\
    \mathcal{D}_2 &= \{x_i|x_i=e_1+\tau e_2+\rho \xi_i\}_{i=n_1+1}^{n_1+n_2}\,,\notag\\
    \mathcal{D}_3 &= \{x_i|x_i=-e_1+\rho \xi_i\}_{i=n_1+n_2+1}^{n_1+n_2+n_3}\,,\notag\\
    \mathcal{D}_4 &= \{x_i|x_i=-e_1+\tau e_2+\rho \xi_i\}_{i=n-n_4+1}^{n}\,,\label{dis_SSL}
\end{align}
as the datasets of four kinds of datapoints, where $e_1, e_2 \in \mathbb{R}^d$ are two orthogonal unit-norm vectors. Then, the data distribution in this paper is $\mathcal{D}=\mathcal{D}_1\cup\mathcal{D}_2\cup\mathcal{D}_3\cup\mathcal{D}_4$.

Because labels are required during the SL model training process, we modify the data distribution. Specifically, we denote the class label by $y=\{0,1\}$. When $x_i\in \mathcal{D}_1\bigcup \mathcal{D}_2, y=0$, otherwise $y=1.$ After the above steps, we obtain the data distribution $\mathcal{D}^{\text{SL}}$ of the nonlinear SL model.

It is clear that the binary classification task can be completed using only the representative label-related feature $e_1$. However, since $\tau \geq1$, $e_2$ is also an important hidden feature.

Although this data distribution is a toy setting, it is sufficient to distinguish the learnability of the SSL and SL models. This data distribution is also representative. In Sec 4.2, we explain that the proof process can be easily extended to a more general data distribution containing many label-related and hidden features.

\subsection{Model}\label{subsec:model}
In this section, we introduce the activation function and then the nonlinear SSL and SL model.

In this paper, we analyze nonlinear models, so it is necessary to introduce activation functions. We discuss two activation functions: sigmoid function $\sigma(x)=\frac{1}{1+e^{-x}}$ and tanh function $\sigma_2(x)=\frac{e^{x}-e^{-x}}{e^{x}+e^{-x}}$.

\paragraph{The SSL model.} We focus on a variant of  SimSiam \cite{DBLP:conf/cvpr/ChenH21}. SimSiam has shown impressive performance in various downstream experiments using only positive pairs and has become a representative SSL model. Fig. \ref{fig:SSL_model} shows the structure of the model in this paper. The datapoint $x_i$ is augmented by data augmentation $\xi_{\text{aug}}$ and $\xi'_{\text{aug}}$ to obtain two augmented datapoints $x'_i$ and $x''_i$. The data representations $z'_i$ and $z''_i$ of $x'_i$ and $x''_i$ are obtained through the nonlinear encoder $\sigma(Wx'_i)$ and $\sigma(Wx''_i)$. We use inner product $\langle z_i',z_i''\rangle$ to measure the similarity between $z'_i$ and $z''_i$. \citet{tian2021understanding} showed that a regularizer is essential for the existence of 
the non-collapsed solution. Hence, $\alpha \|W\|_F^2$ is added in $L$.
The objective function $L$ is defined as
\begin{align}
    &\min_{W}  L=\alpha\left\|W\right\|_F^2\notag\\ &-\frac{1}{n}\sum_{i=1}^n\mathbb{E}_{\xi_{\text{aug}},\xi'_{\text{aug}}}\Big[\left\langle \sigma (W(x_i+\xi_{\text{aug}})),\sigma(W(x_i+\xi'_{\text{aug}}))\right\rangle\Big]\,,\label{Objective_all_noise_empirical_pro}
\end{align}
where $\alpha$ is the coefficient of regularizer, $W=\left[w_1,w_2\right]^{\top}\in \mathbb{R}^{2\times d}$ and $\xi_{\text{aug}},\xi'_{\text{aug}} \sim \mathcal{N}\left(0,\rho^2 I\right)$. $W$ is the weight matrix of the encoder containing two neurons, and the parameters of the encoder are the same on both sides. 

Note that when \citet{liu2021self} took expectation over $\xi_{\text{aug}}, \xi'_{\text{aug}}$, they canceled the effect of $\xi_{\text{aug}}, \xi'_{\text{aug}}$ due to its linear framework. In other words, the variance of $\xi_{\text{aug}}, \xi'_{\text{aug}}$ can be arbitrarily large. However, \citet{jing2021understanding} showed that strong augmentation causes dimensional collapse. Hence it is necessary to consider the variance of the data augmentation. In our formulation, $\xi_{\text{aug}}$ and $\xi'_{\text{aug}}$ can not be canceled due to the nonlinear model. Thus our setting is more reasonable and more in line with the models in practice.
To deal with data augmentation operation, we adopt $\xi_{\text{aug}},\xi'_{\text{aug}} \sim \mathcal{N}\left(0,\rho^2 I\right)$. 
\begin{figure}[t]
\setlength{\abovecaptionskip}{10pt} 
    \centering
    \includegraphics[width=7.5cm,height=5.5cm]{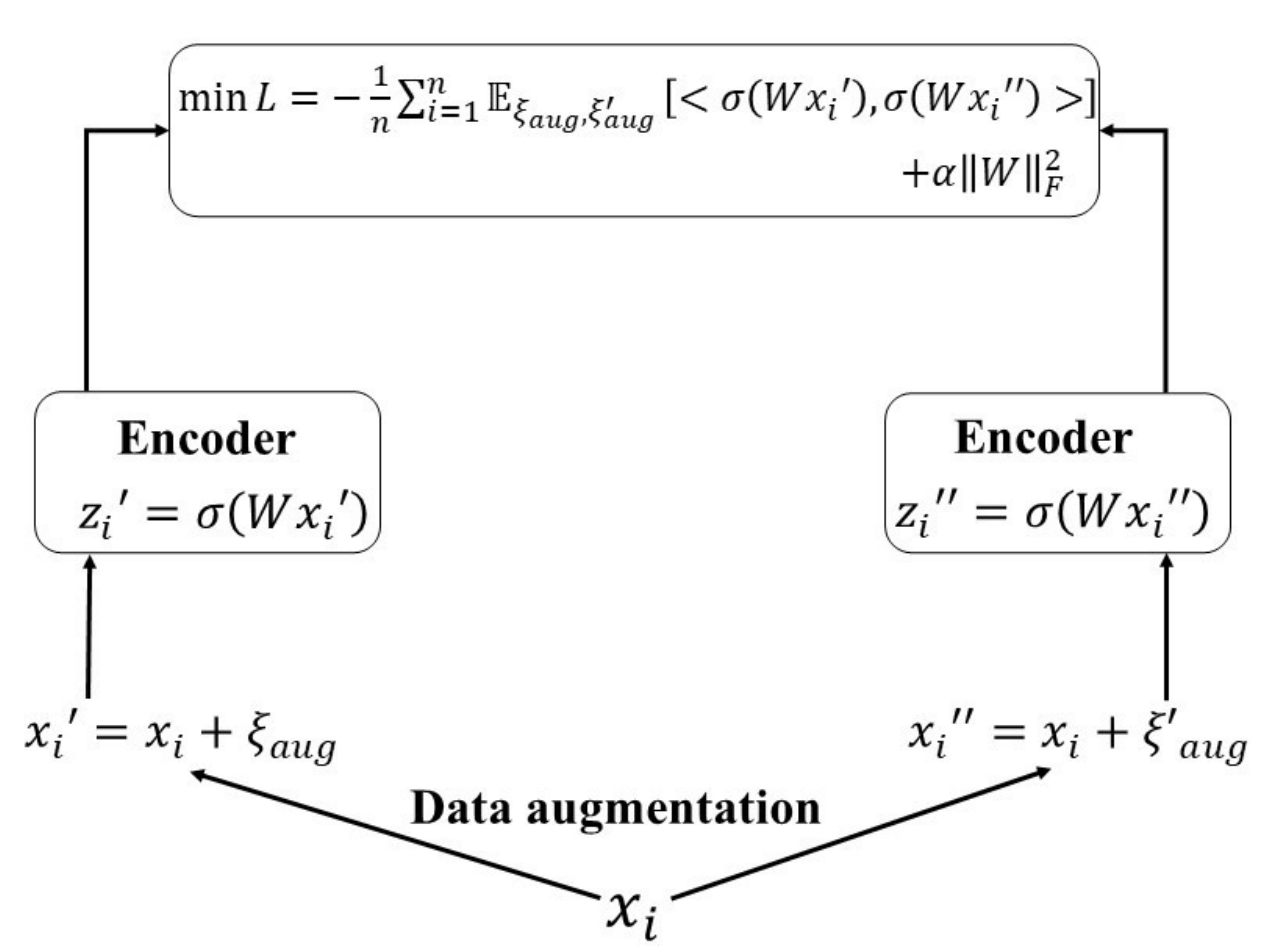}
    \caption{
    The structure of SSL model.
    }
    \label{fig:SSL_model}

\end{figure}

\paragraph{The SL model.}
We consider a simple two-layer nonlinear SL model to deal with the above 2-classification problem. 

Define $f_{F,W^{\text{SL}}}(x) \triangleq F\sigma\left(W^{\text{SL}}x\right)$ with $F\in \mathbb{R}^{1\times2}$ as the projection matrix and $W^{\text{SL}} \triangleq [w_1^{\text{SL}},w_2^{\text{SL}}]^{\top}\in\mathbb{R}^{2\times d}$ as the weight matrix of the feature extractor.
The usual process is to use sigmoid function to transform $f_{F,W^{\text{SL}}}(x_i)$ to $\widehat{y}_i\in(0,1), \forall i\in[n]$. Then, a binary cross-entropy loss function can be constructed with $\widehat{y}_i$ and label information $y_i, \forall i\in[n]$. 

However, in this paper, we focus on the performance of the feature extractor $\sigma\left(W^{\text{SL}}x\right)$. Therefore an objective function that minimizes the norm of feature extractor matrix $W^{\text{SL}}$ with margin constraint is used:
\begin{align}
        &\min_{W^{\text{SL}}} L_{\text{SL}} = \left\|w_1^{\text{SL}}\right\|_2^2+\left\|w_2^{\text{SL}}\right\|_2^2, \notag\\
    & \text{s.t.} \, \sigma\left((w_{y+1}^{\text{SL}})^\top x\right)-\sigma\left((w_{y'+1}^{\text{SL}})^\top x\right)\notag\\&\quad\ge \sigma(2)-\sigma(-2)-5\rho d^{\frac{1}{10}},\forall (x,y) \in \mathcal{D}^{\text{SL}}, y\neq y'\,.\label{objective_SL_ep}
\end{align}

The SL objective function in this paper is similar to the linear SL model in \citet{liu2021self}. We set this SL objective function mainly out of intuition: If a supervised learning model is good enough to complete the classification task, it should satisfy the above margin constraint.

\paragraph{Definitions and notations.}To characterize the objective functions, we give the following definitions and notations.

\begin{definition}[Locally strong convexity and smooth on $B_0$]\label{def:locally_strong_convex_and_L_smooth}
Function $f: \mathbb{R}^d \rightarrow \mathbb{R}$ is locally $\mu$-strongly convex and $L_m$-smooth if 
\begin{align}
    \mu I\preceq \nabla^2f(x)\preceq L_mI,\,\forall x\in B_0,
\end{align}
where $B_0:=\{x:\|x-x^*\|_2\leq\|x^{(0)}-x^*\|_2\}$ and $x^*\in \text{argmin}_{x\in \mathcal{X}}f(x)$.
\end{definition}
\begin{definition}[$L_H$-Lipschitz continuous Hessian]\label{def:Lipschitz_continuous_Hessian}
Function $f: \mathbb{R}^d \rightarrow \mathbb{R}$ is $L_H$-Lipschitz continuous Hessian if 
\begin{align}
    \left\|\nabla^2f(x)-\nabla^2f(y)\right\|_2\leq L_H\|x-y\|_2,\, \forall x,y\in \mathbb{R}^d.
\end{align}
\end{definition}

\paragraph{Notations.}\label{para:notations}
For $x\in \mathbb{R}^d$, we denote by $\|x\|_2$ the vector's Euclidean norm. For $A\in \mathbb{R}^{d\times d}$, we denote by $\|A\|_F$ the standard Frobenius norm and define $\|A\|_2=\sqrt{\lambda}$ where $\lambda$ is the largest eigenvalue of $A^{\top}A$. For $x\in \mathbb{R}^d$ and $\nabla^3 f(x)\in \mathbb{R}^{d\times d\times d}$, we give an upper bound of $\|\nabla^3 f(x)\|_2$ by considering $\nabla^3 f(x)$ as a matrix-vector. Each element of the matrix-vector is $\frac{\partial \nabla^2f(x)}{\partial x_i} \in \mathbb{R}^{d\times d}.$ It is clear that $\left\|\nabla^3 f(x)\right\|_2^2\leq \sum_{i=1}^d \left\|\frac{\partial \nabla^2f(x)}{\partial x_i}\right\|_F^2$. We denote by $O(\cdot)$ standard Big-O notations, only hiding constants. We denote by $z^{(k)}$ the $k$-th element of $z\in \mathbb{R}^d$ and $z(t)$ the $t$-th iteration of the gradient descent algorithm.

\section{SSL is Superior to SL in Learning Representation}\label{sec:proofs}

In this section, we show that the nonlinear SSL model can capture the label-related feature and the hidden feature of data distribution at the same time. In contrast, the nonlinear SL model can only learn the label-related feature.
For simplicity, we assume $e_1 = (1,0,...,0)^\top, e_2 = (0,1,...,0)^\top\in\mathbb{R}^d$.

\subsection{The Learning Abilities of SSL and SL}
For the convenience, we define
$D_1(\tau)=\{\vec{x}\in \mathbb{R}^d|x^{(1)}\in(3.1,3.9), \tau x^{(2)}\in(8.5,9), x^{(k)} \in(-\frac{3}{d^{0.49}},\frac{3}{d^{0.49}}), \forall k\in [3,d]\}$ and $D_2(\tau)=\{\vec{x}\in \mathbb{R}^d|x^{(1)}\in(-3.9,-3.1), \tau x^{(2)}\in(8.5,9), x^{(k)} \in(-\frac{3}{d^{0.49}},\frac{3}{d^{0.49}}), \forall k\in [3,d]\}$ as the initialization region of $w_1$ and $w_2$ in Theorem \ref{thm:Convergence_noiseepexp_pro}. 

\begin{restatable}{theorem}{Convergencenoiseepexppro}\label{thm:Convergence_noiseepexp_pro}
For $\alpha = 1/800 ,\tau = \max \{7, d^\frac{1}{10}\}, \rho=1/d^{1.5}$ and $n=d^2$, with probability $1-O\left(e^{-d^\frac{1}{10}}\right)$, the SSL objective function $L$ exists a local minimum $W^* = (w_1^{*}, w_2^*)^{\top}:$
\begin{align*}
    \|w_1^*-\widetilde{w}_1^{*}\|_2\leq O(d^{-\frac{1}{2}})\,,\\
    \|w_2^*-\widetilde{w}_2^{*}\|_2\leq O(d^{-\frac{1}{2}})\,,
\end{align*}
where $\widetilde{w}_{1}^{*(1)}\in[3.1,3.9]$, $\widetilde{w}_{1}^{*(1)}=-\widetilde{w}_{2}^{*(1)}, \tau \widetilde{w}_{1}^{*(2)} = \tau \widetilde{w}_{2}^{*(2)}\ge 9, \widetilde{w}_{1}^{*(k)}=\widetilde{w}_{2}^{*(k)}=0, \forall k\in[3,d]$.

Furthermore, when $\big(w_1(0),w_2(0)\big)\in D_1(\tau)\times D_2(\tau)$, using the gradient descent algorithm and choosing learning rate $\eta =\frac{2}{4\alpha +\tau^2+1.5}, \kappa = 1+ \frac{\tau^2+1.5+2d^{-0.1}}{2\alpha-d^{-0.1}},$ we have
\begin{align*}
    \|w_1(t)-w_1^{*}\|_2\leq \left(\frac{\kappa-1}{\kappa+1}\right)^t\|w_1(0)-w_1^{*}\|_2\,,\\
    \|w_2(t)-w_2^{*}\|_2\leq \left(\frac{\kappa-1}{\kappa+1}\right)^t\|w_2(0)-w_2^{*}\|_2\,.
\end{align*}

The projection of $e_1$ and $e_2$ on the space spanned by $w_1^{*}$ and $w_2^{*}$ is very close to $1,$ i.e.,
\begin{align*}
    |\Pi e_1|\ge 1-O(\tau^3d^{-\frac{1}{2}})\,,\\
    |\Pi e_2|\ge 1-O(\tau^3d^{-\frac{1}{2}})\,.
\end{align*}
\end{restatable}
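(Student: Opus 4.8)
The plan is to establish the three assertions in turn: (i) existence of a local minimum $W^*$ near an explicitly described matrix $\widetilde{W}^*$, (ii) linear convergence of gradient descent started in $D_1(\tau)\times D_2(\tau)$, and (iii) the near-perfect projection bounds for $e_1,e_2$. The natural engine for (i) is the \emph{exact} Inverse Function Theorem applied to $\nabla L$. First I would compute $\nabla L$ and $\nabla^2 L$ in closed form, exploiting that the expectations over $\xi_{\text{aug}},\xi'_{\text{aug}}\sim\mathcal{N}(0,\rho^2 I)$ of $\sigma(W(x_i+\xi_{\text{aug}}))$ produce a smoothed sigmoid whose derivatives I can control. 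With the chosen scaling $\rho=d^{-1.5}$, $n=d^2$, $\tau=\max\{7,d^{1/10}\}$, the noise terms $\rho\xi_i$ and the empirical deviations of $n_l$ from $n/4$ are all $O(d^{-c})$ with probability $1-O(e^{-d^{1/10}})$ by standard Gaussian/Bernstein concentration; I would fix that high-probability event at the start. On that event, $\nabla L$ at the ``idealized'' point $\widetilde{W}^*$ (the exact stationary point of the population, noiseless objective, which by the symmetry $w_1^{(1)}=-w_2^{(1)}$, $w_1^{(2)}=w_2^{(2)}$, and vanishing tail coordinates reduces to a two-dimensional fixed-point equation in $(w^{(1)},w^{(2)})$) is $O(d^{-1/2})$ in norm, and $\nabla^2 L$ is invertible there with $\|(\nabla^2 L)^{-1}\|_2=O(1)$. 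The exact Inverse Function Theorem then yields a genuine zero $W^*$ of $\nabla L$ with $\|W^*-\widetilde{W}^*\|_2\le \|(\nabla^2L)^{-1}\|\cdot\|\nabla L(\widetilde{W}^*)\| + (\text{Hessian-Lipschitz correction}) = O(d^{-1/2})$, and the same Hessian bound shows it is a local minimum. Establishing the range $\widetilde{w}_1^{*(1)}\in[3.1,3.9]$ and $\tau\widetilde{w}_1^{*(2)}\ge 9$ is a one-dimensional monotonicity/root-location argument on the reduced fixed-point equations, using $\alpha=1/800$ and $\tau\ge 7$.

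For (ii), I would invoke Definition \ref{def:locally_strong_convex_and_L_smooth}: the claim is that on $B_0$ (the ball of radius $\|W(0)-W^*\|$ around $W^*$, which sits inside a neighborhood where the above Hessian estimates hold) $L$ is $\mu$-strongly convex and $L_m$-smooth with $\mu = \alpha - O(d^{-0.1}) \approx 2\alpha$-ish after accounting for the $\tfrac12$ factors in the displayed $\kappa$, and $L_m = 2\alpha + (\tau^2+1.5)/2 + O(d^{-0.1})$. These come directly from the eigenvalue bounds on $\nabla^2 L$: the regularizer contributes $2\alpha I$, and the remaining term's spectral norm is governed by $\|e_1+\tau e_2\|^2 \approx \tau^2+1$ times bounded sigmoid-derivative factors, hence the $\tau^2+1.5$. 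Then the stated rate $\big(\tfrac{\kappa-1}{\kappa+1}\big)^t$ with $\eta = 2/(4\alpha+\tau^2+1.5)$ is exactly the textbook gradient-descent contraction for a $\mu$-strongly-convex $L_m$-smooth function with $\kappa = L_m/\mu$ and optimal-ish step size $\eta = 2/(\mu+L_m)$ — I would just cite/recapitulate that lemma, after checking the iterates stay in $B_0$ (which is automatic from the contraction, a standard induction). The only real content here is verifying the numerical values of $\mu,L_m$ match the Hessian spectrum on $B_0$, which reduces to the eigenvalue computation already done for (i) plus a $\mathrm{diam}(B_0)\cdot L_H = O(d^{-1/2})$-type perturbation, absorbed into the $O(d^{-0.1})$ slack.

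For (iii), I would write $\Pi$ as the orthogonal projection onto $\mathrm{span}\{w_1^*,w_2^*\}$ and compute $|\Pi e_1|^2 = e_1^\top P e_1$ with $P = W^{*\top}(W^*W^{*\top})^{-1}W^*$. Since $W^*$ is within $O(d^{-1/2})$ of $\widetilde{W}^*$, whose rows have first two coordinates $O(1)$ (and $\ge 3.1$, resp. $\ge 9/\tau$ in magnitude) and remaining coordinates $0$, the span of $\widetilde{W}^*$ is essentially the $(e_1,e_2)$-plane; a perturbation bound for projections ($\|P - \widetilde P\|_2 \lesssim \|W^*-\widetilde W^*\|_2 / \sigma_{\min}(\widetilde W^*)$, with $\sigma_{\min}(\widetilde W^*)=\Omega(1/\tau)$ since the $e_2$-component is $\Theta(1/\tau)$) gives $|\Pi e_i|^2 \ge 1 - O(\tau^2)\cdot O(d^{-1/2})\cdot(\text{another }\tau\text{ from the condition number}) = 1 - O(\tau^3 d^{-1/2})$, matching the claimed bound; taking square roots and using $\sqrt{1-x}\ge 1-x$ finishes it.

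The main obstacle is step (i): verifying that $\nabla^2 L$ is genuinely positive-definite with a dimension-independent lower bound at $\widetilde{W}^*$ and throughout $B_0$, and that the idealized equations have their roots in the asserted windows. The Hessian of the inner-product term is \emph{not} sign-definite in general (it is an indefinite quadratic form coming from $-\langle\sigma(Wx'),\sigma(Wx'')\rangle$), so strong convexity relies delicately on the regularizer $2\alpha I$ dominating, which in turn forces the careful choices $\alpha=1/800$, $\tau\ge 7$, $\rho=d^{-1.5}$; pinning down that these constants actually work — i.e.\ that the ``bad'' eigenvalue of the inner-product Hessian, evaluated using the smoothed sigmoid and its derivatives at arguments near $2,-2,9,\ldots$, stays below $2\alpha$ on all of $B_0$ — is where the bulk of the technical calculation lives, and it is what distinguishes this nonlinear analysis from the linear closed-form approach of \citet{liu2021self}.
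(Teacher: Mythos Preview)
Your plan is essentially the paper's own proof: locate the idealized critical point $\widetilde{W}^*$ of the noiseless population objective by a one-dimensional intermediate-value argument, transfer it to a local minimum $W^*$ of $L$ via the exact Inverse Function Theorem (using the Hessian-Lipschitz constant $L_H=\Theta(\sqrt d)$ and $\|(\nabla^2 L)^{-1}\|=O(1)$), invoke local strong convexity/smoothness for the GD contraction rate, and finish with a projection-perturbation bound. Two minor points worth aligning: (a) the paper exploits that $L$ \emph{decouples} as $L(w_1,w_2)=L_1(w_1)+L_2(w_2)$ (since $\langle\sigma(Wx'),\sigma(Wx'')\rangle=\sum_{j=1}^2\sigma(w_j^\top x')\sigma(w_j^\top x'')$), so it runs the whole analysis on $w_1$ alone and gets $w_2$ by symmetry---this simplifies the Hessian to $d\times d$ and makes the $2\alpha$ strong-convexity check a $2\times 2$ determinant computation; (b) in that $2\times 2$ computation the mechanism is not that the inner-product Hessian's bad eigenvalue is merely ``below $2\alpha$'' but that $h'(x)+h'(-x)<0$ for $x\in(2.3,4.7)$ and $h'(x)<0$ for $x\ge 3.8$ make the non-regularizer block \emph{itself} PSD on the region, so the regularizer contributes its full $2\alpha$---this is why the paper's $\mu=2\alpha-O(d^{-0.1})$ and $L_m=2\alpha+\tau^2+1.5+O(d^{-0.1})$ (no factor of $\tfrac12$) match the displayed $\eta,\kappa$ exactly.
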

Theorem \ref{thm:Convergence_noiseepexp_pro} shows that using GD to train the nonlinear SSL model under a certain initialization region $D_1(\tau)\times D_2(\tau)$, the model can converge to a local minimum $(w_1^*, w_2^*)$. Further, the projection of $e_1,e_2$ on the space spanned by $w_1^{*}, w_2^{*}$ is almost $1$. In other words, the nonlinear SSL model has simultaneously learned $e_1$ and $e_2$, which are the label-related and hidden features.

\begin{restatable}{theorem}{SLtheoremep}\label{thm:SLtheoremep}
Let $w_1^{\text{SL},*}$ and $w_2^{\text{SL},*}$ be the optimal solution of $L_{\text{SL}}$. Then with probability $1-O(e^{-d^\frac{1}{10}}),$
\begin{align}
    \left(w_{1}^{\text{SL},*(2)}\right)^2+\left(w_{2}^{\text{SL},*(2)}\right)^2\leq O\left(\rho d^{\frac{1}{10}}\right)\,.\notag
\end{align}
When $\rho = 1/d^{1.5}, \left(w_{1}^{\text{SL},*(2)}\right)^2+\left(w_{2}^{\text{SL},*(2)}\right)^2\leq O\left(1/d^{1.4}\right)$.
\end{restatable}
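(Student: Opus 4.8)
The plan is to exploit the structure of the constrained optimization in \eqref{objective_SL_ep}: the objective minimizes $\|w_1^{\text{SL}}\|_2^2+\|w_2^{\text{SL}}\|_2^2$, so any feasible point with small norm gives an upper bound on the optimum. First I would exhibit an explicit feasible candidate $\bar W^{\text{SL}}$ that uses only the $e_1$-direction — i.e.\ $\bar w_1^{\text{SL}} = c\,e_1$, $\bar w_2^{\text{SL}} = -c\,e_1$ for a suitable constant $c$ of order $1$ (the geometry $x^{(1)}\approx\pm 1$ together with the target margin $\sigma(2)-\sigma(-2)$ suggests $c\approx 2$). I would check that this candidate satisfies the margin constraint for every $(x,y)\in\mathcal{D}^{\text{SL}}$: for $x\in\mathcal D_1\cup\mathcal D_2$ (label $y=0$) we need $\sigma((\bar w_1^{\text{SL}})^\top x) - \sigma((\bar w_2^{\text{SL}})^\top x) = \sigma(c(1+\rho\xi^{(1)})) - \sigma(-c(1+\rho\xi^{(1)}))$, and for $x\in\mathcal D_3\cup\mathcal D_4$ the sign of $x^{(1)}$ flips and so does the roles of the two neurons, giving the same expression; since $\rho = 1/d^{1.5}$ is tiny and $\|\xi\|_\infty = O(\sqrt{\log n}) = O(\sqrt{\log d})$ with probability $1-O(e^{-d^{1/10}})$, a first-order expansion of $\sigma$ shows the margin is met with slack of order $\rho\sqrt{\log d}\ll \rho d^{1/10}$. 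This yields $\|w_1^{\text{SL},*}\|_2^2 + \|w_2^{\text{SL},*}\|_2^2 \le 2c^2 = O(1)$, but that is not yet enough — I need the \emph{$e_2$-components} to be small, not just the total norm.

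The key step is therefore a lower-bound argument showing the constraint \emph{forces} the $e_2$-component down. I would argue by contradiction / perturbation: suppose $(w_1^{\text{SL},*(2)})^2 + (w_2^{\text{SL},*(2)})^2 = \delta$ is not $O(\rho d^{1/10})$. The margin constraint only ever compares points that differ in their $e_1$-label, and crucially $e_2$ appears symmetrically — every datapoint in $\mathcal D_2$ and $\mathcal D_4$ has the \emph{same} $\tau e_2$ term regardless of label, while $\mathcal D_1,\mathcal D_3$ have no $e_2$ term. So the $e_2$-coordinate of $w_j^{\text{SL}}$ contributes a term $\approx \sigma'(\cdot)\,\tau\, w_j^{\text{SL},(2)}$ to \emph{both} $\sigma((w_{y+1})^\top x)$ and $\sigma((w_{y'+1})^\top x)$ for a datapoint in $\mathcal D_2$ or $\mathcal D_4$; since the constraint is a \emph{difference} of two such terms evaluated at datapoints with matched $e_2$-structure, the $e_2$-contribution nearly cancels to first order but not exactly, because the argument of $\sigma'$ differs between the two neurons (one sees $+c$ on $e_1$, the other $-c$). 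I would make this precise: project $w_j^{\text{SL}}$ onto $\mathrm{span}\{e_1,e_2\}$ and the orthogonal complement, note that any component outside $\mathrm{span}\{e_1,e_2\}$ only hurts (it adds to the objective while contributing only $O(\rho)$ noise to the constraints, by the same $\|\xi\|_\infty$ bound), and then analyze the reduced $2$-dimensional problem. In the reduced problem, a Taylor expansion of the margin constraint around the $e_2$-free configuration shows that keeping a nonzero $e_2$-component of size $\sqrt\delta$ either (i) violates the constraint unless $\sqrt\delta = O(\rho d^{1/10})$, or (ii) if it happens to help the margin, can be replaced by simply increasing $c$ on the $e_1$-direction at strictly lower cost — because $\tau = \max\{7,d^{1/10}\}$ is large, so a unit of $e_2$-weight costs $\tau^2$-times more "margin per squared-norm" only on $\mathcal D_2,\mathcal D_4$ and nothing on $\mathcal D_1,\mathcal D_3$, making it inefficient compared to $e_1$-weight which helps uniformly.

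I expect the main obstacle to be step two, the lower bound: one must rule out the possibility that the optimizer "cheats" by using a large $e_2$-component whose effect on the margin constraints happens to cancel favorably across the four datapoint types. The clean way around this is to use the high-probability event that all noise terms satisfy $\|\xi_i\|_\infty = O(\sqrt{\log d})$ (Gaussian tail bound plus union over $n=d^2$ points and $d$ coordinates, giving failure probability $O(d^3 e^{-\Omega(\log d)})$, which one must verify is indeed $O(e^{-d^{1/10}})$ — this is where the exponent $d^{1/10}$ and the slack $5\rho d^{1/10}$ in the constraint are calibrated), and then observe that on this event the margin constraint, restricted to the two "hard" pairs (one from $\mathcal D_1$ vs $\mathcal D_3$, one from $\mathcal D_2$ vs $\mathcal D_4$), becomes an essentially deterministic system of inequalities in the four numbers $w_1^{\text{SL},(1)}, w_2^{\text{SL},(1)}, w_1^{\text{SL},(2)}, w_2^{\text{SL},(2)}$ (up to additive errors $O(\rho\sqrt{\log d})$). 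Solving the KKT conditions of this finite-dimensional surrogate — or just bounding feasible $e_2$-components directly from the two inequalities — gives $(w_1^{\text{SL},*(2)})^2+(w_2^{\text{SL},*(2)})^2 = O(\rho d^{1/10})$, and substituting $\rho = 1/d^{1.5}$ gives the stated $O(1/d^{1.4})$. The contrast with the SSL result then falls out by comparing with the $\Pi e_2 \ge 1 - O(\tau^3 d^{-1/2})$ conclusion of Theorem \ref{thm:Convergence_noiseepexp_pro}.
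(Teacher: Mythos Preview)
Your top-level architecture matches the paper: exhibit the feasible reference $\bar w_1^{\text{SL}} = 2e_1$, $\bar w_2^{\text{SL}} = -2e_1$ to obtain the norm budget $\|w_1^{\text{SL},*}\|_2^2 + \|w_2^{\text{SL},*}\|_2^2 \leq 8$, and then argue that the constraints force the $e_1$-components to exhaust almost all of it. But your mechanism for the second step is considerably more elaborate than what the paper actually does. You propose to track how $w_j^{\text{SL},(2)}$ enters through the $\tau e_2$ term in $\mathcal D_2,\mathcal D_4$, analyze partial cancellation in the margin difference, compare ``margin per squared norm'' efficiency of the two directions, and appeal to KKT conditions of a reduced problem. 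The paper bypasses all of this by applying the margin constraint to a \emph{single} datapoint $x = e_1 + \rho\xi \in \mathcal D_1$, which contains no $e_2$ at all. After absorbing the noise via the mean value theorem, that constraint reads
\[
\sigma\big(w_1^{\text{SL},*(1)}\big) + \sigma\big(-w_2^{\text{SL},*(1)}\big) \;\geq\; 2\sigma(2) - \Theta(\rho d^{1/10}),
\]
an inequality in the two scalars $w_1^{\text{SL},*(1)}, w_2^{\text{SL},*(1)}$ only. Combined with $(w_1^{\text{SL},*(1)})^2 + (w_2^{\text{SL},*(1)})^2 \leq 8$ and the fact that $\sigma'$ is decreasing on $[0,\infty)$, this forces $|w_1^{\text{SL},*(1)}|,\,|w_2^{\text{SL},*(1)}| \geq 2 - \Theta(\rho d^{1/10})$, so their squares already account for $8 - O(\rho d^{1/10})$ of the budget and every other coordinate---in particular the second---is $O(\rho d^{1/10})$. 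No analysis of $\mathcal D_2,\mathcal D_4$ or of the $\tau e_2$ interaction is needed at all; your efficiency/KKT route could presumably be pushed through, but it is the harder road.

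One place where your sketch would actually fail as written: the threshold $\|\xi_i\|_\infty = O(\sqrt{\log d})$ is too tight to give failure probability $O(e^{-d^{1/10}})$; a union bound over $nd$ Gaussian coordinates at that level decays only polynomially in $d$. The paper uses the looser event $|\xi_i^{(k)}| \leq d^{1/10}$, for which the Gaussian tail is $e^{-d^{1/5}/2}$ and the union bound survives---this is precisely why the slack in \eqref{objective_SL_ep} is calibrated to $5\rho d^{1/10}$ rather than to $\rho\sqrt{\log d}$.
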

From Theorem \ref{thm:SLtheoremep}, we show that $ (w_{1}^{\text{SL},*(2)})^2+(w_{2}^{\text{SL},*(2)})^2$ is very small, which means SL model can only learn label-related feature and some noise terms. 

Note that the previous works \cite{tian2017analytical,zhang2019learning,li2018learning} used the gradient-based algorithm to analyze the SL model with one hidden layer and obtained asymptotic convergence guarantees. They did not analyze the specific form of the learning results. Hence, Theorem \ref{thm:SLtheoremep} is different compared with the previous results. We obtain the bounds of each dimension of the learning results by constructing margin constraints. These bounds accurately describe the features learned by the SL model and help to characterize the representation learnability of the SL model. 

Finally, Theorem \ref{thm:Convergence_noiseepexp_pro} and Theorem \ref{thm:SLtheoremep} show that the nonlinear SSL model is superior to the nonlinear SL model in capturing important data features, which means SSL can obtain a more competitive data representation than SL.

\subsection{Discussion}\label{subsec:Hardness}
\paragraph{The extension to more general data distributions.} As described in Sec 3.1, we treat label-related features as a group, represented by $e_1$ ($e_2$ represents hidden features), and obtain Theorem 1. In this part, we show that Theorem 1 can be extended to data distributions with many label-related and hidden features. Suppose there are $P$ label-related features $E^L=\{e_1, \ldots,e_P\}$ and $Q+1$ hidden features $E^H=\{e_{P+1},\ldots, e_{P+Q},\vec{0}\}$, where $E=\{E^L,E^H\}$ is column-orthogonal matrix. Each datapoint consists of a label-related feature and a hidden feature, $x_i =z_i e_i^L+\tau e_i^H$, where $P(z_i=1)=P(z_i=-1)=1/2$. $e_i^L$ and $e_i^H$ are features in $E^L$ and $E^H$. This general distribution only considers the relationship between label-related features and hidden features. Hence, the gradient can be decoupled, and the method of this paper can be 
applied. Finally, we can know that if $W$ contains $P+Q$ neurons, the learning results of $W$ will span the space spanned by $\{e_1,\ldots,e_{P+Q}\}$. This conclusion can be regarded as the general version of Theorem 1.
\paragraph{The challenges for nonlinear models.}
Since the objective function is non-convex and nonlinear, it is difficult to get a closed-form solution with a similar process of \citet{liu2021self}. We need to use an optimization algorithm such as GD to converge to a local minimum $(w_1^*, w_2^*)$ and determine features captured by $(w_1^*, w_2^*).$

For nonlinear SSL models, previous work \cite{wen2021toward} used SGD to update the model step-by-step and observed the learning result during the iteration. However, the step-by-step process is complex, and it is easy to ignore the change process. Therefore, this procedure makes it difficult to analyze the learning results of local minima accurately.

Different from the complex iterative analysis of the previous work, we propose a new analysis process. We first obtain the approximate region and properties of the local minimum from the simplified objective function $\widetilde{L}$ and then extend it to the original complex objective function $L$. For the transformation from $\widetilde{L}$ to $L$, we use the \textbf{exact version of Inverse Function Theorem} as a bridge, avoiding the direct analysis of the local minimum of $L$. 
In the remainder of this part, we demonstrate the intuitions and techniques for each part.

\textbf{Non-convex and nonlinear objective function.} At this step, we consider the structure of the objective function, ignore noise terms, and take expectation over data distribution:
\begin{align}
\min \widetilde{L}=-\mathbb{E}_{\widetilde{x}}[\langle \sigma(W\widetilde{x}),\sigma(W\widetilde{x})\rangle]+\alpha\|W\|_F^2\,, \notag
\end{align}
where $\widetilde{x}_i$ is the datapoint without noise term $\rho \xi_i$. We use the intermediate value principle, locally strong convexity of $\widetilde{L}$, and the properties of activation function carefully to prove the existence of the local minimum $(\widetilde{w}_1^*, \widetilde{w}_2^*)$ of $\widetilde{L}$.

\textbf{The exact version of Inverse Function Theorem.} There are many noise terms in $L$, such as $\rho\xi_i, \forall i\in[n]$ (datapoint noise), $\xi_{\text{aug}}$ (data augmentation noise), and the error terms due to the expectation operation over the data distribution. After obtaining the upper bound of these noise terms (Sec. \ref{subsec:proof_sketch}), we need a bridge to deal with the transformation from $\widetilde{L}$ to $L$. Since $(\widetilde{w}_1^*, \widetilde{w}_2^*)$ is local minimum of $\widetilde{L}$ and noise terms are bounded, $L$ should be $\mu$-strongly convex and $L_m$-smooth in the neighborhood of $(\widetilde{w}_1^*, \widetilde{w}_2^*)$. With these properties, it is clear that $\frac{\partial L}{\partial w_1}$ is one-to-one in a small neighborhood of $\widetilde{w}_1^*$ using the origin Inverse Function Theorem \cite{rudin1976principles}. However, we need a exact neighborhood to guarantee that the solution $w_1^*$ of $\frac{\partial L}{\partial w_1}=0$ is in the one-to-one region. Hence we introduce Lipschitz continuous Hessian constant $L_H$ to build an open ball centered at $w_1^*$ with radius $r=\frac{1}{2\mu L_H}$ as the exact neighborhood and modify the Inverse Function Theorem to complete our proof.
\begin{figure}[t]
\setlength{\abovecaptionskip}{10pt} 
    \centering
    \includegraphics[width=7cm,height=4cm]{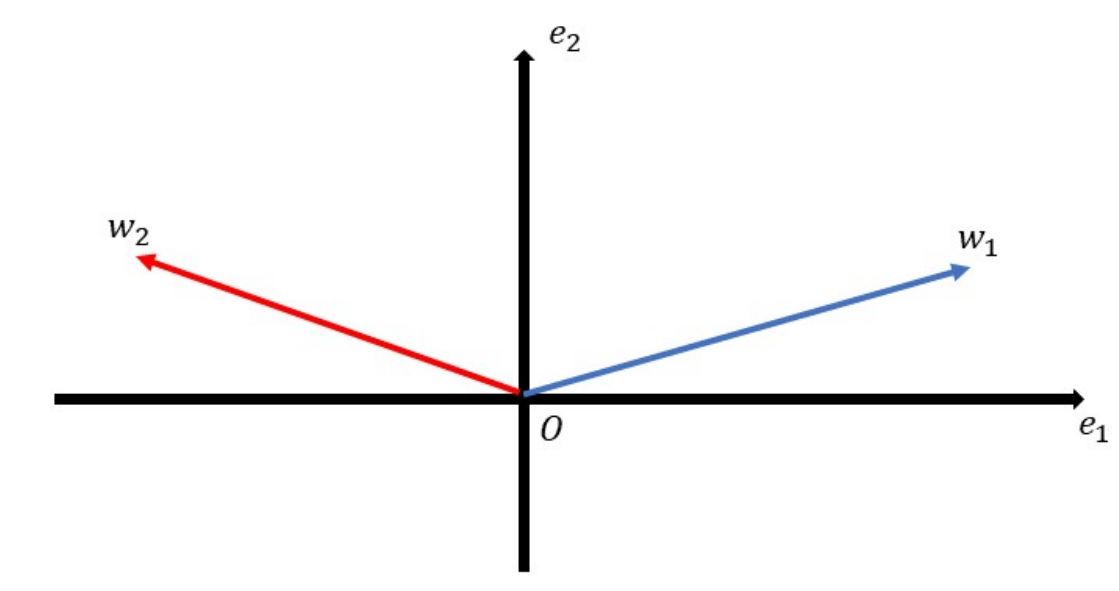}
    \caption{
    Theoretical Results of Theorem \ref{thm:Convergence_noiseepexp_pro}
    }
    \label{fig:Theoretical_Results}
\end{figure}
\begin{figure*}[t]
\setlength{\abovecaptionskip}{10pt} 
    \begin{minipage}[t]{0.32\linewidth}
        \centering
        \includegraphics[width=0.98\textwidth]{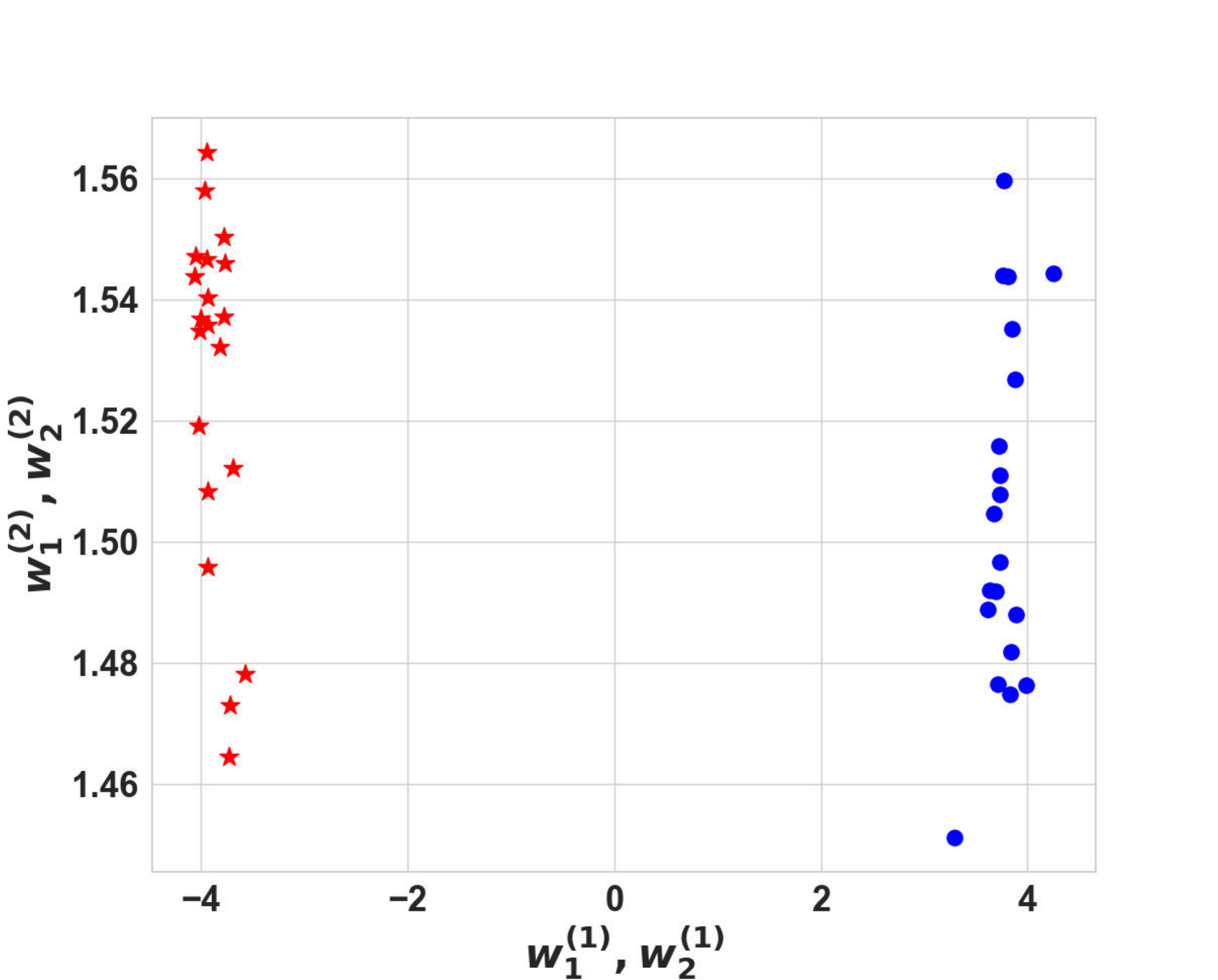}
        \subcaption{Final weight matrix $W$}
        \label{fig:d10tau7final_matrix}
    \end{minipage}
    \begin{minipage}[t]{0.32\linewidth}
        \centering
        \includegraphics[width=0.98\textwidth]{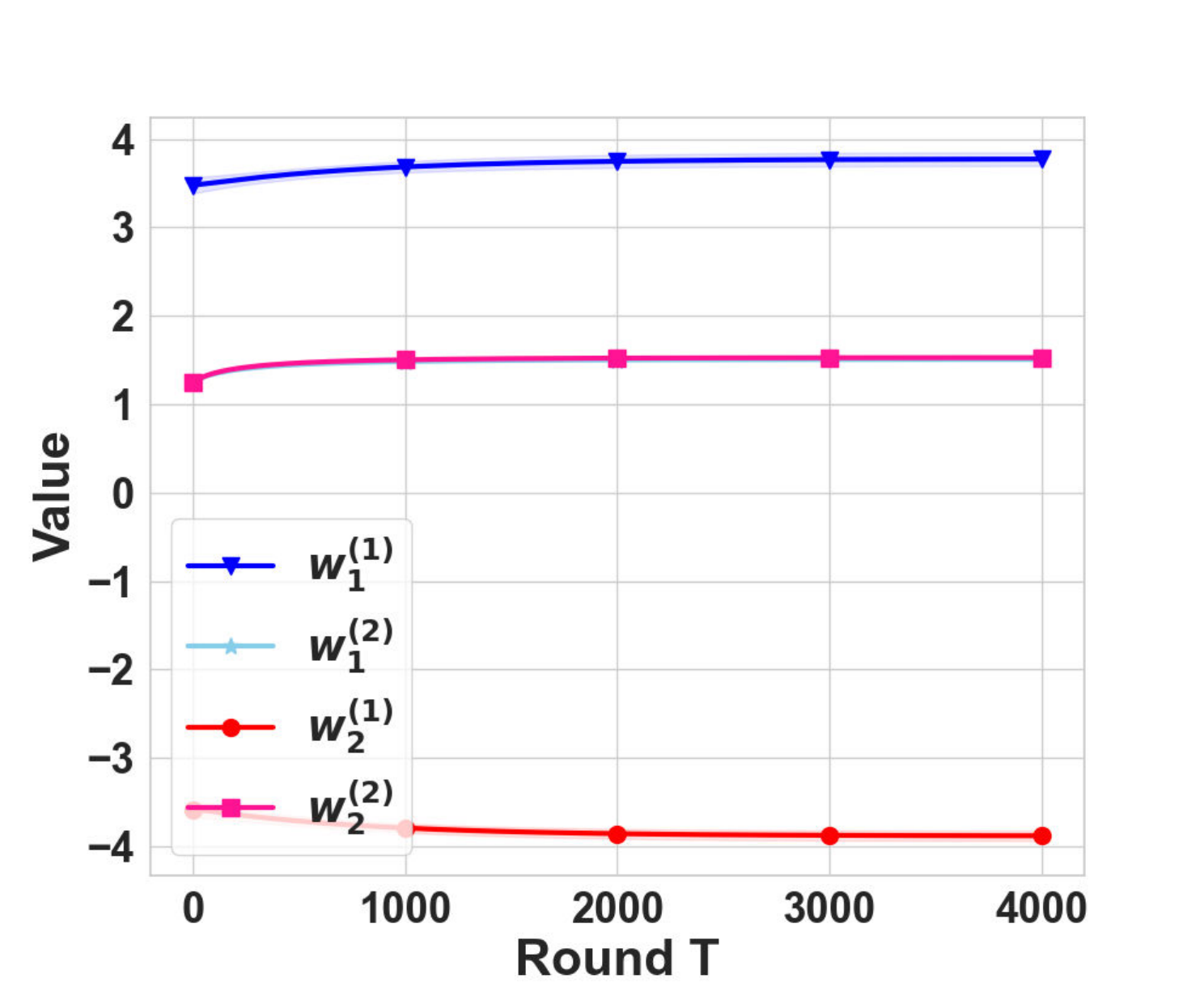}
        \subcaption{Learning curve}
        \label{fig:d10tau7learning_curve}
    \end{minipage}
    \begin{minipage}[t]{0.32\linewidth}
        \centering
        \includegraphics[width=0.98\textwidth]{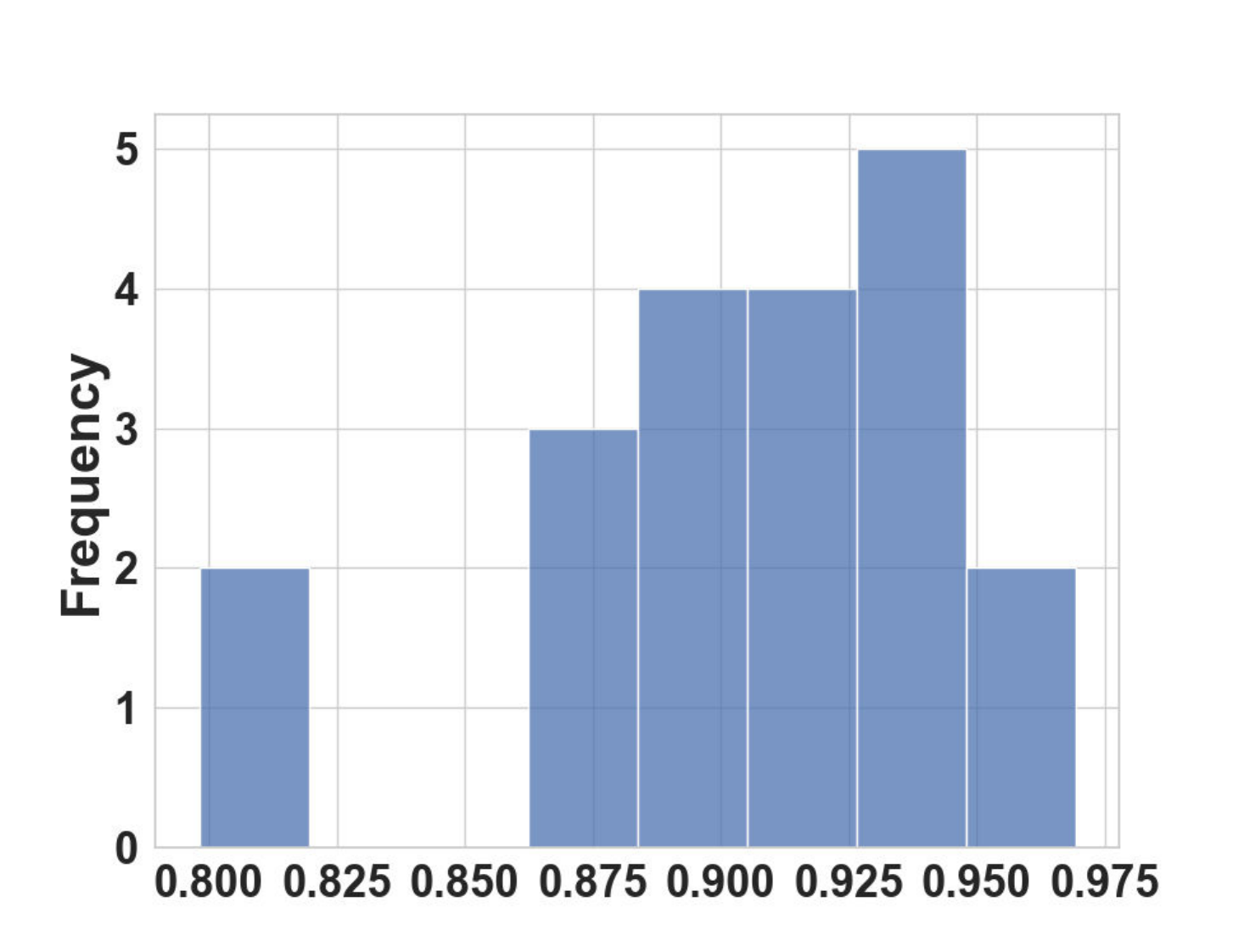}
        \subcaption{The projection of $e_2$}
        \label{fig:d10tau7projection_e2}
    \end{minipage}
    \caption{Experiment results of SSL model with $d=10, \tau=7$}
    \label{fig:d10tau7p2}
\end{figure*}
\subsection{Proof Sketch of Main Theorem}\label{subsec:proof_sketch}

{\textbf{Proof sketch of SSL}.} For the sake of discussion, we respectively define $\widetilde{D}_1(\tau) = \{\vec{x}\in \mathbb{R}^d|x^{(1)}\in[3.1,3.9], \tau x^{(2)}\in[9,+\infty), x^{(k)}=0, \forall k\in[3,d]\}$ and $\widetilde{D}_2(\tau) = \{\vec{x}\in \mathbb{R}^d|x^{(1)}\in[-3.9,-3.1], \tau x^{(2)}\in[9,+\infty), x^{(k)}=0, \forall k\in[3,d]\}$ as the region of $\widetilde{w}_1^{*}$ and $\widetilde{w}_2^{*}$. 

As a beginning, we focus on $\widetilde{L}$. To obtain the solution of $\frac{\partial \widetilde{L}}{\partial w_1}=0$, we first solve $\frac{\partial \widetilde{L}}{\partial w_{1}^{(k)}}=0, \forall k\in [2]$ separately in $\widetilde{D}_1(\tau)$. Subsequently, we use the intermediate value principle twice to prove the existence of $\widetilde{w}_1^{*}$. 
Finally, we use the Hessian matrix to prove that $\widetilde{w}_1^{*}$ is a local minimum. We demonstrate that  $\widetilde{L}$ is $\widetilde{\mu}$-strongly convexity and $\widetilde{L}_{m}$-smooth in the region around $\widetilde{w}_1^{*}$.

To prove Theorem \ref{thm:Convergence_noiseepexp_pro}, we need to deal with the noise terms in $L$.  Due to the activation function, we cannot use the noise matrix to treat the noise terms as in \citet{liu2021self}. Hence, we use the Lagrange’s Mean Value Theorem to separate $\xi_i, \xi_{\text{aug}}, \xi^{\prime}_{\text{aug}}$ from the activation function and bound these noise terms using the tail bound of Gaussian variable. There are also some error terms due to the expectation operation over data distribution. With the intuition that $n_l$, $\forall l\in[4]$ can not be far away from $n/4$, we bound these error terms.

After obtaining the upper bound of the above noise terms, we characterize the landscape of $L$ by using the Matrix Eigenvalue
Perturbation Theory~\cite{kahan1975spectra}. We sum up the properties of $L$ when $w_1$ around $\widetilde{w}_1^*$ as follows. 
\begin{enumerate}
    \item $\frac{\partial L}{\partial w_1}|_{w_1=\widetilde{w}_1^{*}}$ is very close to $0$.
    \item $L$ is $\mu$-strongly convex and $L_m$-smooth. Specifically, we show that $\widetilde{\mu} -\epsilon_1\leq\mu\leq \widetilde{\mu}$ and $\widetilde{L}_m \leq L_m\leq  \widetilde{L}_m+\epsilon_1$ where $\epsilon_1$ is a small term related to $\rho$ and $d$.
    \item $L$ is $L_H$-Lipschitz continuous Hessian.
\end{enumerate} 

Combined with these properties, we use the exact version of Inverse Function Theorem to prove the existence of 
the local minimum $(w_1^*, w_2^*)$ of $L$. Finally, we show that with good initialization, specifically initialization around the local minimum, $w_1(0)$ converges to $w_1^*$ using the gradient descent algorithm \cite{bubeck2015convex}.
We remark that the above process only analyzes $w_1$, we can get $w_2$ through symmetry.
\begin{figure}[t] 
\setlength{\abovecaptionskip}{10pt} 
\centering
\includegraphics[width=0.7\linewidth]{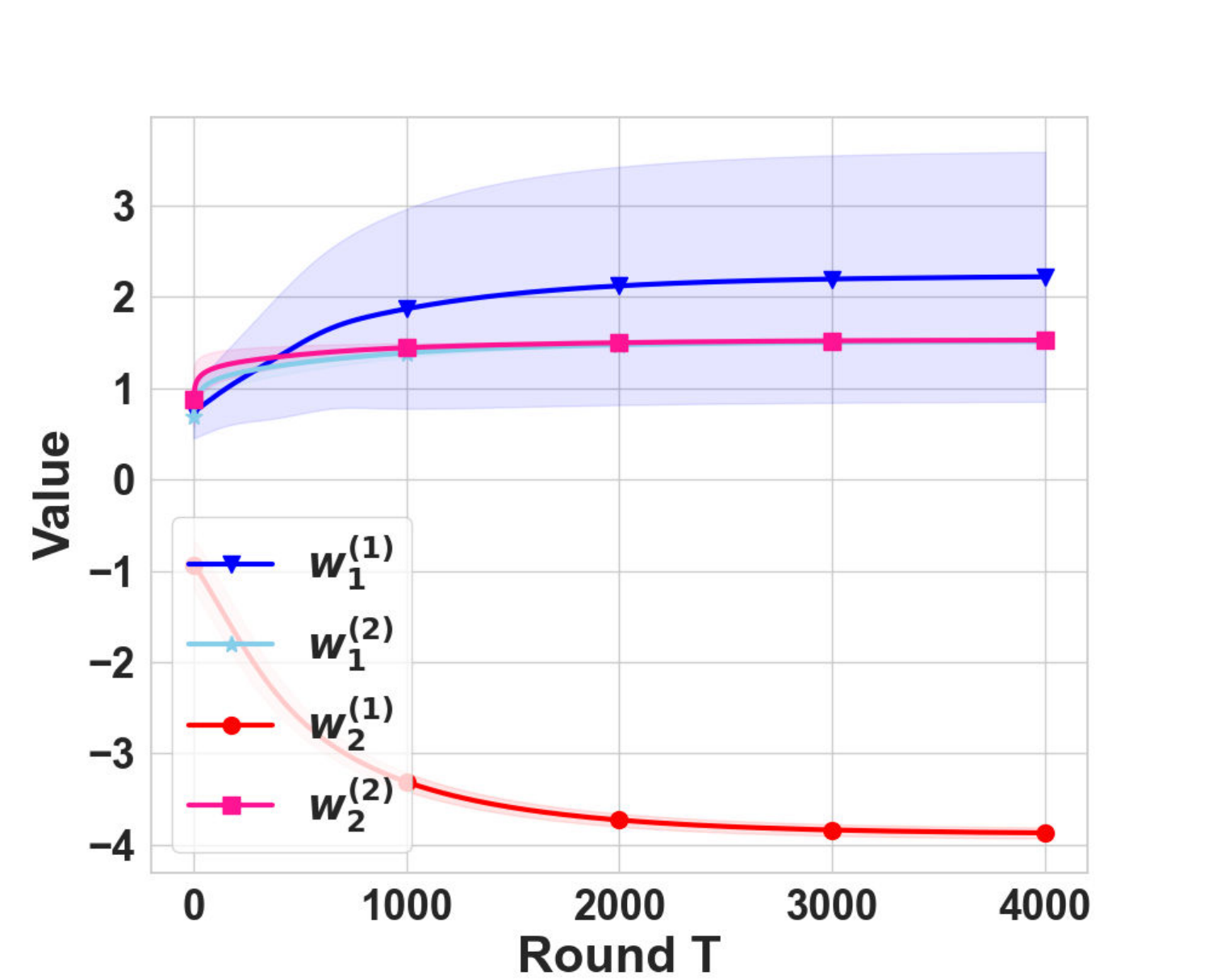}\\
\includegraphics[width=0.7\linewidth]{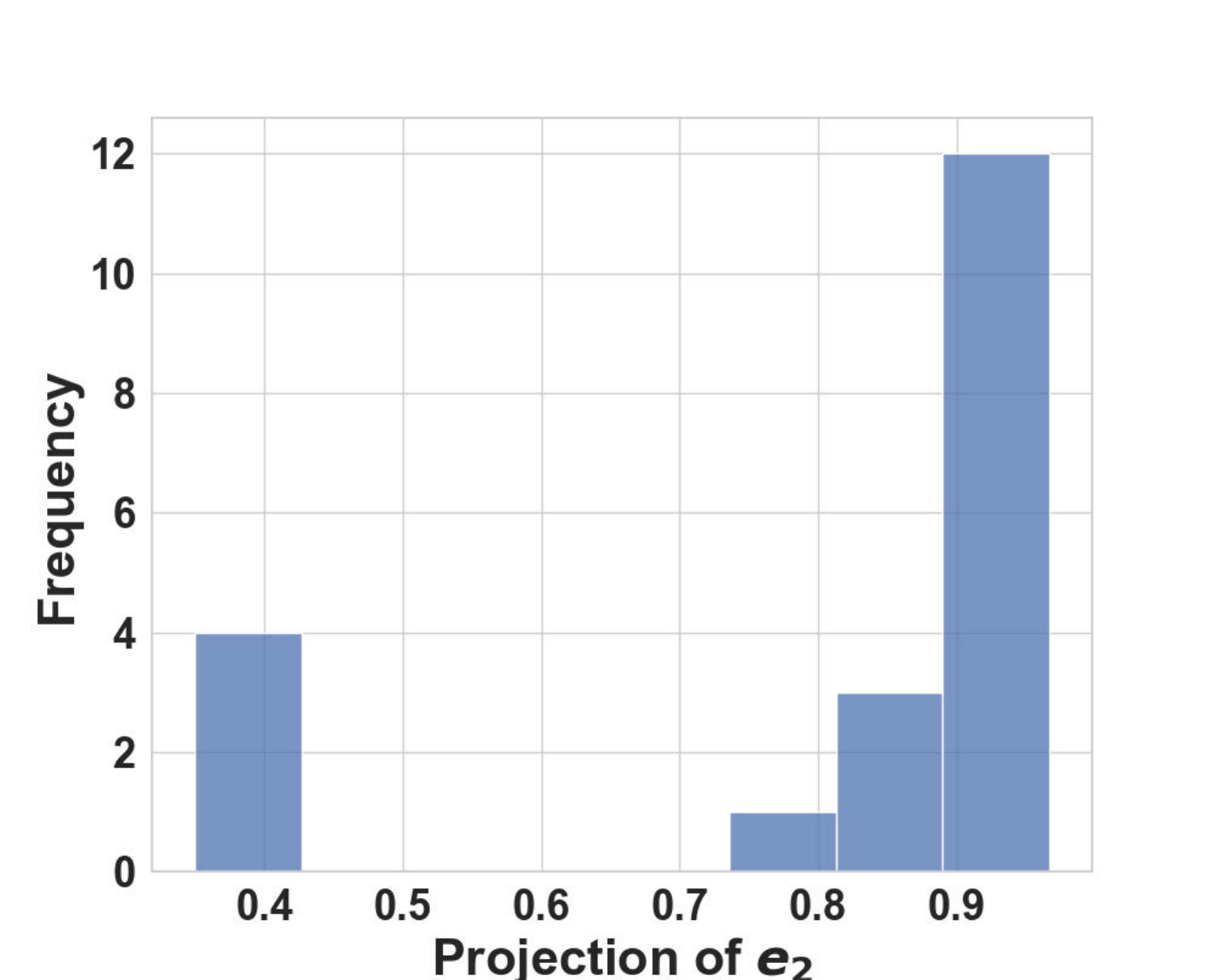}
  \caption{The experiments results with the correct sign}
  \label{fig:correct_sign}
\end{figure}
\paragraph{Proof sketch of SL.} The proof sketch of SL is similar to the proof of the linear SL model in \citet{liu2021self}. However, because of the nonlinear SL model in this paper, we need to perform finer scaling to get a high probability guarantee.

\paragraph{Different activation function.} We can easily extend the results to the case where the activation function is tanh because sigmoid can be viewed as a compressed version of tanh.

To get similar results with Theorem \ref{thm:Convergence_noiseepexp_pro}, we just need to modify the region of the local minimum and the initialization region. For $\widetilde{D}_1(\tau)$, we change the range of $x^{(1)}$ from $[3.1,3.9]$ to $[2.7,3.1]$ and the range of $x^{(2)}$ from $[9,+\infty)$ to $[6.1,+\infty)$ to obtain $\widetilde{D}_1^{\sigma_2}(\tau)$. For $D_1(\tau)$, we change the range of $x^{(1)}$ from $(3.1,3.9)$ to $(2.7,3.1)$ and the range of $x^{(2)}$ from $(8.5,9)$ to $(5.75,6.1)$ to obtain $D_1^{\sigma_2}(\tau)$. With similar process, we can get $\widetilde{D}_2^{\sigma_2}(\tau)$ and $D_2^{\sigma_2}(\tau)$. 

\begin{figure}[t]
\setlength{\abovecaptionskip}{10pt} 
    \centering
    \includegraphics[width=6cm,height=4.6cm]{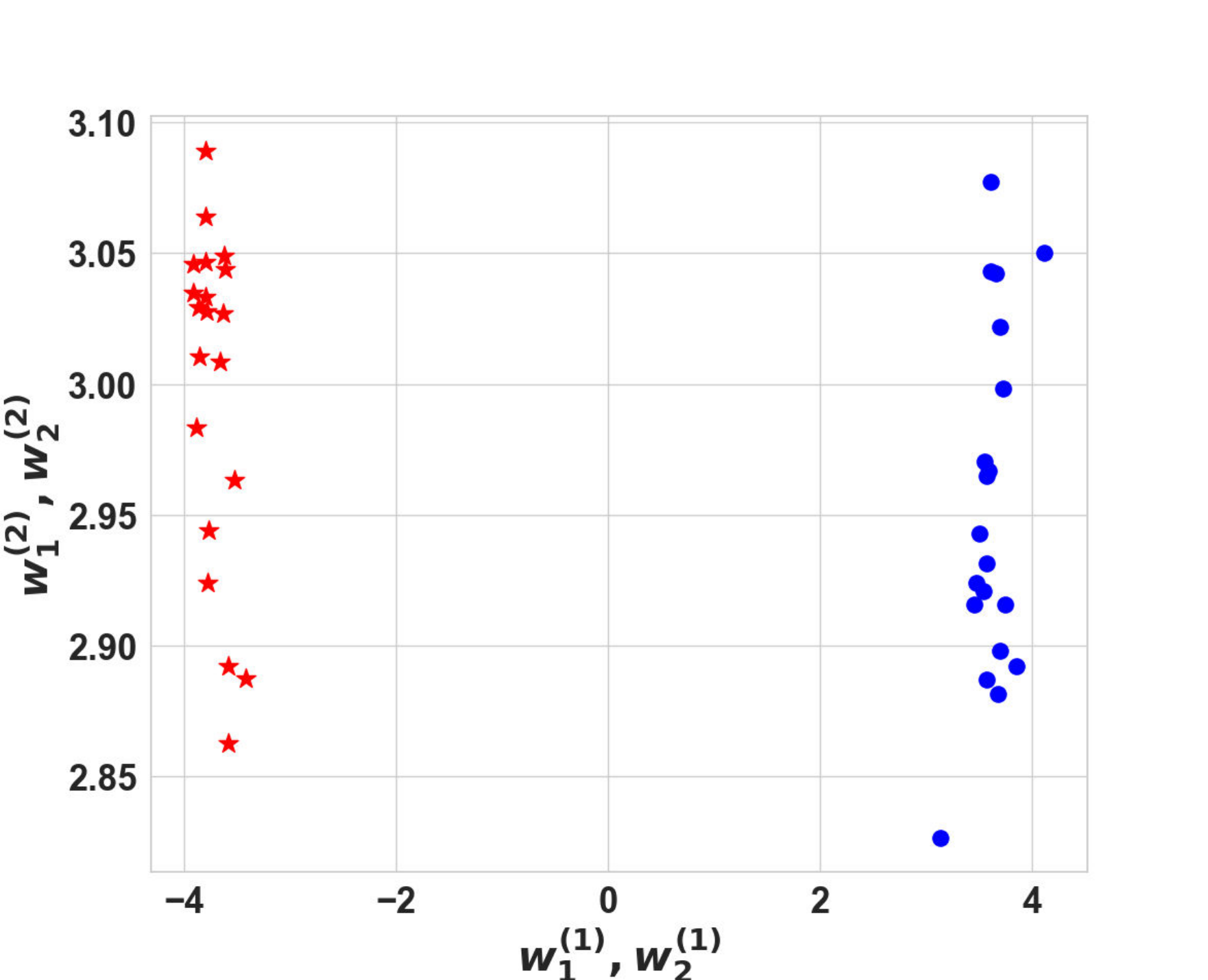}
    \caption{
SSL final weight matrix $W$ with $d=10, \tau=3$
    }
    \label{fig:d10tau3}
\end{figure}
\begin{figure*}[t]
\setlength{\abovecaptionskip}{10pt} 
    \begin{minipage}[t]{0.32\linewidth}
        \centering
        \includegraphics[width=0.98\textwidth]{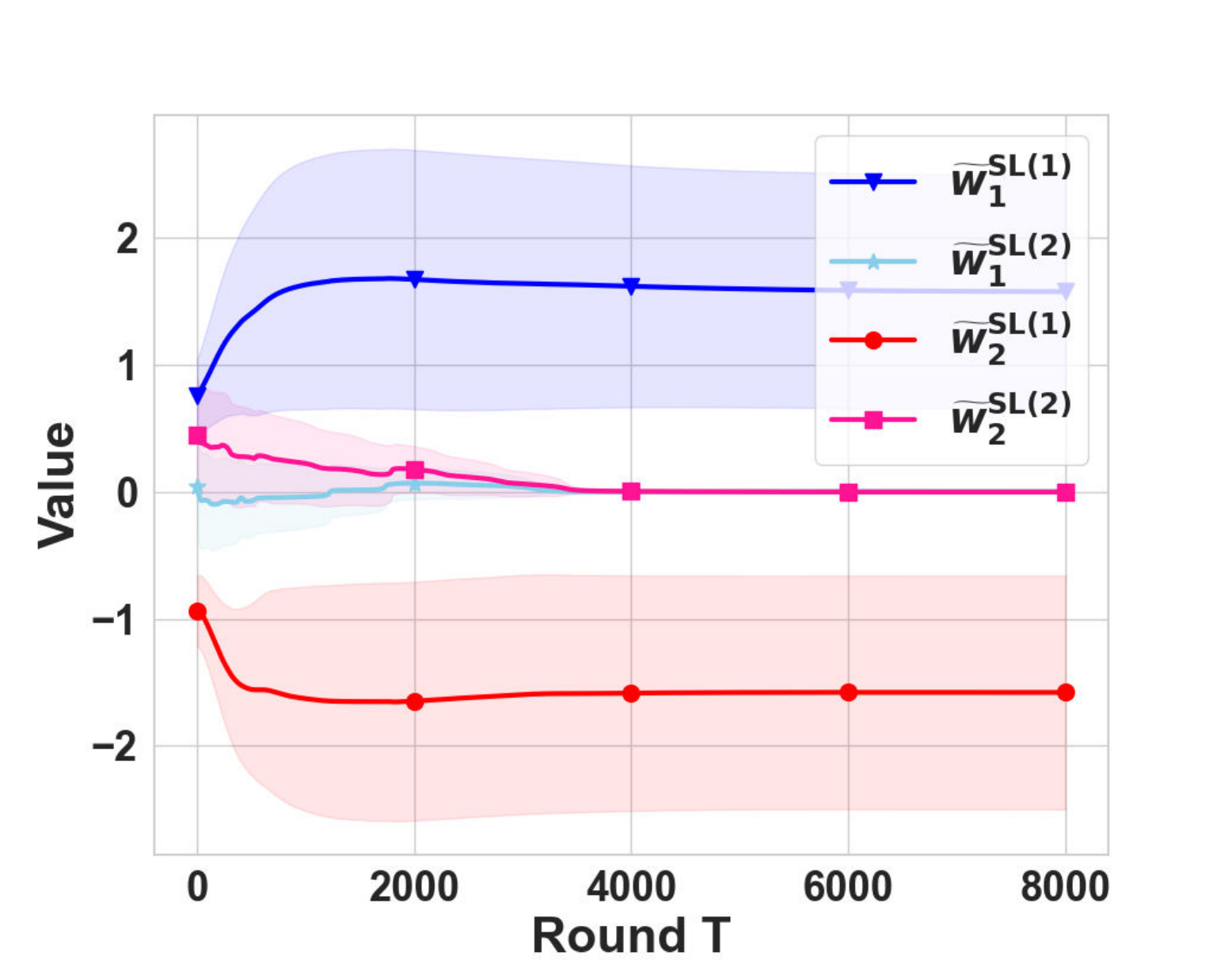}
        \subcaption{Learning curve}
        \label{fig:SLlearning_curve}
    \end{minipage}
    \begin{minipage}[t]{0.32\linewidth}
        \centering
        \includegraphics[width=0.98\textwidth]{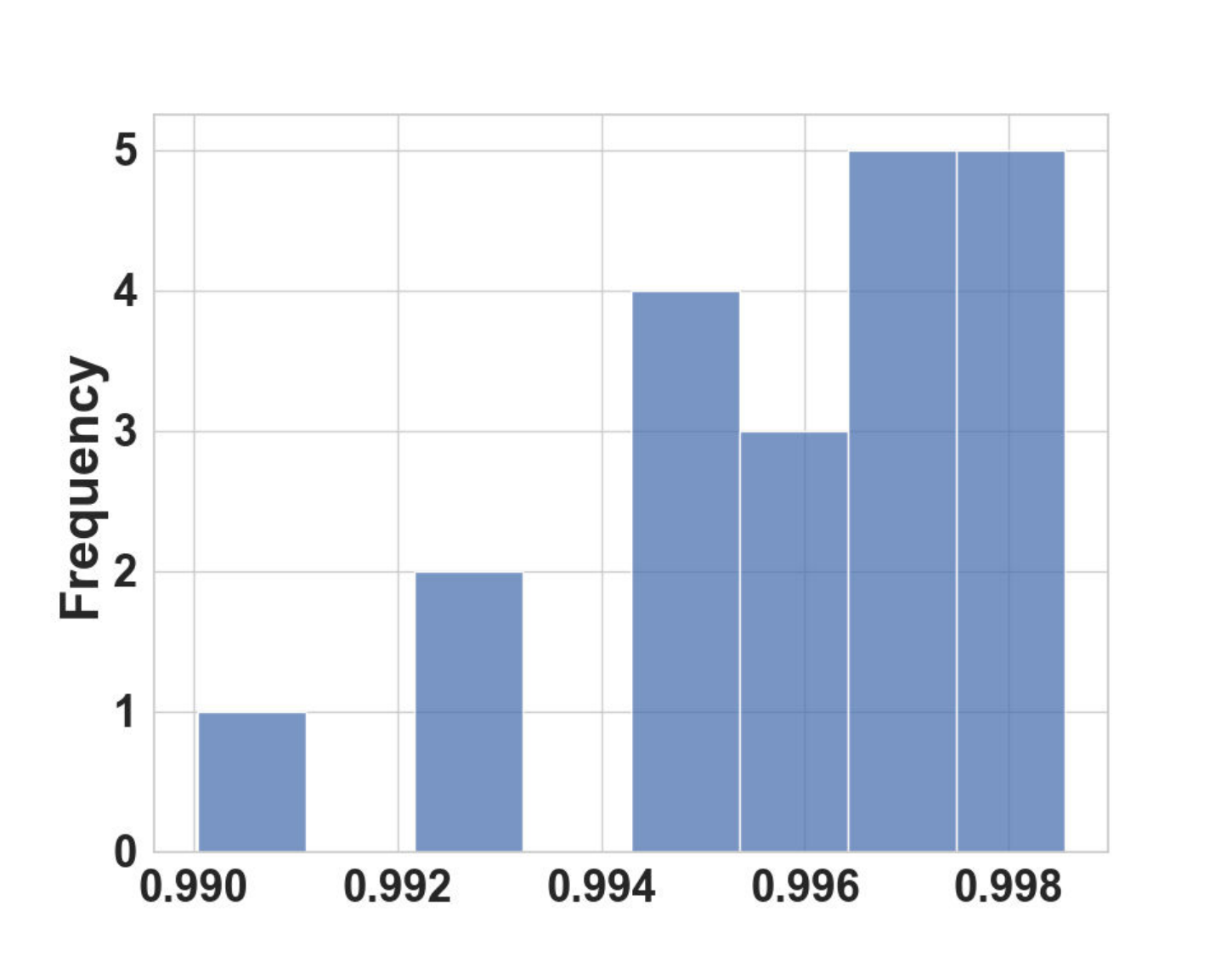}
        \subcaption{The projection of $e_1$}
        \label{fig:SLprojection_e1}
    \end{minipage}
    \begin{minipage}[t]{0.32\linewidth}
        \centering
        \includegraphics[width=0.98\textwidth]{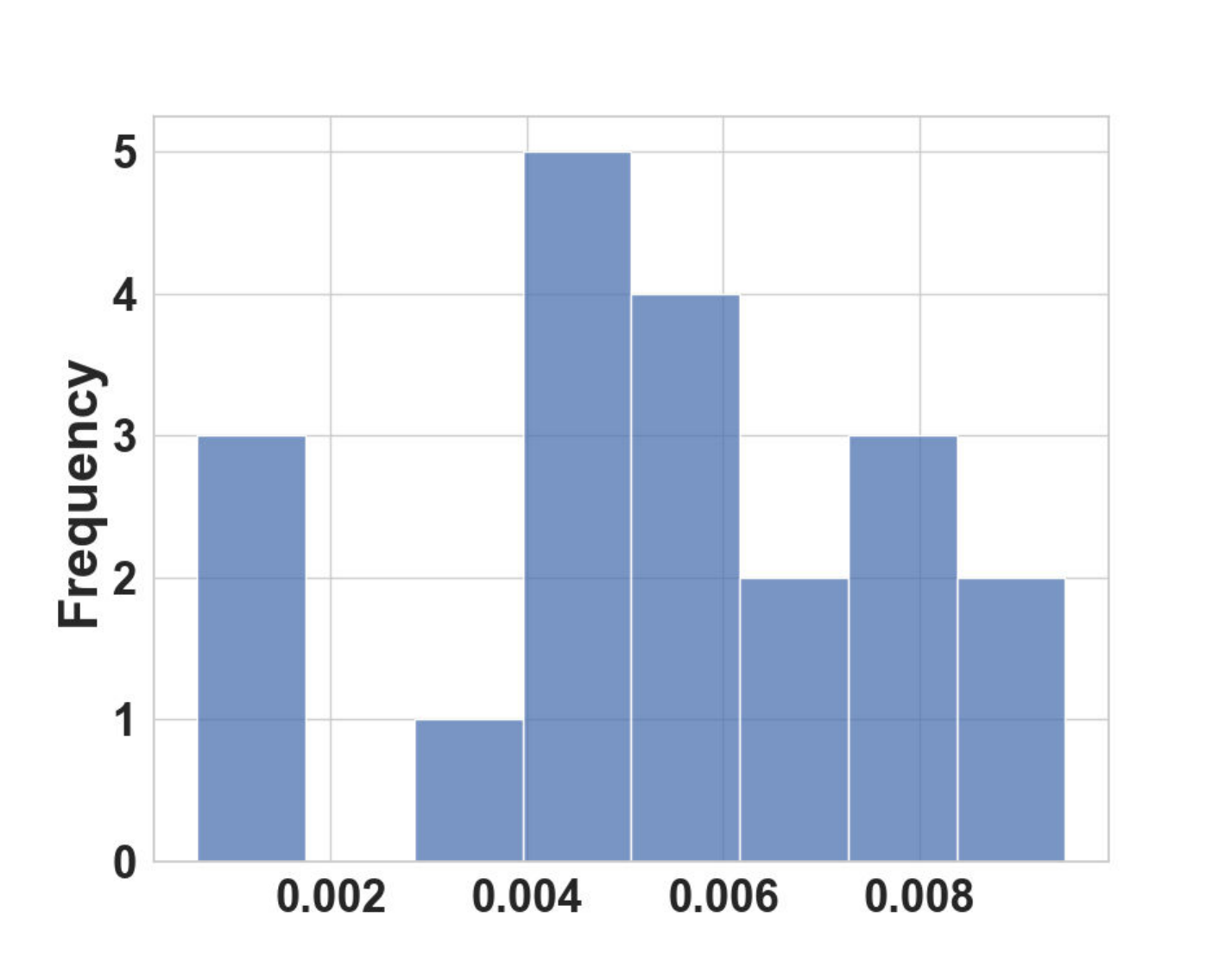}
        \subcaption{The projection of $e_2$}
        \label{fig:SLprojection_e2}
    \end{minipage}
    \caption{Experiment results of SL model with $d=10, \tau=7$}
    \label{fig:SLresult}
\end{figure*}
\section{Simulation Experiments}\label{exp}
In this section, we illustrate the correctness of Theorem \ref{thm:Convergence_noiseepexp_pro} and Theorem \ref{thm:SLtheoremep} experimentally. We conduct experiments for the nonlinear SSL model in Sec. \ref{subsec:exp_result} and Sec. \ref{subsec:beyond_analysis_exp}. Furthermore, we show the training process of the nonlinear SL model with projection matrix $F$ in Sec. \ref{subsec:SL_exp_result}. In this section, we choose $\tau =7, d=10, \rho = 1/d^{1.5}, \alpha = \frac{1}{800}, n=d^2$ and learning rate $\eta= 0.001$ if we do not specify otherwise.  Experiments are averaged over 20 random seeds, and we show the average results with $95\%$ confidence interval for learning curves. 

\subsection{SSL Model: the Correctness of Theorem \ref{thm:Convergence_noiseepexp_pro}}\label{subsec:exp_result}
In this part, we validate the correctness of Theorem \ref{thm:Convergence_noiseepexp_pro} by strictly following the settings of the  theorem. Define $T_1=4000$ as the number of iterations of the SSL model. Fig. \ref{fig:d10tau7final_matrix} shows the learning result of weight matrix $W = [w_1, w_2]^\top$. The blue points are learning results of $(w_1^{(1)}(T_1),w_1^{(2)}(T_1))$ and the red stars are learning results of $w_2$. It is clear that $w_1(T_1)$ and $w_2(T_1)$ are almost symmetrical about the $e_2$-axis, which is consistent with the theoretical result (Fig. \ref{fig:Theoretical_Results}). Fig. \ref{fig:d10tau7learning_curve} shows the learning process of $w_1$ and $w_2$. Because we initialize $w_1(0)$ and $w_2(0)$ around the local minimum, $w_1$ and $w_2$ can easily converge to $(w_1^*, w_2^*)$. Fig. \ref{fig:d10tau7projection_e2} shows the projection of $e_2$ on the space spanned by $w_1(T_1)$ and $w_2(T_1)$. We can find the projection is almost $1$. These experimental results show that $W$ learns $e_1$, $e_2$ at the same time. The results of larger $\tau$ are similar to the results of $\tau =7$.

\subsection{SSL Model: Results Beyond Analysis}\label{subsec:beyond_analysis_exp}
In this part, we relax requirements in Theorem \ref{thm:Convergence_noiseepexp_pro}, such as $(w_1, w_2)$ must be initialized near $(w_1^*, w_2^*)$, $\tau$ must be large. We show that the SSL model still learns the label-related and hidden feature even if the requirements are relaxed.

\textbf{Good enough initialization.} In Theorem \ref{thm:Convergence_noiseepexp_pro}, we initialize $w_1$ and ${w_2}$ around $(w_1^*, w_2^*)$. We experimentally show that initialization only need the correct sign $(w_{1}^{(1)}(0)>0, w_{2}^{(1)}(0)<0, w_{1}^{(2)}(0)>0, w_{2}^{(2)}(0)>0)$ is required.
Fig. \ref{fig:correct_sign} shows that if the initialization sign is correct, the SSL model can converge to $(w_1^*, w_2^*)$ with high probability.
``With high probability" means there are still a few cases where the SSL model cannot converge to $(w_1^*, w_2^*)$. However, compared with the learning results (Fig. \ref{fig:SLprojection_e2}) of the SL model, the SSL model with the correct sign still shows the ability to learn $e_2$. 

\textbf{Large enough $\tau$.} In the proof process of Theorem \ref{thm:Convergence_noiseepexp_pro}, we need $\tau = \text{max} \{7, d^{\frac{1}{10}}\}$ to use the monotonicity of the solution of $\frac{\partial \widetilde{L}}{\partial w_{1}^{(2)}}$. We experimentally show that the SSL model can get a good result even if $\tau$ does not meet this requirement. Fig. \ref{fig:d10tau3} shows even if $\tau =3$ , the space spanned by $w_1$ and $w_2$ is still very close to the space spanned by $e_1$ and $e_2$.

\subsection{SL Experiment Results}\label{subsec:SL_exp_result}
In Theorem \ref{thm:SLtheoremep}, we mainly focus on the performance of the feature extractor $W^{\text{SL}}$ and ignore the projection matrix $F$. In this section, we experimentally show that even if $F$ is considered, $W^{\text{SL}}$ still only learns the label-related feature. Specifically, we consider the binary cross-entropy loss function:
\begin{align*}
    \min \widetilde{L}_{\text{SL}} = &-\frac{1}{n}\sum_{i=1}^n y_i \ln(\hat{y}_i)+(1-y_i)\ln (1-\hat{y}_i)\\&\qquad+\beta\|W^{\text{SL}}\|_F^2+\gamma\|F\|_2^2\,,
\end{align*}
where $\hat{y}_i = \sigma(F\sigma(W^{\text{SL}}x_i)), \forall i\in[n]$, $\beta$ is the coefficient of $W^{\text{SL}}$ regularizer, and $\gamma$ is the coefficient of $F$ regularizer. In this section, we choose $\beta=\gamma= 1/800$.

Define $T_2=8000$ as the number of iterations of the nonlinear SL model. Fig. \ref{fig:SLlearning_curve} shows the learning curve of $(\widetilde{w}^{\text{SL}}_1, \widetilde{w}^{\text{SL}}_2)$. It is clear that $\widetilde{w}_1^{\text{SL}(1)}(T_2)$ and $\widetilde{w}_2^{\text{SL}(1)}(T_2)$ are the main terms, and the other terms $\widetilde{w}_j^{\text{SL}(k)}, \forall j\in [2],k\in [2,d]$ will converge to $0$. Fig. \ref{fig:SLprojection_e1} and Fig. \ref{fig:SLprojection_e2}  show the projection of $e_1$ and $e_2$ on the space spanned by $\widetilde{w}_1^{\text{SL}}(T_2)$ and $\widetilde{w}_2^{\text{SL}}(T_2)$. It is clear that the projection of $e_1$ is almost $1$, and the projection of $e_2$ is almost $0$. The above experiment results mean that the nonlinear SL model can only learn label-related feature $e_1$, which is consistent with the results of Theorem \ref{thm:SLtheoremep}.

All experiments are conduct on a desktop with AMD Ryzen 7 5800H with Radeon Graphics 3.20 GHz and 16 GB memory. The codes of this section are available at  https://github.com/wanshuiyin/AAAI-2023-The-Learnability-of-Nonlinear-SSL.

\section{Conclusion}\label{sec:conclusion}
\textbf{Summary.} Our paper is the first to analyze the data representation learnability of the nonlinear SSL model by analyzing the learning results of the neural network. We start with a 1-layer nonlinear SSL model and use GD to train this model. We prove that the model converges to a local minimum. Further, we accurately describe the properties of this local minimum and  prove that the nonlinear SSL model can capture label-related features and hidden features at the same time. In contrast, the nonlinear SL model only learns label-related features. This conclusion shows that even though the nonlinear network significantly improves the learnability of the SL model, the SSL model still has a superior ability to capture important features compared with the SL model. We verify the correctness of the results through
simulation experiments.

Due to the nonconvexity of the objective function and noise terms, we propose a new analysis process to describe the properties of the local minimum. This analysis process is divided into two steps. In the first step, we focus on the structure of $L$ by ignoring all noise terms. Then we obtain the approximate region of the local minimum. In the second step, we use the exact version of Inverse Function Theorem as a bridge to connect the simplified objective function $\widetilde{L}$ and $L$. Finally, we prove the existence of the local minimum $(w_1^*, w_2^*)$ and describe the properties of this local minimum. 

Compared with linear SSL models, nonlinear alternatives are closer to the state-of-the-art SSL methods. The conclusions in this paper can guide us further in understanding the learning results of SSL methods and provide a theoretical basis for subsequent improvements.

\paragraph{Future work.} This paper analyzes a 1-layer nonlinear SSL model. After that, we plan to expand the scope of the analysis to a multi-layer nonlinear network. The multi-layer network analysis requires a more refined exploration of local minima. The weight matrix of each layer needs to be uniformly processed to analyze the landscape of the objective function, which we will do in the follow-up work.



\bibliography{ref}

\appendix
\onecolumn

\section*{Appendix}

\section{Proof of Theorem \ref{thm:Convergence_noiseepexp_pro}}\label{sec:main_theorem_proof}

In this section, we first present the gradient of the nonlinear SSL model. Then we separate the core part of the gradient that affects the local minimum and the insignificant noise terms.
Recall that the objective function of the SSL model is
\begin{align}
    \min L=-\frac{1}{n}\sum_{i=1}^n\mathbb{E}_{\xi_{\text{aug}},\xi^{\prime}_{\text{aug}}}\left[\left\langle \sigma(W(x_i+\xi_{\text{aug}}),\sigma(W(x_i+\xi^{\prime}_{\text{aug}}))\right\rangle\right]+\alpha\left\|W\right\|_F^2\,,\notag
\end{align}
where $\alpha$ is the coefficient of regularizer, $W=\left[w_1,w_2\right]^{\top}\in \mathbb{R}^{2\times d}$ and $\xi_{\text{aug}},\xi'_{\text{aug}} \sim \mathcal{N}\left(0,\rho^2 I\right)$. 
For simplicity, we define $z^{(k)}$ as the $k$-th element of $z\in \mathbb{R}^d$.  We denote by $\frac{\partial L}{\partial w_{j}^{(k)}}$ the gradient of $w_{j}^{(k)}$, $\frac{\partial L}{\partial w_{j}}$ the gradient of $w_{j}, H(w_j)=\nabla^2_jL(w_1,w_2)=\frac{\partial^2 L}{\partial w_j^2}\in \mathbb{R}^{d\times d}$ the Hessian matrix, and $\nabla^3_jL(w_1,w_2)=\frac{\partial^3 L}{\partial w_j^3}\in \mathbb{R}^{d\times d \times d}, \forall j\in[2], k\in [d]$. 

We remark that each datapoint $x_i$ contains noise $\rho \xi_i$. Therefore, there are two kinds of noise terms: datapoint noise $\rho \xi_i$ and data augmentation noise $\xi_{\text{aug}},\xi'_{\text{aug}}$ in $L$. For the convenience of analysis, we first ignore $\rho\xi_i, \xi_{\text{aug}}, \xi^{\prime}_{\text{aug}}$ noise terms and do expectation over data distribution to get an simplified, analyzable version of the objective function:
\begin{align}
\min \widetilde{L}=-\mathbb{E}_{\widetilde{x}}[\langle \sigma(W\widetilde{x}),\sigma(W\widetilde{x})\rangle]+\alpha\|W\|_F^2\,, \notag
\end{align}
where $\widetilde{x}_i$ is the datapoint without without noise term $\rho \xi_i$. 

The detailed formulas of $\frac{\partial L}{\partial w_j^{(k)}}$ and $\frac{\partial \widetilde{L}}{\partial w_j^{(k)}},\forall j\in [2], k\in[d]$ will be presented in Appendix \ref{sec:detailed_formulas}. It is clear that the gradient of $w_1$ and $w_2$ are similar, so in part of the following lemmas, we mainly discuss the properties of $w_1$. Then using symmetry, we get the properties of $w_2$.

After simplifying the objective function, we analyze the existence of the local minimum $(\widetilde{w}_{1}^{*},\widetilde{w}_{2}^{*})$ of $\widetilde{L}$. We respectively  define 
\begin{align*}
    &\widetilde{D}_1(\tau) = \{\vec{x}\in \mathbb{R}^d|x^{(1)}\in[3.1,3.9], \tau x^{(2)}\in[9,+\infty), x^{(k)}=0, \forall k\in [3,d]\},\\
    &\widetilde{D}_2(\tau) = \{\vec{x}\in \mathbb{R}^d|x^{(1)}\in[-3.1,-3.9], \tau x^{(2)}\in[9,+\infty), x^{(k)}=0, \forall k\in [3,d]\}\,,
\end{align*}
as the region of $\widetilde{w}_1^{*}$ and $\widetilde{w}_2^{*}$.
Lemma \ref{lem:existence_no_noise} proves the existence of the point $(\widetilde{w}_1^{*}, \widetilde{w}_2^{*})$ which satisfies  $\frac{\partial \widetilde{L}}{\partial W}=0$ in the region $\widetilde{D}_1(\tau)\times \widetilde{D}_2(\tau)$. 

\begin{restatable}{lemma}{existencenonoise}\label{lem:existence_no_noise}
For $\alpha =1/800, \tau \ge 7$, the equation $\frac{\partial \widetilde{L}}{\partial W} = 0$ has a solution $(\widetilde{w}_{1}^{*},\widetilde{w}_{2}^{*})$, which satisfies $(\widetilde{w}_{1}^{*},\widetilde{w}_{2}^{*})\in \widetilde{D}_1(\tau)\times \widetilde{D}_2(\tau)$.
\end{restatable}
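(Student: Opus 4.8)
The plan is to strip off the noise and the empirical expectation, observe that $\widetilde{L}$ splits, and reduce everything to a two–variable root–finding problem. Since $e_1=(1,0,\dots,0)^\top$, $e_2=(0,1,0,\dots,0)^\top$ and the four noiseless datapoints $\pm e_1$, $\pm e_1+\tau e_2$ each occur with probability $1/4$, taking the expectation gives
\begin{align*}
\widetilde{L}=\sum_{j=1}^{2}\Big(\alpha\|w_j\|_2^2-\tfrac14\big[&\sigma(w_j^{(1)})^2+\sigma(-w_j^{(1)})^2\\
&{}+\sigma(w_j^{(1)}+\tau w_j^{(2)})^2+\sigma(-w_j^{(1)}+\tau w_j^{(2)})^2\big]\Big),
\end{align*}
so the problem separates across $j\in[2]$ and, within each $j$, the coordinates $w_j^{(k)}$ with $k\ge 3$ enter only through $\alpha\|w_j\|_2^2$; hence $\partial\widetilde{L}/\partial w_j^{(k)}=2\alpha w_j^{(k)}$ and these coordinates must vanish at a stationary point, which is precisely the form prescribed by $\widetilde{D}_1(\tau)\times\widetilde{D}_2(\tau)$. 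It then remains to produce $(a,b):=(w_1^{(1)},w_1^{(2)})$ with $a\in[3.1,3.9]$ and $\tau b\ge 9$ solving the two stationarity equations; the matching $\widetilde{w}_2^*$ is $(-a,b,0,\dots,0)$ because each $j$-summand is invariant under $w_j^{(1)}\mapsto -w_j^{(1)}$.

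Writing $h(x):=\sigma(x)\sigma'(x)=\sigma(x)^2(1-\sigma(x))$ and using $\sigma(-x)=1-\sigma(x)$, the two equations read
\begin{align*}
2\alpha a&=\tfrac12\big[(h(a)-h(-a))+(h(a+\tau b)-h(-a+\tau b))\big],\\
2\alpha b&=\tfrac{\tau}{2}\big[h(a+\tau b)+h(-a+\tau b)\big].
\end{align*}
The first appeal to the intermediate value principle is to solve the second equation for $b$ given $a\in[3.1,3.9]$: the function $h$ has a unique interior maximum at $\sigma(x)=2/3$ (i.e.\ $x=\ln 2$), so for $\tau b\ge 9$ both arguments $\pm a+\tau b$ lie in the region where $h$ is strictly decreasing; consequently $b\mapsto 2\alpha b-\tfrac{\tau}{2}[h(a+\tau b)+h(-a+\tau b)]$ is strictly increasing on $[9/\tau,\infty)$, is negative at $b=9/\tau$ (here one checks $\tfrac{\tau}{2}[h(a+9)+h(9-a)]>\tfrac{9}{400\tau}$, where $\alpha=1/800$ and $\tau\ge 7$ enter), and tends to $+\infty$; hence it has a unique root $b(a)$, with $\tau b(a)>9$, depending continuously on $a$ by the implicit function theorem. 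Substituting $b=b(a)$ into the first equation defines a continuous $F(a)=2\alpha a-\tfrac12[(h(a)-h(-a))+(h(a+\tau b(a))-h(-a+\tau b(a)))]$, and the second appeal to the intermediate value principle is to show $F(3.1)<0<F(3.9)$. Both sign checks are elementary estimates on $\sigma,\sigma',h$: $h(a)-h(-a)=\sigma'(a)(2\sigma(a)-1)$ is roughly $0.038$ at $a=3.1$ and $0.019$ at $a=3.9$, while $2\alpha a$ is about $0.0078$ and $0.0098$ respectively, and the residual $h(a+\tau b(a))-h(-a+\tau b(a))$ is negative with magnitude $O(e^{-5})$; combining these yields $F(3.1)<0$ and $F(3.9)>0$, so $F$ vanishes at some $a^*\in(3.1,3.9)$ and $(a^*,b(a^*),0,\dots,0)=\widetilde{w}_1^*\in\widetilde{D}_1(\tau)$.

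The main obstacle is the tightness of these quantitative estimates: the window $[3.1,3.9]$ for $\widetilde{w}_1^{*(1)}$ is narrow — the positivity of $F(3.9)$ in particular holds with very little slack — so one must control $\sigma$, $\sigma'$ and $h$, including the exact location and one-sided monotonicity of the maximum of $h$, with genuinely sharp bounds, and make them uniform over $\tau\ge 7$ and over all admissible $\tau b(a)\in[9,\infty)$, verifying that the residual terms never grow large enough to spoil either sign change. The remaining ingredients — continuity of $b(a)$, strict monotonicity of the relevant one-variable maps, and obtaining $\widetilde{w}_2^*$ from the sign symmetry — are routine once these estimates are in hand.
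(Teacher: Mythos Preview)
Your proposal is correct and follows essentially the same route as the paper: reduce to the two nonzero coordinates, apply the intermediate value principle first to solve the $w_1^{(2)}$-equation for each fixed $w_1^{(1)}\in[3.1,3.9]$ (using monotonicity of $h$ beyond its peak), then apply it a second time along the resulting continuous curve to locate $w_1^{(1)}$, and finally invoke the sign symmetry for $\widetilde w_2^*$. The paper's proof is terser (it simply asserts the endpoint sign checks and the continuity of the curve), whereas you supply the explicit numerics and invoke the implicit function theorem for continuity of $b(a)$; these are expository rather than substantive differences.
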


Lemma \ref{lem:strongconvex} uses Hessian matrix to characterize the landscape of $\widetilde{L}$ in the region $D_1^{B_0}(\tau)\times D_2^{B_0}(\tau)$ and show that $\widetilde{L}$ is locally strongly convex and $L_m$-smooth. The region $D_1^{B_0}(\tau)\times D_2^{B_0}(\tau)$ can be viewed as an ball center at  the initialization region $D_1(\tau)\times D_2(\tau)$ of $L$ mentioned in Theorem \ref{thm:Convergence_noiseepexp_pro}. We define $D_1^{B_0}(\tau)\times D_2^{B_0}(\tau)$ because the requirement of Def. \ref{def:locally_strong_convex_and_L_smooth} needs to be satisfied.
\begin{restatable}{lemma}{strongconvex}\label{lem:strongconvex}
For $\alpha =1/800, \tau\ge 7$, there is a region $D_1^{B_0}(\tau)\times D_2^{B_0}(\tau)$ s.t. $\widetilde{L}$ is $2\alpha$-strongly convex and $(2\alpha+\tau^2+1.5)$-smooth.
\end{restatable}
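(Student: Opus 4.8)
The plan is to compute $\nabla^2\widetilde{L}$ essentially explicitly and read off the eigenvalue bounds. The first observation is that $\widetilde{L}$ \emph{separates} over $w_1$ and $w_2$: writing the four clean datapoints as $\widetilde{x}_1=e_1$, $\widetilde{x}_2=e_1+\tau e_2$, $\widetilde{x}_3=-e_1$, $\widetilde{x}_4=-e_1+\tau e_2$ (each with probability $1/4$), we have $\langle\sigma(W\widetilde{x}),\sigma(W\widetilde{x})\rangle=\sigma(w_1^\top\widetilde{x})^2+\sigma(w_2^\top\widetilde{x})^2$, so $\widetilde{L}(W)=\widetilde{L}_1(w_1)+\widetilde{L}_2(w_2)$ with $\widetilde{L}_j(w)=\alpha\|w\|_2^2-\tfrac14\sum_{l=1}^4\sigma(w^\top\widetilde{x}_l)^2$. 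Hence $\nabla^2\widetilde{L}=\operatorname{diag}(\nabla^2\widetilde{L}_1,\nabla^2\widetilde{L}_2)$, and by the symmetry $(w_1,e_1)\leftrightarrow(w_2,-e_1)$ it suffices to bound the spectrum of $\nabla^2\widetilde{L}_1$. Moreover $w^\top\widetilde{x}_l$ only involves $a:=w^{(1)}$ and $b:=w^{(2)}$ (the four values are $a$, $a+\tau b$, $-a$, $-a+\tau b$), so with $\Phi(a,b):=\sigma(a)^2+\sigma(a+\tau b)^2+\sigma(-a)^2+\sigma(-a+\tau b)^2$ the matrix $\nabla^2\widetilde{L}_1$ equals $2\alpha I_d$ plus a matrix whose only nonzero part is the top-left $2\times2$ block $M:=-\tfrac14\nabla^2\Phi(a,b)$. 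Thus the whole task reduces to showing $0\preceq M\preceq(\tau^2+1.5)I_2$ on the region in question.

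Let $h(t):=\sigma(t)^2$, so that $h''(t)=2\sigma(t)^2(1-\sigma(t))(2-3\sigma(t))$; a short one-variable computation gives $|h''|\le 0.16$ everywhere and $\operatorname{sgn}(h''(t))=\operatorname{sgn}(\ln 2-t)$. Differentiating $\Phi$,
\[
\nabla^2\Phi(a,b)=\begin{pmatrix} h''(a)+h''(-a)+P+Q & \tau(P-Q)\\ \tau(P-Q) & \tau^2(P+Q)\end{pmatrix},\qquad P:=h''(a+\tau b),\ Q:=h''(-a+\tau b).
\]
I would then define $D_1^{B_0}(\tau)$ (and symmetrically $D_2^{B_0}(\tau)$) as a neighborhood of the initialization box $D_1(\tau)$ chosen large enough to contain the ball $B_0$ of Definition~\ref{def:locally_strong_convex_and_L_smooth} but small enough that on it $a$ stays in, say, $[3,4]$ and $\tau b\ge 8$ (recall $D_1(\tau)$ forces $\tau x^{(2)}\in(8.5,9)$, and $\tau\ge 7$ keeps $\tau b$ bounded). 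On such a region the four arguments satisfy $a>\ln 2$, $-a<\ln 2$, $a+\tau b>\ln 2$, $-a+\tau b>\ln 2$, so $h''(a)<0$, $h''(-a)>0$, $P<0$, $Q<0$; and the exponential decay of $h''$ forces $|h''(-a)|\ll|h''(a)|$ and $|P|\ll|Q|$, hence $h''(a)+h''(-a)<0$ and $P+Q<0$.

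For strong convexity it is enough to show $\nabla^2\Phi\preceq 0$, i.e.\ (for a $2\times2$ symmetric matrix) $\operatorname{tr}\nabla^2\Phi\le 0$ and $\det\nabla^2\Phi\ge 0$. The trace is $h''(a)+h''(-a)+(1+\tau^2)(P+Q)<0$ by the signs above; and $\det\nabla^2\Phi=\tau^2\bigl((h''(a)+h''(-a))(P+Q)+4PQ\bigr)>0$ as a sum of two products of like-signed negatives. Hence $M\succ 0$, the coordinate directions $k\ge3$ contribute exactly the eigenvalue $2\alpha$, and $\nabla^2\widetilde{L}_1\succeq 2\alpha I_d$. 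For smoothness a crude bound suffices: $\lambda_{\max}(M)\le\tfrac14\|\nabla^2\Phi\|_F$, and with $|h''|\le 0.16$ the entries give $\|\nabla^2\Phi\|_F^2\le(4\cdot0.16)^2+2(0.32\,\tau)^2+(0.32\,\tau^2)^2\le 0.71\,\tau^4$ for $\tau\ge1$, so $\lambda_{\max}(M)\le\tfrac14\sqrt{0.71}\,\tau^2<\tau^2+1.5$ — the stated smoothness constant is far from tight and is picked only so that the learning-rate and $\kappa$ expressions in Theorem~\ref{thm:Convergence_noiseepexp_pro} come out clean. Combining the two diagonal blocks yields $2\alpha I_{2d}\preceq\nabla^2\widetilde{L}\preceq(2\alpha+\tau^2+1.5)I_{2d}$ on $D_1^{B_0}(\tau)\times D_2^{B_0}(\tau)$.

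The delicate point — and the main obstacle — is pinning down $D_1^{B_0}(\tau)\times D_2^{B_0}(\tau)$: it must simultaneously contain the ball $B_0$ of Definition~\ref{def:locally_strong_convex_and_L_smooth} attached to the GD initialization in $D_1(\tau)\times D_2(\tau)$ (so that ``strongly convex on $B_0$'' is meaningful and the convergence argument can later invoke it), yet be tight enough that the first coordinate never reaches $\ln 2$ nor $\tau b$ drops below $\approx 8$, since that is exactly what preserves the sign pattern of $h''(a),h''(-a),P,Q$ and hence $\operatorname{tr}\nabla^2\Phi<0$, $\det\nabla^2\Phi>0$ uniformly over the region and over all admissible $\tau$. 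Once the region is fixed, every remaining estimate is an elementary inequality about $h''=(\sigma^2)''$; the same argument carries over to $\tanh$ after shrinking the coordinate windows as described in the main text.
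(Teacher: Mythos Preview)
Your approach is essentially the paper's: reduce $\nabla^2\widetilde{L}$ to its top-left $2\times2$ block, write the entries in terms of $(\sigma^2)''$ evaluated at $\pm w_1^{(1)}$ and $\pm w_1^{(1)}+\tau w_1^{(2)}$, and verify $M\succeq 0$ via the trace/determinant sign pattern (their $h=\sigma'\sigma$ differs from your $h=\sigma^2$ only by a derivative, so their $h'$ is your $h''/2$ and the determinant identity is literally the same). The only substantive differences are that the paper pins down the region explicitly as $x^{(1)}\in(2.3,4.7)$, $\tau x^{(2)}\in(8.5,\infty)$, $|x^{(k)}|<3d^{-0.49}$ --- resolving what you correctly flag as the delicate point --- and obtains smoothness by a direct Sylvester check with $|h'|<1/4$ rather than your cruder Frobenius bound.
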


Using the results of Lemma \ref{lem:strongconvex}, we show that $(\widetilde{w}_1^{*},\widetilde{w}_2^{*})$ is a local minimum of $\widetilde{L}$. Furthermore, because of the definition of $\widetilde{D}_1(\tau)\times \widetilde{D}_2(\tau)$, it is clear that $(\widetilde{w}_1^{*},\widetilde{w}_2^{*})$ capture all important data distribution features. 

After completing the analysis of $\widetilde{L}$, we deal with the error terms because of the simplified process.
\paragraph{Error terms due to expectation.}We deal with the error terms due to the expectation operation over data distribution in $\widetilde{L}$. The objective function $\widehat{L}$ without expectation operation over data distribution is analyzed to deal with these error terms. $\widehat{L}$ can be written as:
\begin{align}
    \widehat{L}=-\frac{1}{n}\sum_{i=1}^n\langle \sigma(W\widetilde{x}_i),\sigma(W\widetilde{x}_i)\rangle+\alpha\|W\|_F^2\,,\notag
\end{align}
where $n=n_1+n_2+n_3+n_4.$ The detailed formulas of  $\frac{\partial \widehat{L}}{\partial w_j^{k}},\forall j\in [2], k\in[d]$ will be presented in Appendix \ref{sec:detailed_formulas}.

Lemma \ref{lem:dislocation_noise} focuses on the transformation process from $\widetilde{L}$ to $\widehat{L}$. This lemma proves the upper bound of gradient noise terms $\left\|\frac{\partial \widehat{L}-\widetilde{L}}{\partial w_1}\Big|_{w_1=\widetilde{w}_1^{*}}\right\|_2$, Hessian matrix noise terms $\left\|\frac{\partial^2\left(\widehat{L}-\widetilde{L}\right)}{\partial w_1^2}\right\|_F$, and $\|\nabla^3_1 \widehat{L}(w_1,w_2)\|_2$. 

\begin{restatable}{lemma}{dislocationnoise}\label{lem:dislocation_noise}
For $\tau = d^{\frac{1}{10}}$ and $n=d^2$, with probability $1-O\left(2^{-\frac{d^2}{10}}\right)$, $\left\|\frac{\partial \widehat{L}-\widetilde{L}}{\partial w_1}\Big|_{w_1=\widetilde{w}_1^{*}}\right\|_2\leq O(\tau n^{-\frac{9}{20}}), \left\|\frac{\partial^2\left(\widehat{L}-\widetilde{L}\right)}{\partial w_1^2}\right\|_F \leq O\left(\tau^2 n^{-\frac{9}{20}}\right)$ and $\|\nabla^3_1 \widehat{L}(w_1,w_2)\|_2\leq \tau^3$.
\end{restatable}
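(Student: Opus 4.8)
The plan is to reduce $\widehat{L}-\widetilde{L}$ to an explicit four-term sum in which the random multiplicities $n_1,\ldots,n_4$ are separated from deterministic, bounded derivative factors. Because both $\widehat{L}$ and $\widetilde{L}$ use the noiseless datapoints, $\sigma(W\widetilde{x}_i)$ depends on $i$ only through the type $l\in[4]$ of $x_i$; writing $\widetilde{x}^{(l)}$ for the noiseless representative of $\mathcal{D}_l$ and $g_l(W):=\sigma(w_1^{\top}\widetilde{x}^{(l)})^2+\sigma(w_2^{\top}\widetilde{x}^{(l)})^2$, the regularizer $\alpha\|W\|_F^2$ cancels in the difference and
\begin{align}
\widehat{L}-\widetilde{L}=-\sum_{l=1}^4\left(\frac{n_l}{n}-\frac{1}{4}\right)g_l(W)\,.\notag
\end{align}
Differentiating term by term in $w_1$ gives $\frac{\partial(\widehat{L}-\widetilde{L})}{\partial w_1}=-\sum_{l=1}^4(\frac{n_l}{n}-\frac14)\nabla_{w_1}g_l(W)$ and the analogous identity for $\nabla^2_{w_1}$. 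For the third-order claim I instead work with $\widehat{L}$ directly: its third $w_1$-derivative equals $-\frac{1}{n}\sum_l n_l\nabla^3_{w_1}g_l(W)$ since a third derivative annihilates the quadratic regularizer, and no cancellation against $\widetilde{L}$ is available, which is why the statement asks only for the coarse bound $\tau^3$ there.

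Next I bound the deterministic factors uniformly in $W$. Each $g_l$ is a sum of two terms $\sigma(\langle w_j,\widetilde{x}^{(l)}\rangle)^2$, so its $k$-th $w_1$-derivative is a bounded linear combination of products $\sigma^{(a)}\sigma^{(b)}$ with $a+b\le k$, each tensored with $k$ copies of $\widetilde{x}^{(l)}$. Using the global bounds $|\sigma|\le 1$, $|\sigma'|\le\frac14$, and $|\sigma''|,|\sigma'''|$ bounded by absolute constants (the same holds, with different constants, for $\sigma_2=\tanh$), together with $\|\widetilde{x}^{(l)}\|_2\le\sqrt{1+\tau^2}$, I obtain $\|\nabla_{w_1}g_l\|_2=O(\tau)$, $\|\nabla^2_{w_1}g_l\|_F=O(\tau^2)$, and $\|\nabla^3_{w_1}g_l\|_2=O(\tau^3)$ for every $W$ and in particular at $w_1=\widetilde{w}_1^{*}$, no localization being needed since $\sigma$ and its derivatives are bounded everywhere. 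For the third one I would track the numerical constants explicitly, using the paper's convention $\|\nabla^3_{w_1}g_l\|_2^2\le\sum_i\|\frac{\partial}{\partial w_1^{(i)}}\nabla^2_{w_1}g_l\|_F^2$ and the elementary inequality $(1+\tau^2)^{3/2}\le\tau^3$ valid for $\tau\ge 7$, so that the constant is absorbed and the bound becomes exactly $\tau^3$.

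Then I bound the random coefficients. Each $n_l$ is $\mathrm{Binomial}(n,1/4)$, so a Chernoff/Hoeffding tail bound together with a union bound over $l\in[4]$ gives $|\frac{n_l}{n}-\frac14|\le O(n^{-9/20})$ for all $l$ with probability $1-O(2^{-d^2/10})$ (recall $n=d^2$). On that event, combining with the per-type estimates,
\begin{align}
\left\|\frac{\partial(\widehat{L}-\widetilde{L})}{\partial w_1}\Big|_{w_1=\widetilde{w}_1^{*}}\right\|_2\le\sum_{l=1}^4\left|\frac{n_l}{n}-\frac14\right|\left\|\nabla_{w_1}g_l\right\|_2=O\!\left(\tau n^{-\frac{9}{20}}\right)\,,\notag
\end{align}
and identically $\|\frac{\partial^2(\widehat{L}-\widetilde{L})}{\partial w_1^2}\|_F=O(\tau^2 n^{-9/20})$. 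The third-order bound uses no concentration at all: $\|\nabla^3_1\widehat{L}\|_2\le\frac1n\sum_l n_l\|\nabla^3_{w_1}g_l\|_2\le\max_l\|\nabla^3_{w_1}g_l\|_2\le\tau^3$, relying only on $\frac1n\sum_l n_l=1$.

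I expect the one delicate point to be the third-order step: pushing the tensor bound down to the clean constant $\tau^3$ requires writing $\frac{\partial}{\partial w_1^{(i)}}\nabla^2_{w_1}\sigma(\langle w_1,\widetilde{x}\rangle)^2$ out in full, summing Frobenius norms over $i$ as the paper's operator-norm convention demands, and checking the resulting numerical factor against the $(1+\tau^2)^{3/2}$-versus-$\tau^3$ margin; a slightly too-large constant would force a larger threshold on $\tau$. The concentration step is routine once the deviation scale $n^{-9/20}$ is matched to the claimed failure probability, and the gradient and Hessian bounds then follow immediately by linearity from the per-type estimates.
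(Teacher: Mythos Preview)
Your proposal is correct and follows essentially the same approach as the paper: isolate the random coefficients $\tfrac{n_l}{n}-\tfrac14$ from the deterministic, bounded derivative factors, apply a Hoeffding-type bound to the former (the paper writes it as $\Pr(|n_l-n/4|\ge n^{11/20})\le 2^{-n/10}$, i.e.\ the same $n^{-9/20}$ deviation scale), and use boundedness of $\sigma,\sigma',\sigma'',\sigma'''$ together with $\|\widetilde{x}^{(l)}\|_2\le\sqrt{1+\tau^2}$ for the latter. Your explicit decomposition $\widehat{L}-\widetilde{L}=-\sum_l(\tfrac{n_l}{n}-\tfrac14)g_l(W)$ is a cleaner way to organize the same computation the paper does entry-by-entry on the $2\times 2$ block of the Hessian, and your observation that the third-order bound is purely deterministic (since $\sum_l n_l/n=1$) is a nice refinement the paper leaves implicit.
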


\paragraph{Datapoint and data augmentation noise terms.} We deal with datapoint noise terms $\rho\xi_i$ and data augmentation noise terms $\xi_{\text{aug}},\xi^{\prime}_{\text{aug}}$ based on the result of $\widehat{L}$. Therefore, we mainly focus on the noise terms of $L-\widehat{L}$.
Lemma \ref{lem:data_noise} proves the upper bound of gradient noise terms $\left\|\frac{\partial L-\widehat{L}}{\partial w_1}\Big|_{w_1=\widetilde{w}_1^{*}}\right\|_2$, Hessian matrix noise terms $\left\|\frac{\partial^2\left(L-\widehat{L}\right)}{\partial w_1^2}\right\|_F$, and $\|\nabla^3_1 L(w_1, w_2)\|_2$. 

\begin{restatable}{lemma}{datanoise}\label{lem:data_noise}
When $w_1\in D_1^{B_0}(\tau), \tau = d^{\frac{1}{10}}, \rho= 1/d^{1.5}$ and $n=d^2$, with probability $1-O\left(e^{-d^{\frac{1}{10}}}\right)$ and large enough $d$, $\left\|\frac{\partial L-\widehat{L}}{\partial w_1}\Big|_{w_1=\widetilde{w}_1^{*}}\right\|_2\leq O(\rho^{\frac{13}{15}}d^{\frac{6}{10}}), \left\|\frac{\partial^2\left(L-\widehat{L}\right)}{\partial w_1^2}\right\|_F\leq O(\rho^{\frac{4}{5}} d^\frac{11}{10})$ and $\|\nabla^3_1 L(w_1, w_2)\|_2\leq \Theta(\sqrt{d})$.
\end{restatable}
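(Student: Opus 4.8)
The plan is to bound the three quantities by writing $L - \widehat{L}$ explicitly in terms of the datapoint noise $\rho\xi_i$ and augmentation noise $\xi_{\text{aug}},\xi'_{\text{aug}}$, then controlling each resulting term via Gaussian tail bounds. Since both $L$ and $\widehat{L}$ have the same regularizer $\alpha\|W\|_F^2$, the difference $L-\widehat{L}$ involves only the inner-product terms, and the difference $\langle\sigma(W(x_i+\xi_{\text{aug}})),\sigma(W(x_i+\xi'_{\text{aug}}))\rangle - \langle\sigma(W\widetilde{x}_i),\sigma(W\widetilde{x}_i)\rangle$ for each $i$. First I would use Lagrange's Mean Value Theorem (as flagged in the proof sketch) to peel the noise out of the activation: for each coordinate $j\in[2]$, write $\sigma(w_j^\top(x_i+\xi_{\text{aug}})) = \sigma(w_j^\top\widetilde{x}_i) + \sigma'(\zeta_{ij})\,w_j^\top(\rho\xi_i+\xi_{\text{aug}})$ for some intermediate point $\zeta_{ij}$, using the boundedness of $\sigma'$ and $\sigma''$ (and $\sigma'''$ for the third-derivative bound). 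This expresses each noise contribution as a product of a bounded smooth factor and a linear form $w_j^\top(\rho\xi_i+\xi_{\text{aug}})$ in the Gaussian noise.

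Next I would take the expectation over $\xi_{\text{aug}},\xi'_{\text{aug}}$ explicitly — these are mean-zero Gaussians with covariance $\rho^2 I$, so the cross terms linear in a single $\xi_{\text{aug}}$ vanish, and the surviving terms are either quadratic in the augmentation noise (contributing a deterministic $O(\rho^2 \|w_j\|^2)$-type bias, controllable since $\|w_j\|_2 = O(1)$ on $D_1^{B_0}(\tau)$) or involve only the datapoint noise $\rho\xi_i$. What remains random is then an average over $i\in[n]$ of terms of the form (bounded smooth coefficient)$\times (\rho w_j^\top\xi_i)$ and $\times(\rho w_j^\top\xi_i)^2$. Because $\xi_i\sim\mathcal{N}(0,I)$ are independent, $\rho w_j^\top\xi_i$ is $\mathcal{N}(0,\rho^2\|w_j\|_2^2)$, so with $\rho = d^{-1.5}$ each such term is of size $O(\rho\sqrt{\log d})$ with high probability, and a Bernstein/Hoeffding bound over the $n=d^2$ independent summands gives concentration of the average around its (small) mean. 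For the gradient $\partial(L-\widehat{L})/\partial w_1$ evaluated at $\widetilde{w}_1^*$, differentiating brings down an extra factor $\widetilde{x}_i$ or $\xi_i$, so one also needs $\|\xi_i\|_2 = O(\sqrt{d})$ with high probability and $\|\sum_i \xi_i\|_2 = O(\sqrt{nd})$; combining these with the $\rho$-scaling yields the claimed $O(\rho^{13/15}d^{6/10})$ (the fractional exponents arising from how one splits the union-bound budget among the $\sim d^2$ noise terms to reach the stated failure probability $O(e^{-d^{1/10}})$). The Hessian bound $\|\partial^2(L-\widehat{L})/\partial w_1^2\|_F \le O(\rho^{4/5}d^{11/10})$ follows the same template with second derivatives of $\sigma$ and one more factor of $\widetilde{x}_i\widetilde{x}_i^\top$ or $\xi_i\xi_i^\top$ (hence the extra power of $d$), and the third-derivative bound $\|\nabla^3_1 L\|_2 \le \Theta(\sqrt d)$ is a crude uniform estimate: $\nabla^3_1 L$ is a sum over $i$ of $\sigma'''$-weighted tensors built from $(x_i+\xi_{\text{aug}})^{\otimes 3}$-type objects, and using the notation's bound $\|\nabla^3 f\|_2^2 \le \sum_k \|\partial\nabla^2 f/\partial x_k\|_F^2$ together with $\|x_i\|_2 = O(1)$, $\|\xi_i\|_2, \|\xi_{\text{aug}}\|_2 = O(\sqrt d)$ on the high-probability event, and the $1/n$ averaging, gives $\Theta(\sqrt d)$.

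The main obstacle I expect is the bookkeeping for the high-probability guarantee: there are $O(d^2)$ independent noise sources ($\xi_i$ for $i\in[d^2]$) plus the augmentation integrals, and one must allocate the union-bound budget carefully so that the per-term Gaussian tail bound at level $O(e^{-d^{1/10}}/d^2)$ still only costs a $\mathrm{polylog}(d)$ factor — this is exactly why the exponents in the statement are fractional ($13/15$, $4/5$) rather than clean, and getting these to match requires tracking precisely how many factors of $\xi_i$ (zero, one, or two) appear in each term and grouping terms of the same order before applying concentration. A secondary technical point is that the bounds must hold uniformly for $w_1\in D_1^{B_0}(\tau)$ (not just at a single point), which for the Hessian and third-derivative bounds requires either a Lipschitz-in-$w_1$ argument plus an $\epsilon$-net over the compact region $D_1^{B_0}(\tau)$, or simply noting that all the $w_1$-dependent factors ($\sigma^{(k)}$ and $\|w_1\|_2$) are uniformly bounded on that region so the estimates are automatically uniform.
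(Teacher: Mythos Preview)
Your plan is broadly correct and follows the same skeleton as the paper: apply Lagrange's Mean Value Theorem to peel the noise $\rho\xi_i,\xi_{\text{aug}},\xi'_{\text{aug}}$ out of the sigmoid, exploit boundedness of $\sigma,\sigma',\sigma'',\sigma'''$, and control the remaining Gaussian terms by tail and union bounds. Two places where your proposal diverges from the paper are worth noting.

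First, the paper is cruder than you anticipate: it does \emph{not} use any Bernstein/Hoeffding concentration over the $n$ samples. It simply establishes $|\xi_i^{(k)}|\le d^{1/10}$ and $|w_1^\top\xi_i|\le\|w_1\|_2 d^{1/10}$ uniformly for all $i$ via a union bound, and then upper-bounds each average $\tfrac{1}{n}\sum_i|\rho w_1^\top\xi_i|$ by the common per-term bound with no averaging gain. Consequently the fractional exponents $4/5$ and $13/15$ are not the result of delicate union-bound budgeting as you suggest; they are pure algebra from rewriting, after substituting $\tau=d^{1/10}$ and $\rho=d^{-1.5}$, the per-entry bounds $\tau^3\rho d^{1/10}=\rho^{4/5}d^{1/10}$ (Hessian) and $\sqrt{d}\,\tau^2\rho d^{1/10}=\rho^{13/15}d^{6/10}$ (gradient). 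Your concentration route would work and would in fact give tighter bounds, but it is unnecessary here.

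Second, a minor technical gap: after the one-step MVT expansion $\sigma(\cdot)=\sigma(w_j^\top\widetilde{x}_i)+\sigma'(\zeta)\,w_j^\top(\rho\xi_i+\xi_{\text{aug}})$, the intermediate point $\zeta$ depends on $\xi_{\text{aug}}$, so the remainder is not genuinely ``linear in $\xi_{\text{aug}}$'' and its expectation does not vanish. The paper never uses cancellation; it just bounds absolute values, e.g.\ $\mathbb{E}_{\xi_{\text{aug}}}|w_1^\top\xi_{\text{aug}}|=O(\rho\|w_1\|_2)$. Relatedly, on $D_1^{B_0}(\tau)$ one has $\|w_1\|_2=O(\tau)$ (since $w_1^{(2)}$ can be $\Theta(\tau)$) and $\|\widetilde{x}_i\|_2=O(\tau)$, not $O(1)$; these extra $\tau$-factors are exactly what generate the $\tau^2,\tau^3,\tau^4$ coefficients in the gradient, Hessian, and third-derivative entrywise bounds before aggregation over the $d$, $d^2$, or $d^3$ entries. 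Your remark that uniformity in $w_1$ follows from uniform boundedness of the $\sigma^{(k)}$ and $\|w_1\|$ factors is correct and is precisely what the paper does; no $\epsilon$-net is needed.
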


Lemma \ref{lem:dislocation_noise} and Lemma \ref{lem:data_noise} deal with error terms due to expectation, datapoint noise terms, and data augmentation noise terms, allowing us to convert $\widetilde{L}$ to $L$. Finally, Lemma \ref{lem:strongconvex_noise} combines the above result to characterize the landscape of $L$ in the region $D_1^{B_0}(\tau)\times D_2^{B_0}(\tau)$.

\begin{restatable}{lemma}{strongconvexnoise}\label{lem:strongconvex_noise}
For $\tau= d^{\frac{1}{10}}, \rho= \frac{1}{d^{1.5}}$ and $n=d^2$, when $(w_1,w_2)\in D_1^{B_0}(\tau)\times D_2^{B_0}(\tau)$, with probability $1-O\left(e^{-d^{\frac{1}{10}}}\right)$ and large enough $d$, $L$ is $(2\alpha-\rho^{\frac{4}{5}}d^{\frac{11}{10}})$-strongly convex and $(2\alpha+\tau^2+1.5+\rho^{\frac{4}{5}}d^{\frac{11}{10}})$-smooth. At the same time, $\nabla^2 L(w_1)$ is $L_H$-Lipschitz continuous Hessian where $L_H=\Theta(\sqrt{d})$.
\end{restatable}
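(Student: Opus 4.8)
The plan is to read off the landscape of $L$ by treating it as a small perturbation of $\widetilde L$, whose landscape is already pinned down by Lemma~\ref{lem:strongconvex}. First I would note that, since $\langle\sigma(Wx'),\sigma(Wx'')\rangle=\sigma(w_1^\top x')\sigma(w_1^\top x'')+\sigma(w_2^\top x')\sigma(w_2^\top x'')$, each of $L$, $\widehat L$, $\widetilde L$ splits into a sum of a function of $w_1$ alone and a function of $w_2$ alone, so the full Hessian is block diagonal and it suffices to control the $w_1$-block $\nabla^2_1 L$ (the $w_2$-block then follows by symmetry). On the region $D_1^{B_0}(\tau)\times D_2^{B_0}(\tau)$ I would write the telescoping decomposition
\begin{align}
\nabla^2_1 L=\nabla^2_1\widetilde L+\big(\nabla^2_1\widehat L-\nabla^2_1\widetilde L\big)+\big(\nabla^2_1 L-\nabla^2_1\widehat L\big)\,.\notag
\end{align}

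Next I would bound the two perturbation blocks in spectral norm. Using $\|\cdot\|_2\le\|\cdot\|_F$, Lemma~\ref{lem:dislocation_noise} gives $\|\nabla^2_1\widehat L-\nabla^2_1\widetilde L\|_2\le O(\tau^2 n^{-9/20})=O(d^{-7/10})$ and Lemma~\ref{lem:data_noise} gives $\|\nabla^2_1 L-\nabla^2_1\widehat L\|_2\le O(\rho^{4/5}d^{11/10})=O(d^{-1/10})$ under the choices $\tau=d^{1/10}$, $\rho=d^{-3/2}$, $n=d^2$; the second dominates, so the total Hessian perturbation is $\Delta:=\|\nabla^2_1 L-\nabla^2_1\widetilde L\|_2\le O(\rho^{4/5}d^{11/10})$. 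By Weyl's inequality (the matrix eigenvalue perturbation bound of \citet{kahan1975spectra}), every eigenvalue of $\nabla^2_1 L$ lies within $\Delta$ of an eigenvalue of $\nabla^2_1\widetilde L$; combined with $2\alpha I\preceq\nabla^2_1\widetilde L\preceq(2\alpha+\tau^2+1.5)I$ from Lemma~\ref{lem:strongconvex} this yields $(2\alpha-\Delta)I\preceq\nabla^2_1 L\preceq(2\alpha+\tau^2+1.5+\Delta)I$ on the region, which is the claimed strong-convexity/smoothness statement (``large enough $d$'' is needed precisely so that $\Delta=O(d^{-1/10})$ stays below $2\alpha=1/400$). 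The probability $1-O(e^{-d^{1/10}})$ is the union of the success events of Lemmas~\ref{lem:dislocation_noise} and~\ref{lem:data_noise}, the latter dominating; since these events are determined by the datapoint noises $\{\xi_i\}$ and the class counts $\{n_l\}$ only and do not involve the argument $w_1$, the bounds hold uniformly over the region on a single event.

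For the Lipschitz-Hessian claim I would invoke the third-order bound $\|\nabla^3_1 L(w_1,w_2)\|_2\le\Theta(\sqrt d)$ of Lemma~\ref{lem:data_noise}: for any $w_1,w_1'$, integrating $\nabla^3_1 L$ along the segment between them gives $\|\nabla^2_1 L(w_1)-\nabla^2_1 L(w_1')\|_2\le\big(\sup\|\nabla^3_1 L\|_2\big)\|w_1-w_1'\|_2\le\Theta(\sqrt d)\,\|w_1-w_1'\|_2$, so $L_H=\Theta(\sqrt d)$.

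I expect the main obstacle to be bookkeeping rather than a new idea: one must verify that the per-point Hessian perturbation bounds of Lemmas~\ref{lem:dislocation_noise} and~\ref{lem:data_noise} genuinely hold simultaneously for all $w_1\in D_1^{B_0}(\tau)$ on one high-probability event (i.e.\ that the bad events there are independent of the argument), and that after passing from Frobenius to spectral norm and plugging in the exponents the residual $\Delta$ is still strictly below $2\alpha$; once this is in place, the conclusion is an immediate application of Weyl's inequality together with the fundamental theorem of calculus.
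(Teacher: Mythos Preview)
Your proposal is correct and follows essentially the same route as the paper: decompose $\nabla^2_1 L$ as $\nabla^2_1\widetilde L$ plus the two perturbation pieces from Lemmas~\ref{lem:dislocation_noise} and~\ref{lem:data_noise}, invoke the matrix eigenvalue perturbation result of \citet{kahan1975spectra} to transfer the strong-convexity/smoothness bounds of Lemma~\ref{lem:strongconvex} through the perturbation, and read off $L_H$ from the third-order bound in Lemma~\ref{lem:data_noise}. The only cosmetic difference is that you cite Weyl's inequality directly on the spectral norm, whereas the paper phrases the same step through the Frobenius-norm sum-of-squares version (Lemma~\ref{lem:noiseeig}); your formulation is if anything a touch cleaner, and your explicit remarks about the block-diagonal split in $(w_1,w_2)$ and about the high-probability event being independent of the argument $w_1$ are points the paper leaves implicit.
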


Using Lemma \ref{lem:strongconvex_noise}, we can prove that the local minimum $(w_1^*, w_2^*)$ is not far away from the solution $(\widetilde{w}_1^*, \widetilde{w}_2^*)$ in Lemma \ref{lem:existence_no_noise}. Hence the projection $|\Pi e_j|$ of $e_j, \forall j\in[2]$ on the plant spanned by $w_1^*$ and $w_2^*$ will be very close to $1$. Lemma \ref{lem:projection} formally describes this phenomenon.
\begin{lemma}\label{lem:projection}
 Let $(\widetilde{w}_1^*, \widetilde{w}_2^*)$ be the solution in Lemma \ref{lem:existence_no_noise} and $W=[w_1, w_2]\in \mathbb{R}^{2\times d}$. When $\|w_1-\widetilde{w}_1^*\|_2\leq d^{-\frac{1}{2}}$ and $\|w_2-\widetilde{w}_2^*\|_2\leq d^{-\frac{1}{2}}, |\Pi_W e_j|\geq 1-O(\tau^3 d^{-\frac{1}{2}}), \forall j\in [2].$
\end{lemma}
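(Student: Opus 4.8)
The plan is to derive this lemma from an elementary, purely deterministic perturbation estimate for the $2$-dimensional plane spanned by the two rows of $W$, using only the location of $(\widetilde w_1^*,\widetilde w_2^*)$ supplied by Lemma~\ref{lem:existence_no_noise} and Theorem~\ref{thm:Convergence_noiseepexp_pro}. First I would record that the unperturbed plane is exactly $\mathrm{span}\{e_1,e_2\}$: writing $\widetilde w_1^*=a_1e_1+b_1e_2$ and $\widetilde w_2^*=-a_2e_1+b_2e_2$ with $a_1,a_2\in[3.1,3.9]$ and $b_1,b_2\ge 9/\tau$ (all further coordinates vanish by the definition of $\widetilde D_1(\tau)\times\widetilde D_2(\tau)$), the two vectors are linearly independent since their first coordinates have opposite signs while their second coordinates are both positive, so $\mathrm{span}\{\widetilde w_1^*,\widetilde w_2^*\}=\mathrm{span}\{e_1,e_2\}$ and each $e_j$ expands as $e_j=c_j^{(1)}\widetilde w_1^*+c_j^{(2)}\widetilde w_2^*$. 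Solving the $2\times2$ system gives $c_1=(b_2,-b_1)^\top/(a_1b_2+a_2b_1)$ and $c_2=(a_2,a_1)^\top/(a_1b_2+a_2b_1)$, hence
\[
\|c_1\|_1\le\frac{b_1+b_2}{a_1b_2+a_2b_1}\le\frac{1}{\min\{a_1,a_2\}}=O(1),\qquad \|c_2\|_1\le\frac{a_1+a_2}{a_1b_2+a_2b_1}\le\frac{\tau}{9}=O(\tau),
\]
where the bound on $\|c_2\|_1$ uses $b_1,b_2\ge 9/\tau$.

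Next I would transfer these expansions to the perturbed plane. For each $j$ the point $p_j:=c_j^{(1)}w_1+c_j^{(2)}w_2$ lies in $\mathrm{span}\{w_1,w_2\}$, so by the hypothesis $\|w_i-\widetilde w_i^*\|_2\le d^{-1/2}$,
\[
\mathrm{dist}\bigl(e_j,\mathrm{span}\{w_1,w_2\}\bigr)\le\|e_j-p_j\|_2=\bigl\|c_j^{(1)}(\widetilde w_1^*-w_1)+c_j^{(2)}(\widetilde w_2^*-w_2)\bigr\|_2\le\|c_j\|_1\,d^{-1/2}.
\]
Thus the distance of $e_1$ to $\mathrm{span}\{w_1,w_2\}$ is $O(d^{-1/2})$ and that of $e_2$ is $O(\tau d^{-1/2})$. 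Since $\|e_j\|_2=1$ and $\Pi_W$ is the orthogonal projection onto $\mathrm{span}\{w_1,w_2\}$, the Pythagorean identity gives $|\Pi_We_j|=\sqrt{1-\mathrm{dist}(e_j,\mathrm{span}\{w_1,w_2\})^2}\ge 1-\mathrm{dist}(e_j,\mathrm{span}\{w_1,w_2\})\ge 1-O(\tau d^{-1/2})$, which is at least $1-O(\tau^3d^{-1/2})$ because $\tau\ge 7$; in fact this proves a slightly sharper bound than stated.

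I do not expect a real obstacle here: the statement is deterministic and linear-algebraic, needing no large-deviation input. The one point that requires care is the coefficient estimate in the first step — the basis $\{\widetilde w_1^*,\widetilde w_2^*\}$ of the target plane is ill-conditioned, with $\min_i|\widetilde w_i^{*(2)}|$ as small as $9/\tau$, and it is precisely this ill-conditioning (equivalently, the small second singular value of $\widetilde W=[\widetilde w_1^*,\widetilde w_2^*]$) that produces the polynomial-in-$\tau$ loss; everything else is a triangle inequality plus the Pythagorean theorem. An equivalent route would be to bound the operator-norm distance $\|\Pi_W-\Pi_{\widetilde W}\|_2$ directly via a Wedin/Davis--Kahan-type inequality with the gap supplied by $\sigma_{\min}(\widetilde W)=\Omega(1/\tau)$, and then conclude from $|\Pi_We_j|\ge|\Pi_{\widetilde W}e_j|-\|\Pi_W-\Pi_{\widetilde W}\|_2=1-\|\Pi_W-\Pi_{\widetilde W}\|_2$ together with $|\Pi_{\widetilde W}e_j|=1$.
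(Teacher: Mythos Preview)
Your argument is correct and in fact sharper than the paper's. The paper proceeds through the normal-equations form of the projector: it writes $|\Pi_W e_j|=(e_j^\top W^\top (WW^\top)^{-1}We_j)^{1/2}$, sets $\widetilde x_j=(\widetilde W^*\widetilde W^{*\top})^{-1}\widetilde W^* e_j$ and $x_j=(WW^\top)^{-1}We_j$, and bounds $|e_j^\top W^\top x_j - e_j^\top \widetilde W^{*\top}\widetilde x_j|$ by a perturbation estimate on the $2\times2$ Gram inverse (using $\|(WW^\top)^{-1}\|_2=O(1)$, $\|\delta B\|_2=O(\tau d^{-1/2})$, $\|\widetilde x_j\|_2=O(\tau)$), which is where the extra factors of $\tau$ creep in and produce $O(\tau^3 d^{-1/2})$.

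Your route is genuinely different and more elementary: rather than perturbing a matrix inverse, you expand $e_j$ explicitly in the unperturbed basis $\{\widetilde w_1^*,\widetilde w_2^*\}$, bound the $\ell^1$ norm of the coefficient vector ($O(1)$ for $e_1$, $O(\tau)$ for $e_2$), push the same linear combination to the perturbed rows to get a concrete point in $\mathrm{span}\{w_1,w_2\}$, and finish with the Pythagorean identity. This bypasses the Gram-matrix inversion entirely and yields $1-O(\tau d^{-1/2})$, strictly stronger than the stated $1-O(\tau^3 d^{-1/2})$. The paper's approach has the virtue of being mechanical and generalizing verbatim to larger row spaces, but at the cost of looser $\tau$-dependence; your approach exploits the explicit $2\times2$ structure to do better with less machinery. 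The Davis--Kahan alternative you mention at the end would also work and lands between the two in terms of generality.
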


The proof details of the above lemmas are presented in Appendix \ref{sec:useful_lemma}.

In the proof process of Theorem \ref{thm:Convergence_noiseepexp_pro}, we adopt the proof idea of the Inverse Function Theorem \cite{rudin1976principles} to prove the existence of the local minimum $(w_1^*, w_2^*)$ of $L$. Our intuition is that since $(\widetilde{w}_1^*, \widetilde{w}_2^*)$ has good properties (Lemma \ref{lem:strongconvex}), $\frac{\partial L}{\partial w_1}$ should be one-to-one in the neighborhood of $(\widetilde{w}_1^*, \widetilde{w}_2^*)$. Combined with the fact that $\left\|\frac{\partial L}{\partial w_1}\big|_{w_1=\widetilde{w}_1^*}\right\|_2$ is small, the existence of $(w_1^*, w_2^*)$ can be proved. Lemma \ref{lem:inverse} formally describes this phenomenon. Define $B(x,r)$ as the open ball with radius $r$ centered at $x$.
\begin{restatable}{lemma}{inverse}\label{lem:inverse}
Suppose $f$ is a differentiable function mapping $\mathbb{R}^d$ into $\mathbb{R}^d$, $f'(a)$ is invertible for some $a\in \mathbb{R}^d$. If $f'$ is L-Lipschitz, there is an open set $B\left(a, \frac{1}{2\|A^{-1}\|_2L}\right):=B_1$ in $\mathbb{R}^d$, $f$ is one-to-one on $B_1$. Moreover, $B\left(f(a),\frac{1}{4L\|A^{-1}\|_2^2}\right)\subset f(B_1)$.
\end{restatable}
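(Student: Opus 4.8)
The plan is to run the classical contraction-mapping proof of the Inverse Function Theorem \cite{rudin1976principles}, but tracking every constant explicitly. First I would set $A := f'(a)$ and $\lambda := \frac{1}{2\|A^{-1}\|_2}$, so that the claimed radii read $B_1 = B(a,\lambda/L)$ and $B(f(a),\lambda^2/L)$, using $\lambda^2 = \frac{1}{4\|A^{-1}\|_2^2}$. For each fixed $y\in\mathbb{R}^d$ I would introduce the auxiliary map
\begin{align}
\varphi_y(x) := x + A^{-1}\big(y - f(x)\big)\,,\notag
\end{align}
whose fixed points are exactly the solutions of $f(x)=y$. Since $\varphi_y'(x) = I - A^{-1}f'(x) = A^{-1}\big(A - f'(x)\big)$, the Lipschitz hypothesis gives $\|\varphi_y'(x)\|_2 \le \|A^{-1}\|_2 L\|x-a\|_2 < \|A^{-1}\|_2\lambda = \tfrac12$ for all $x\in B_1$; as $B_1$ is convex, the mean value inequality then shows $\varphi_y$ is a $\tfrac12$-contraction on $B_1$.

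For injectivity I would use this contraction estimate with an arbitrary fixed $y$: for $x_1,x_2\in B_1$,
\begin{align}
\|x_1-x_2\|_2 - \big\|A^{-1}\big(f(x_1)-f(x_2)\big)\big\|_2 \le \big\|\varphi_y(x_1)-\varphi_y(x_2)\big\|_2 \le \tfrac12\|x_1-x_2\|_2\,,\notag
\end{align}
which rearranges to $\|f(x_1)-f(x_2)\|_2 \ge \lambda\|x_1-x_2\|_2$. Hence $f(x_1)=f(x_2)$ forces $x_1=x_2$, i.e.\ $f$ is one-to-one on $B_1$ (in fact bi-Lipschitz from below). As a byproduct $\|I-A^{-1}f'(x)\|_2<1$ on $B_1$, so $f'(x)$ is invertible throughout $B_1$, which I would record for completeness.

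For the image statement I would fix $y$ with $\|y-f(a)\|_2 < \lambda^2/L = \frac{1}{4L\|A^{-1}\|_2^2}$ and pick a radius $r$ with $2\|A^{-1}\|_2\|y-f(a)\|_2 \le r < \lambda/L$; such an $r$ exists precisely because $2\|A^{-1}\|_2\cdot\lambda^2/L = \lambda/L$. On the closed ball $\overline{B}:=\{x:\|x-a\|_2\le r\}\subset B_1$ one then checks
\begin{align}
\|\varphi_y(x)-a\|_2 \le \big\|\varphi_y(x)-\varphi_y(a)\big\|_2 + \big\|A^{-1}\big(y-f(a)\big)\big\|_2 \le \tfrac12 r + \|A^{-1}\|_2\|y-f(a)\|_2 \le r\,,\notag
\end{align}
so $\varphi_y$ maps the complete metric space $\overline{B}$ into itself and is a $\tfrac12$-contraction there. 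The Banach fixed point theorem then supplies a fixed point $x^\star\in\overline{B}\subset B_1$ with $A^{-1}\big(y-f(x^\star)\big)=0$, i.e.\ $f(x^\star)=y$, so $y\in f(B_1)$ and therefore $B\big(f(a),\frac{1}{4L\|A^{-1}\|_2^2}\big)\subset f(B_1)$.

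The hard part is not conceptual but the constant bookkeeping in the last step: the radius $r$ must be strictly below $\lambda/L$ (so that $\overline{B}$ stays inside the \emph{open} ball $B_1$) yet at least $2\|A^{-1}\|_2\|y-f(a)\|_2$ (so that $\varphi_y$ self-maps $\overline{B}$), and these two requirements are simultaneously satisfiable exactly when the target radius equals $\lambda^2/L = \frac{1}{4L\|A^{-1}\|_2^2}$ — any larger radius would break the argument. Everything else is the standard contraction argument, with the perturbation bound $\|f'(x)-f'(a)\|_2\le L\|x-a\|_2$ playing the role usually played by continuity of $f'$, and with the mean value inequality applicable throughout since balls are convex and $f$ (hence $\varphi_y$) is differentiable.
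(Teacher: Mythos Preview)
Your proposal is correct and follows essentially the same contraction-mapping argument as the paper's proof: both set $A=f'(a)$, $\lambda=\tfrac{1}{2\|A^{-1}\|_2}$, introduce the Newton-type map $\varphi_y(x)=x+A^{-1}(y-f(x))$, show it is a $\tfrac12$-contraction on $B_1$ via the Lipschitz bound on $f'$, deduce injectivity from uniqueness of fixed points, and obtain the image inclusion by verifying that $\varphi_y$ self-maps a closed ball and applying the Banach fixed point theorem. Your version is slightly more careful in separating the closed ball of radius $r<\lambda/L$ from the open $B_1$, whereas the paper works directly on $\overline{B_1}$, but this is a cosmetic difference and the substance is identical.
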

\begin{proof}[Proof]
Let $f'(a)=A$. We choose $\lambda$ so that
\begin{align}
    2\lambda\|A^{-1}\|_2=1\,. \label{lambda_cal}
\end{align}

Let $r=\frac{\lambda}{L}$. Since $f'$ is $L$-Lipschitz, by choosing $B_1=B(a,r)$, we have:
\begin{align}
    \|f'(x)-A\|_2<\lambda , \forall x\in B_1\,. \label{radius_cal}
\end{align}

Define
\begin{align}
    \phi(x) =x+A^{-1}(y-f(x)), \forall x\in \mathbb{R}^d\,, \notag
\end{align}
as a function which associate to each $y\in \mathbb{R}^d$.

Note that $f(x)=y$ if and only if $x$ is a fixed point of $\phi$. Since $\phi'(x)=I-A^{-1}f'(x)=A^{-1}(A-f'(x))$, Eq. (\ref{lambda_cal}) and Eq. (\ref{radius_cal}) imply that
\begin{align}
    \|\phi'(x)\|_2<\frac{1}{2}, \forall x\in B_1\,.\notag
\end{align}

Hence 
\begin{align}
    \|\phi(x_1)-\phi(x_2)\|_2\leq \frac{1}{2}\|x_1-x_2\|_2,  \forall x_1, x_2\in B_1\,,\label{contraction}
\end{align}

according to Theorem 9.19 in~\citet{rudin1976principles}. It follows that $\phi$ has at most one fixed point in $B_1$, so that $f(x)=y$ for at most one $x\in B_1$. Thus $f$ is one-to-one in $B_1$.

Next, let $V=f(B_1)$. We will show that $y\in V$ whenever 
\begin{align}
    \|y-b\|_2<\lambda r\,.\notag
\end{align}

Define $\bar{B_1}$ as the closure of $B_1$. For any $x\in\bar{B_1}$, we have:
\begin{align}
    \|\varphi(x)-a\|_2&\leq \|\varphi(x)-\varphi(a)\|_2+\|\varphi(a)-a\|_2\notag\\
    &\leq \frac{1}{2}\|x-a\|_2+\|A^{-1}(y-f(a))\|_2\notag\\
    &\leq \frac{1}{2}r+\lambda\|A^{-1}\|_2r=r\,.\notag
\end{align}

The above inequality shows that $\varphi(x)\in B_1$. Eq. (\ref{contraction}) shows $\varphi$ is a contraction of $\bar{B}_1$ into $\bar{B}_1$. By the contraction principle, we know $\varphi$ has a unique fixed point, which completes the proof.
\end{proof}

\begin{proof}[Proof of Theorem 1]
To prove the existence of $\frac{\partial L}{\partial w_{1}}=0$, we use the modified version of the Inverse Function Theorem (Lemma \ref{lem:inverse}).
Lemma \ref{lem:existence_no_noise} shows that the local minimum of $\widetilde{L}$ is
\begin{center}
    $\widetilde{w}_{1}^{*(1)}\in[3.1,3.9]$, $\widetilde{w}_{1}^{*(1)}=-\widetilde{w}_{2}^{*(1)}$, $\tau \widetilde{w}_{1}^{*(2)} = \tau \widetilde{w}_{2}^{*(2)}\ge 9$.
\end{center}

From Lemma \ref{lem:dislocation_noise} and Lemma \ref{lem:data_noise} we know
\begin{align}
\left\|\frac{\partial L}{\partial w_1}\bigg|_{w_1=\widetilde{w}_1^*}\right\|_2\leq \left\|\frac{\partial \widehat{L}-\widetilde{L}}{\partial w_1}\Big|_{w_1=\widetilde{w}_1^{*}}\right\|_2+\left\|\frac{\partial L-\widehat{L}}{\partial w_1}\Big|_{w_1=\widetilde{w}_1^{*}}\right\|_2=O(\tau n^{-\frac{9}{20}})+O(\rho^{\frac{13}{15}}d^{\frac{6}{10}})=O(d^{-\frac{7}{10}})\,. \notag
\end{align}
 
In order to use Lemma \ref{lem:inverse}, we take $f=\frac{\partial L}{\partial w_1}$, $a=\widetilde{w}_1^{*}$, $L=L_H$, $A=H(\widetilde{w}_1^{*})$, $B_1= B(a,\frac{1}{2\|A^{-1}\|_2L})$, and the ball $B(f(a),r_0)\subset f(B_1)$, where 
$r_0=\frac{1}{4L\|A^{-1}\|_2^2}$.
From Lemma \ref{lem:strongconvex_noise}, with probability $1-O(e^{-d^{\frac{1}{10}}})$, we have $\frac{1}{\|A\|_2}=\Theta(1)$ because of $\alpha$ is a constant and $L_H=\Theta(\sqrt{d})$. Therefore 
 \begin{align}
 r_0=\frac{1}{4L\|A^{-1}\|_2^2}=\Theta(d^{-\frac{1}{2}})\geq \left\|\frac{\partial L}{\partial w_1}\bigg|_{w_1=\widetilde{w}_1^{*}}\right\|_2=O(d^{-\frac{7}{10}})\notag\,.
\end{align}
 
The latest inequality shows that the equation $\frac{\partial L}{\partial w_1}=0$  has a solution $w_1^*$ which is close to the local minimum in Lemma \ref{lem:existence_no_noise}:
\begin{align}
    \|w_1^*-\widetilde{w}_1^{*}\|_2\leq O(d^{-\frac{1}{2}})\,.\notag
\end{align}

We can use the same argument to get $w_2^*$ which solves $\frac{\partial L}{\partial w_2}=0$.
It is also easy to check the locally $\mu$-strongly convexity and $L_m$-smoothness of $L$ in the initialization region by Lemma \ref{lem:strongconvex_noise}, so $(w_1^*,w_2^*)$ is indeed a local minimum of $L$. 

In order to get the convergence rate, we need to show that the initialization region satisfies the requirement of Lemma \ref{lem:convergence}. We know that $w_{1}^{*(1)}\in[3.1,3.9], \tau w_{1}^{*(2)}\ge 9$ and $w_{1}^{*(k)}\leq O(d^{-\frac{1}{2}}), \forall k\in[3,d]$. It is clear that the initialization region $D_1(\tau)\times D_2(\tau)$ and the region $D_1^{B_0}(\tau)\times D_2^{B_0}(\tau)$ related to $B_0$ meet the requirement of Def. \ref{def:locally_strong_convex_and_L_smooth}. Hence, we can use Lemma \ref{lem:convergence} to get the convergence rate in Theorem \ref{thm:Convergence_noiseepexp_pro}.

To finish the proof, we need to calculate the length of  projection of $e_1$ and $e_2$ on the plane spanned by $w_1^*$ and $w_2^*$. Using Lemma \ref{lem:projection}, it is clear that $|\Pi e_j| \geq 1-O(\tau^3 d^{-\frac{1}{2}}), \forall j\in[2]$.

The above process completes the proof of Theorem \ref{thm:Convergence_noiseepexp_pro}. 

\end{proof}

\begin{fact}[Modified process for tanh activation function]
We define 
\begin{align*}
    \widetilde{D}_1^{\sigma_2}(\tau) &= \{\vec{x}\in \mathbb{R}^d|x^{(1)}\in[2.75,3.1], \tau x^{(2)}\in[6.1,+\infty), x^{(k)}=0, \forall k\in[3,d]\}\,,\\
    \widetilde{D}_2^{\sigma_2}(\tau) &= \{\vec{x}\in \mathbb{R}^d|x^{(1)}\in[-3.1,-2.75], \tau x^{(2)}\in[6.1,+\infty), x^{(k)}=0, \forall k\in[3,d]\}
\end{align*}
as the region of $\widetilde{w}_1^{\sigma_2*}$ and $\widetilde{w}_2^{\sigma_2*}$. Further, we define 
\begin{align*}
    D_1^{\sigma_2}(\tau)&=\{\vec{x}\in \mathbb{R}^d|x^{(1)}\in(2.75,3.1), \tau x^{(2)}\in(5.75,6.1), x^{(k)} \in(-\frac{3}{d^{0.49}},\frac{3}{d^{0.49}}), \forall k\in [3,d]\}\,,\\
    D_2^{\sigma_2}(\tau)&=\{\vec{x}\in \mathbb{R}^d|x^{(1)}\in(-3.1,-2.75), \tau x^{(2)}\in(5.75,6.1), x^{(k)} \in(-\frac{3}{d^{0.49}},\frac{3}{d^{0.49}}), \forall k\in [3,d]\}
\end{align*}
as the initialization region of $w_1^{\sigma_2}$ and $w_2^{\sigma_2}$. Then it is easy to get similar results of Theorem \ref{thm:Convergence_noiseepexp_pro} by using the same process of Lemma \ref{lem:strongconvex_noise} to deal with noise terms.

\end{fact}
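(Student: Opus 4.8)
The plan is to rerun the whole chain of arguments leading to Theorem~\ref{thm:Convergence_noiseepexp_pro}, with the sigmoid $\sigma$ replaced everywhere by the tanh $\sigma_2$ and with the region constants shifted to the values in the statement. The skeleton does not change: I still go ``simplified objective $\widetilde{L}^{\sigma_2}$ $\to$ existence of a critical point $\to$ local strong convexity/smoothness $\to$ noise bounds $\to$ exact Inverse Function Theorem (Lemma~\ref{lem:inverse}) $\to$ gradient-descent rate (Lemma~\ref{lem:convergence}) $\to$ projection bound (Lemma~\ref{lem:projection})''. The reason everything transfers is the identity $\sigma_2(x)=2\sigma(2x)-1$: from it one reads off that $\sigma_2$ is odd, strictly increasing, bounded, with $\sigma_2'(x)=1-\sigma_2(x)^2=\mathrm{sech}^2(x)$ and $\sigma_2'',\sigma_2'''$ all uniformly bounded — exactly the features of $\sigma$ the original proof uses — so every gradient, Hessian, and third-derivative formula keeps its form and only absolute constants move.

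First I would write out the gradient and Hessian of $\widetilde{L}^{\sigma_2}=-\mathbb{E}_{\widetilde{x}}[\langle\sigma_2(W\widetilde{x}),\sigma_2(W\widetilde{x})\rangle]+\alpha\|W\|_F^2$ in the style of Appendix~\ref{sec:detailed_formulas}. Then, in analogy with Lemma~\ref{lem:existence_no_noise}, I would solve $\partial\widetilde{L}^{\sigma_2}/\partial w_1^{(1)}=0$ and $\partial\widetilde{L}^{\sigma_2}/\partial w_1^{(2)}=0$ inside $\widetilde{D}_1^{\sigma_2}(\tau)$ and apply the intermediate value principle twice to produce $\widetilde{w}_1^{\sigma_2*}$ with $\widetilde{w}_1^{\sigma_2*(1)}\in[2.75,3.1]$, $\tau\widetilde{w}_1^{\sigma_2*(2)}\ge 6.1$, $\widetilde{w}_1^{\sigma_2*(k)}=0$ for $k\ge3$, and symmetrically $\widetilde{w}_2^{\sigma_2*}$. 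Next, in analogy with Lemma~\ref{lem:strongconvex}, I would band the eigenvalues of $\nabla^2\widetilde{L}^{\sigma_2}$ over an enlarged region $D_1^{\sigma_2,B_0}(\tau)\times D_2^{\sigma_2,B_0}(\tau)$, getting a $2\alpha$ lower bound — the ``concavity in the saturated region'' argument for $-\sigma_2(w^{\top}\widetilde{x})^2$ survives because $2(\sigma_2')^2+2\sigma_2\sigma_2''<0$ once $|w^{\top}\widetilde{x}|$ is large enough — and an $O(\tau^2)$ upper bound, which makes $(\widetilde{w}_1^{\sigma_2*},\widetilde{w}_2^{\sigma_2*})$ a local minimum lying in a region capturing both $e_1$ and $e_2$.

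The noise part is essentially free. The analogs of Lemmas~\ref{lem:dislocation_noise} and \ref{lem:data_noise} bound the expectation error (passing to $\widehat{L}^{\sigma_2}$), the datapoint noise $\rho\xi_i$, and the augmentation noise $\xi_{\text{aug}},\xi'_{\text{aug}}$ by peeling them out of the activation with Lagrange's Mean Value Theorem and dominating them with Gaussian tail bounds; that argument uses nothing about $\sigma$ beyond boundedness of $\sigma,\sigma',\sigma'',\sigma'''$, which $\sigma_2$ shares, so the same estimates hold up to constants and give the analog of Lemma~\ref{lem:strongconvex_noise}: with probability $1-O(e^{-d^{1/10}})$, $L^{\sigma_2}$ is $(2\alpha-o(1))$-strongly convex, $(2\alpha+O(\tau^2))$-smooth, and $\Theta(\sqrt d)$-Lipschitz-Hessian on $D_1^{\sigma_2,B_0}(\tau)\times D_2^{\sigma_2,B_0}(\tau)$. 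From here the proof of Theorem~\ref{thm:Convergence_noiseepexp_pro} runs verbatim: apply Lemma~\ref{lem:inverse} with $f=\partial L^{\sigma_2}/\partial w_1$, $a=\widetilde{w}_1^{\sigma_2*}$, $L=L_H=\Theta(\sqrt d)$, $A=H(\widetilde{w}_1^{\sigma_2*})$; use that $\|\partial L^{\sigma_2}/\partial w_1|_{\widetilde{w}_1^{\sigma_2*}}\|_2=O(d^{-7/10})$ is below the image radius $r_0=\Theta(d^{-1/2})$ to get a local minimum $(w_1^{\sigma_2*},w_2^{\sigma_2*})$ with $\|w_1^{\sigma_2*}-\widetilde{w}_1^{\sigma_2*}\|_2\le O(d^{-1/2})$; invoke Lemma~\ref{lem:convergence} from any start in $D_1^{\sigma_2}(\tau)\times D_2^{\sigma_2}(\tau)$ with $\eta$ and $\kappa$ recomputed from the new smoothness constant; and close with Lemma~\ref{lem:projection} for $|\Pi e_j|\ge 1-O(\tau^3 d^{-1/2})$.

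The only step that is not a formality is pinning down the shifted numerical constants $2.75$, $3.1$, $6.1$. They must be chosen so that (i) each of $\partial\widetilde{L}^{\sigma_2}/\partial w_1^{(1)}=0$ and $\partial\widetilde{L}^{\sigma_2}/\partial w_1^{(2)}=0$ genuinely changes sign across the interval endpoints when $\alpha=1/800$ — which forces me to estimate combinations of $\sigma_2$ and $\sigma_2'$ at arguments near $2.75$, $3.1$ and $\tau x^{(2)}\approx 6.1$, where $\sigma_2$ is deep enough in saturation for the activation term to be small but not so deep that the regularizer term $2\alpha w_1^{(1)}$ swamps it — and (ii) the Hessian of $\widetilde{L}^{\sigma_2}$ stays inside the claimed eigenvalue band over the whole $B_0$-enlarged region, which is what forces the intervals to be narrow. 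One also has to recheck monotonicity of the solution of $\partial\widetilde{L}^{\sigma_2}/\partial w_1^{(2)}=0$ over the relevant range, which fixes the lower threshold on $\tau$ (the $\tau=\max\{7,d^{1/10}\}$-type condition, possibly with a different absolute constant for $\sigma_2$). Since $\sigma_2$ has larger amplitude and saturates roughly twice as fast as $\sigma$ (a direct consequence of $\sigma_2(x)=2\sigma(2x)-1$), the balance point between the activation term and the regularizer moves to smaller $w^{(1)}$ and smaller $\tau w^{(2)}$, which is exactly why the tanh intervals $[2.75,3.1]$ and $[6.1,+\infty)$ sit to the left of the sigmoid intervals $[3.1,3.9]$ and $[9,+\infty)$; making those shifts quantitative is the one genuinely new computation, and it is the part I expect to be the main obstacle.
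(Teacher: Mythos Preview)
Your proposal is correct and matches the paper's approach exactly; in fact, the paper gives no proof beyond the one-line assertion that ``it is easy to get similar results of Theorem~\ref{thm:Convergence_noiseepexp_pro} by using the same process of Lemma~\ref{lem:strongconvex_noise} to deal with noise terms,'' together with the remark in Section~\ref{subsec:proof_sketch} that ``sigmoid can be viewed as a compressed version of tanh.'' Your outline is a faithful and more explicit expansion of what the paper leaves implicit, including the key identity $\sigma_2(x)=2\sigma(2x)-1$ that explains both why the structural arguments carry over and why the numerical intervals shift leftward.
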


\section{Proof of Theorem \ref{thm:SLtheoremep}}\label{sec:SLproof} 
In this section, we present the proof detail of Theorem \ref{thm:SLtheoremep}.
~\\
\textbf{Theorem 2.}(restated)\textit{ Let $w_1^{\text{SL},*}$ and $w_2^{\text{SL},*}$ be the optimal solution of $L_{\text{SL}}$. Then with probability $1-O(e^{-d^\frac{1}{10}}),$
\begin{align}
    \left(w_{1}^{\text{SL},*(2)}\right)^2+\left(w_{2}^{\text{SL},*(2)}\right)^2\leq O\left(\rho d^{\frac{1}{10}}\right)\,.\notag
\end{align}
When we choose $\rho = 1/d^{1.5}, \left(w_{1}^{\text{SL},*(2)}\right)^2+\left(w_{2}^{\text{SL},*(2)}\right)^2\leq O\left(1/d^{1.4}\right)$.}
\begin{proof}[Proof]
Using the tail bound of standard Gaussian random and the union bound, $|\xi_i|\leq d^{\frac{1}{10}}, \forall i\in[n]$ holds with probability $1-O(e^{-d^{\frac{1}{10}}})$.
We choose a reference solution $w_1^r=2e_1$ and $w_2^r=-2e_1$. Using the Lagrange’s Mean Value Theorem, we have
\begin{align*}
    \sigma\left((w_1^r)^{\top}(e_1+\rho\xi)\right) &= \sigma(2+2\rho\xi^{(1)})\notag\\&=\sigma(2)+\sigma'(\theta)(2\rho\xi^{(1)})\notag\\&\ge \sigma(2)-2\rho d^{\frac{1}{10}}\,.
\end{align*}

Similarly, 
\begin{align*}
    \sigma\left((w_2^r)^{\top}(e_1+\rho\xi)\right) &= \sigma(-2-2\rho\xi^{(1)})\notag\\&\leq \sigma(-2)+2\rho d^{\frac{1}{10}}\,.
\end{align*}

Recall that the optimal solution of Eq. (\ref{objective_SL_ep}) is $w_1^{\text{SL},*}$ and $w_2^{\text{SL},*}$. Because of the definition of $L_{\text{SL}}$, we know that
\begin{align*}
    \|w_1^{\text{SL},*}\|_2^2+\|w_2^{\text{SL},*}\|_2^2\leq \|w_1^r\|_2^2+\|w_1^r\|_2^2 =8\,.
\end{align*}

Because of the margin constraint of $L_{\text{SL}}$, we need to guarantee $\sigma(w_1^{\text{SL},*}x)-\sigma(w_2^{\text{SL},*}x)\ge 2\sigma(2)-1-\Theta(\rho d^{\frac{1}{10}})$.
At least, $w_1^{\text{SL},*},w_2^{\text{SL},*}$ need to meet this requirement when $x=e_1+\rho\xi$. Hence the following inequality needs to be satisfied:
\begin{align}
    \sigma(w_{1}^{\text{SL},*(1)}+\rho (w_1^{\text{SL},*})^{\top}\xi)\ge \sigma(w_{2}^{\text{SL},*(1)}+\rho (w_2^{\text{SL},*})^{\top}\xi)+2\sigma(2)-1-\Theta(\rho d^{\frac{1}{10}}). \label{w1margin_ep}
\end{align}

With the Lagrange’s Mean Value Theorem, we have
\begin{align*}
    \sigma(w_{1}^{\text{SL},*(1)}+\rho (w_1^{\text{SL},*})^{\top}\xi)&\leq  \sigma(w_{1}^{\text{SL},*(1)})+2\rho d^{\frac{1}{10}}\,, \\
\sigma(w_{2}^{\text{SL},*(1)}+\rho (w_2^{\text{SL},*})^{\top}\xi)&\ge  \sigma(w_{2}^{\text{SL},*(1)})-2\rho d^{\frac{1}{10}}\,. 
\end{align*}

It is clear that
\begin{align*}
    \sigma(w_{1}^{\text{SL},*(1)}+\rho (w_1^{\text{SL},*})^{\top}\xi)- \sigma(w_{2}^{\text{SL},*(1)}+\rho (w_2^{\text{SL},*})^{\top}\xi)\leq \sigma(w_{1}^{\text{SL},*(1)})-\sigma(w_{2}^{\text{SL},*(1)})+4\rho d^{\frac{1}{10}}\,.
\end{align*}

Without loss of generality, we suppose $w_{2}^{\text{SL},*(1)}\leq0\leq w_{1}^{\text{SL},*(1)}$ and $|w_{2}^{\text{SL},*(1)}|\leq |w_{1}^{\text{SL},*(1)}|$. Then a necessary requirement of Eq. (\ref{w1margin_ep}) can be written as
\begin{align}
    \sigma(w_{1}^{\text{SL},*(1)})+\sigma(-w_{2}^{\text{SL},*(1)})\ge 2\sigma(2)-\Theta(\rho d^{\frac{1}{10}})\,. \notag
\end{align}

Because $\sigma'(x)$ is deceasing when $x\ge0$ and $\left(w_{1}^{\text{SL},*(1)}\right)^2+\left(w_{2}^{\text{SL},*(1)}\right)^2\leq 8,$
\begin{align*}
    w_{1}^{\text{SL},*(1)} = 2, 
    \sigma(-w_{2}^{\text{SL},*(1)})\ge \sigma(2)-\Theta(\rho d^{\frac{1}{10}})\,.
\end{align*}

Hence we need $-w_{2}^{\text{SL},*(1)}\geq 2-\Theta(\rho d^{\frac{1}{10}})$.
With $\|w_1^{\text{SL},*}\|_2^2+\|w_2^{\text{SL},*}\|_2^2 \leq 8$, it is clear that $\left(w_{1}^{\text{SL},*(2)}\right)^2+\left(w_{2}^{\text{SL},*(2)}\right)^2\leq O(\rho d^{\frac{1}{10}})$.

Above all, we have finished the proof of Theorem \ref{thm:SLtheoremep}.
\end{proof}

\section{Detailed Formulas}\label{sec:detailed_formulas}
In this section, we present the detailed formulas of $\frac{\partial L}{\partial w_j^{(k)}}, \frac{\partial \widetilde{L}}{\partial w_j^{(k)}}$, and $\frac{\partial \widehat{L}}{\partial w_j^{(k)}}, \forall j\in[2], k\in [d]$. We take the gradient of $w_{1}$ as an example. Replacing $w_{1}^{(k)}$ with $w_{2}^{(k)}, \forall k\in[d]$, we get $\frac{\partial \widetilde{L}}{\partial w_{2}^{(k)}}, \forall k\in [d]$.
~\\
\textbf{The detailed formulas of $\frac{\partial L}{\partial w_j^{(k)}}, \forall j\in[2], k\in [d]$.}  
\begin{align}
    &\frac{\partial L}{\partial w_{1}^{(1)}}=\notag\\ & -\frac{1}{n}\bigg( \sum_{i=1}^{n_1} \mathbb{E}_{\xi_{\text{aug}},\xi^{\prime}_{\text{aug}}}\left[(1+\rho\xi_{i}^{(1)}+\xi_{\text{aug}}^{(1)})\sigma^{\prime}\left(w_{1}^{(1)}+w_1^{\top}(\rho\xi_{i}+\xi_{\text{aug}})\right)\sigma\left(w_{1}^{(1)}+w_1^{\top}(\rho\xi_{i}+\xi_{\text{aug}}^{\prime})\right)\right.\notag\\&\left.\qquad\qquad\qquad\quad\quad\quad\quad+(1+\rho\xi_{i}^{(1)}+\xi_{\text{aug}}^{\prime(1)})\sigma\left(w_{1}^{(1)}+w_1^{\top}(\rho\xi_{i}+\xi_{\text{aug}})\right)\sigma^{\prime}\left(w_{1}^{(1)}+w_1^{\top}(\rho\xi_{i}+\xi_{\text{aug}}^{\prime})\right)\right]\notag\\
    &\quad+\sum_{i=n_1+1}^{n_1+n_2} \mathbb{E}_{\xi_{\text{aug}},\xi^{\prime}_{\text{aug}}}\left[(1+\rho\xi_{i}^{(1)}+\xi_{\text{aug}}^{(1)})\sigma^{\prime}\left(w_{1}^{(1)}+\tau w_{1}^{(2)}+w_1^{\top}(\rho\xi_{i}+\xi_{\text{aug}})\right)\sigma\left(w_{1}^{(1)}+\tau w_{1}^{(2)}+w_1^{\top}(\rho\xi_{i}+\xi_{\text{aug}}^{\prime})\right)\right.\notag\\&\left.\qquad\qquad\qquad\quad\quad\qquad+(1+\rho\xi_{i}^{(1)}+\xi_{\text{aug}}^{\prime(1)})\sigma\left(w_{1}^{(1)}+\tau w_{1}^{(2)}+w_1^{\top}(\rho\xi_{i}+\xi_{\text{aug}})\right)\sigma^{\prime}\left(w_{1}^{(1)}+\tau w_{1}^{(2)}+w_1^{\top}(\rho\xi_{i}+\xi_{\text{aug}}^{\prime})\right)\right]\notag\\
  &\quad+\sum_{i=n_1+n_2+1}^{n_1+n_2+n_3} \mathbb{E}_{\xi_{\text{aug}},\xi^{\prime}_{\text{aug}}}\left[(-1+\rho\xi_{i}^{(1)}+\xi_{\text{aug}}^{(1)})\sigma^{\prime}\left(-w_{1}^{(1)}+w_1^{\top}(\rho\xi_{i}+\xi_{\text{aug}})\right)\sigma\left(-w_{1}^{(1)}+w_1^{\top}(\rho\xi_{i}+\xi_{\text{aug}}^{\prime})\right)\right.\notag\\&\left.\qquad\qquad\qquad\qquad\qquad\quad+(-1+\rho\xi_{i}^{(1)}+\xi_{\text{aug}}^{\prime(1)})\sigma\left(-w_{1}^{(1)}+w_1^{\top}(\rho\xi_{i}+\xi_{\text{aug}})\right)\sigma^{\prime}\left(-w_{1}^{(1)}+w_1^{\top}(\rho\xi_{i}+\xi_{\text{aug}}^{\prime})\right)\right]\notag\\
  &\quad+\sum_{i=n-n_4+1}^{n} \mathbb{E}_{\xi_{\text{aug}},\xi^{\prime}_{\text{aug}}}\left[(-1+\rho\xi_{i}^{(1)}+\xi_{\text{aug}}^{(1)})\sigma^{\prime}\left(-w_{1}^{(1)}+\tau w_{1}^{(2)}+w_1^{\top}(\rho\xi_{i}+\xi_{\text{aug}})\right)\sigma\left(-w_{1}^{(1)}+\tau w_{1}^{(2)}+w_1^{\top}(\rho\xi_{i}+\xi_{\text{aug}}^{\prime})\right)\right.\notag\\&\left.\qquad\qquad\qquad\quad\quad+(-1+\rho\xi_{i}^{(1)}+\xi_{\text{aug}}^{\prime(1)})\sigma\left(-w_{1}^{(1)}+\tau w_{1}^{(2)}+w_1^{\top}(\rho\xi_{i}+\xi_{\text{aug}})\right)\sigma^{\prime}\left(-w_{1}^{(1)}+\tau w_{1}^{(2)}+w_1^{\top}(\rho\xi_{i}+\xi_{\text{aug}}^{\prime})\right)\right]\bigg)\notag\\&\quad+2\alpha w_{1}^{(1)}\,,\label{gradient}\\
      &\frac{\partial L}{\partial w_{1}^{(2)}}=\notag\\ &-\frac{1}{n}\bigg( \sum_{i=1}^{n_1} \mathbb{E}_{\xi_{\text{aug}},\xi^{\prime}_{\text{aug}}}\left[(\rho\xi_{i}^{(2)}+\xi_{\text{aug}}^{(2)})\sigma^{\prime}\left(w_{1}^{(1)}+w_1^{\top}(\rho\xi_{i}+\xi_{\text{aug}})\right)\sigma\left(w_{1}^{(1)}+w_1^{\top}(\rho\xi_{i}+\xi_{\text{aug}}^{\prime})\right)\right.\notag\\&\left.\qquad\qquad\qquad\quad\quad\quad\quad+(\rho\xi_{i}^{(2)}+\xi_{\text{aug}}^{\prime(2)})\sigma\left(w_{1}^{(1)}+w_1^{\top}(\rho\xi_{i}+\xi_{\text{aug}})\right)\sigma^{\prime}\left(w_{1}^{(1)}+w_1^{\top}(\rho\xi_{i}+\xi_{\text{aug}}^{\prime})\right)\right]\notag\\
    &\quad+\sum_{i=n_1+1}^{n_1+n_2} \mathbb{E}_{\xi_{\text{aug}},\xi^{\prime}_{\text{aug}}}\left[(\tau+\rho\xi_{i}^{(2)}+\xi_{\text{aug}}^{(2)})\sigma^{\prime}\left(w_{1}^{(1)}+\tau w_{1}^{(2)}+w_1^{\top}(\rho\xi_{i}+\xi_{\text{aug}})\right)\sigma\left(w_{1}^{(1)}+\tau w_{1}^{(2)}+w_1^{\top}(\rho\xi_{i}+\xi_{\text{aug}}^{\prime})\right)\right.\notag\\&\left.\qquad\qquad\qquad\quad\quad\qquad+(\tau+\rho\xi_{i}^{(2)}+\xi_{\text{aug}}^{\prime(2)})\sigma\left(w_{1}^{(1)}+\tau w_{1}^{(2)}+w_1^{\top}(\rho\xi_{i}+\xi_{\text{aug}})\right)\sigma^{\prime}\left(w_{1}^{(1)}+\tau w_{1}^{(2)}+w_1^{\top}(\rho\xi_{i}+\xi_{\text{aug}}^{\prime})\right)\right]\notag\\
  &\quad+\sum_{i=n_1+n_2+1}^{n_1+n_2+n_3} \mathbb{E}_{\xi_{\text{aug}},\xi^{\prime}_{\text{aug}}}\left[(\rho\xi_{i}^{(2)}+\xi_{\text{aug}}^{(2)})\sigma^{\prime}\left(-w_{1}^{(1)}+w_1^{\top}(\rho\xi_{i}+\xi_{\text{aug}})\right)\sigma\left(-w_{1}^{(1)}+w_1^{\top}(\rho\xi_{i}+\xi_{\text{aug}}^{\prime})\right)\right.\notag\\&\left.\qquad\qquad\qquad\qquad\qquad\quad+(\rho\xi_{i}^{(2)}+\xi_{\text{aug}}^{\prime(2)})\sigma\left(-w_{1}^{(1)}+w_1^{\top}(\rho\xi_{i}+\xi_{\text{aug}})\right)\sigma^{\prime}\left(-w_{1}^{(1)}+w_1^{\top}(\rho\xi_{i}+\xi_{\text{aug}}^{\prime})\right)\right]\notag\\
  &\quad+\sum_{i=n-n_4+1}^{n} \mathbb{E}_{\xi_{\text{aug}},\xi^{\prime}_{\text{aug}}}\left[(\tau+\rho\xi_{i}^{(2)}+\xi_{\text{aug}}^{(2)})\sigma^{\prime}\left(-w_{1}^{(1)}+\tau w_{1}^{(2)}+w_1^{\top}(\rho\xi_{i}+\xi_{\text{aug}})\right)\sigma\left(-w_{1}^{(1)}+\tau w_{1}^{(2)}+w_1^{\top}(\rho\xi_{i}+\xi_{\text{aug}}^{\prime})\right)\right.\notag\\&\left.\qquad\qquad\qquad\quad\quad+(\tau+\rho\xi_{i}^{(2)}+\xi_{\text{aug}}^{\prime(2)})\sigma\left(-w_{1}^{(1)}+\tau w_{1}^{(2)}+w_1^{\top}(\rho\xi_{i}+\xi_{\text{aug}})\right)\sigma^{\prime}\left(-w_{1}^{(1)}+\tau w_{1}^{(2)}+w_1^{\top}(\rho\xi_{i}+\xi_{\text{aug}}^{\prime})\right)\right]\bigg)\notag\\&\quad+2\alpha w_{1}^{(2)}\,,\notag
\end{align}
where $n =n_1+n_2+n_3+n_4$.

~\\
\textbf{The detailed formulas of $\frac{\partial \widetilde{L}}{\partial w_j^{(k)}}, \forall j\in[2], k\in [d] $.}

For simplicity, we define $h(x)= \sigma'(x)\sigma(x)=\sigma^2(x)(1-\sigma(x))$ as the gradient of $\frac{1}{2}\sigma^2(x)$. Then the gradient of $\widetilde{L}$ can be written as 
\begin{align}
    \frac{\partial \widetilde{L}}{\partial w_{1}^{(1)}} &= -\frac{1}{2}(h(w_{1}^{(1)}+\tau w_{1}^{(2)})-h(-w_{1}^{(1)}+\tau w_{1}^{(2)})+\sigma(w_{1}^{(1)})\sigma(-w_{1}^{(1)})(\sigma(w_{1}^{(1)})-\sigma(-w_{1}^{(1)})))+2\alpha w_{1}^{(1)},\notag\\
    \frac{\partial \widetilde{L}}{\partial w_{1}^{(2)}} &= -\frac{\tau}{2}(h(w_{1}^{(1)}+\tau w_{1}^{(2)})+h(-w_{1}^{(1)}+\tau w_{1}^{(2)})+2\alpha w_{1}^{(2)}\,,\notag\\
    \frac{\partial \widetilde{L}}{\partial w_{1}^{(k)}} &= 2\alpha w_{1}^{(k)}\,,  \forall k\ge 3\,.\notag
\end{align}

~\\
\textbf{The detailed formulas of $\frac{\partial \widehat{L}}{\partial w_j^{(k)}}, \forall j\in[2], k\in [d] $.}
\begin{align}
    \frac{\partial \widehat{L}}{\partial w_{1}^{(1)}} &= -\frac{1}{n}\left( \sum_{i=1}^{n_1} 2\sigma'(w_{1}^{(1)})\sigma(w_{1}^{(1)})
    +\sum_{i=1}^{n_2} 2\sigma'(w_{1}^{(1)}+\tau w_{1}^{(2)})\sigma(w_{1}^{(1)}+\tau w_{1}^{(2)})\right.\notag\\
    &\left.\qquad\qquad\quad-\sum_{i=1}^{n_3} 2\sigma'(-w_{1}^{(1)})\sigma(-w_{1}^{(1)})
    -\sum_{i=1}^{n_4} 2\sigma'(-w_{1}^{(1)}+\tau w_{1}^{(2)})\sigma(-w_{1}^{(1)}+\tau w_{1}^{(2)})\right)+2\alpha w_{1}^{(1)}\,, \notag\\
    \frac{\partial \widehat{L}}{\partial w_{1}^{(2)}} &= -\frac{\tau}{n}\left(\sum_{i=1}^{n_2} 2\sigma'(w_{1}^{(1)}+\tau w_{1}^{(2)})\sigma(w_{1}^{(1)}+\tau w_{1}^{(2)})+\sum_{i=1}^{n_4} 2\sigma'(-w_{1}^{(1)}+\tau w_{1}^{(2)})\sigma(-w_{1}^{(1)}+\tau w_{1}^{(2)})\right)+2\alpha w_{1}^{(2)}\,, \notag\\
    \frac{\partial \widehat{L}}{\partial w_{1}^{(k)}} &= 2\alpha w_{1}^{(k)},  \forall k\ge 3\,,\notag
\end{align}
where $n =n_1+n_2+n_3+n_4$.

\section{Auxiliary Lemmas}\label{sec:useful_lemma}
~\\
\textbf{Lemma 1.} (restated) \textit{For $\alpha =1/800, \tau \ge 7$, the equation $\frac{\partial \widetilde{L}}{\partial W} = 0$ has a solution $(\widetilde{w}_{1}^{*},\widetilde{w}_{2}^{*})$, which satisfies $(\widetilde{w}_{1}^{*},\widetilde{w}_{2}^{*})\in \widetilde{D}_1(\tau)\times \widetilde{D}_2(\tau)$.}
\begin{proof}[Proof] 
It is clear that $w_1\in \widetilde{D}_1(\tau)$ contains only two non-zero elements $w_{1}^{(1)}$ and $\tau w_{1}^{(2)}$, hence we only need to focus on these two elements. For simplicity, we define $\widetilde{D}_{1}^{(1)}=[3.1,3.9]$ as the region of $w_{1}^{(1)}$ and $\widetilde{D}_{1}^{(2)}=[9,+\infty)$ as the region of $\tau w_{1}^{(2)}$.

We firstly focus on the equation $\frac{\partial \widetilde{L}}{\partial w_{1}^{(2)}}=0$, which can be rewritten as:
\begin{align}
    \frac{4\alpha}{\tau}w_{1}^{(2)} 
    = h(w_{1}^{(1)}+\tau w_{1}^{(2)})+h(-w_{1}^{(1)}+\tau w_{1}^{(2)})\,.\label{original_eqw12}
\end{align}

For convenience, let $x=\tau w_{1}^{(2)}$. For any fixed $w_{1}^{(1)}\in \widetilde{D}_{1}^{(1)}$, we solve the following equation:
\begin{align}
    \varphi(x)\triangleq\frac{4\alpha}{\tau^2} x-(h(x+w_{1}^{(1)})+h(x-w_{1}^{(1)}))=0\label{eq_w_12_without noise}\,.
\end{align}

To solve the above equation, we analyze the monotonicity of $\varphi(x)$. When $(w_{1}^{(1)},x)\in \widetilde{D}_{1}^{(1)}\times \widetilde{D}_{1}^{(2)}$ and $\tau\ge 7$, it is clear that
\begin{align}
    \frac{\partial}{\partial x} (h(w_{1}^{(1)}+x)+h(-w_{1}^{(1)}+x))<0\,, \label{secondfact}
\end{align}
which means $\varphi'(x)>\frac{4\alpha}{\tau^2}>0$. Because $h(x)=\sigma'(x)\sigma(x)$ is a bounded function, it is clear that when $x\rightarrow +\infty$, $\varphi(x)\rightarrow +\infty.$ For any fixed $w_{1}^{(1)}\in \widetilde{D}_{1}^{(1)}$, it is clear that
\begin{align}
  \frac{36\alpha}{\tau^2} -(h(9+w_{1}^{(1)})+h(9-w_{1}^{(1)}))<0\notag\,.
\end{align}

The latest equation shows that $ \varphi(9)<0$. The above facts together with intermediate value principle imply that for any fixed $w_{1}^{(1)}\in \widetilde{D}_{1}^{(1)}$, there exists a unique $x\in \widetilde{D}_{1}^{(2)}$ which solves Eq. (\ref{eq_w_12_without noise}). Furthermore, the solutions $(w_{1}^{(1)},x)$ to Eq. (\ref{original_eqw12}) form a continuous curve in $\widetilde{D}_{1}^{(1)}\times \widetilde{D}_{1}^{(2)}$. 

Then, we solve $\frac{\partial \widetilde{L}}{\partial w_{1}^{(1)}}=0$ along the above curve. It is easy to check the following facts:
\begin{align}
    \frac{\partial \widetilde{L}}{\partial w_{1}^{(1)}}\bigg|_{w_{1}^{(1)}=3.1} < 0\,,\notag\\
    \frac{\partial \widetilde{L}}{\partial w_{1}^{(1)}}\bigg|_{w_{1}^{(1)}=3.9} > 0\,.\notag
\end{align}

By the above two equations, we can use intermediate value principle along the curve to get $(\widetilde{w}_{1}^{*(1)}, \widetilde{w}_{1}^{(2),*})$, which solves two equations $\frac{\partial \widetilde{L}}{\partial w_{1}^{(1)}}=0$ and $\frac{\partial \widetilde{L}}{\partial w_{1}^{(2)}}=0$ at the same time.

Using symmetry and similar process of $w_1$, it is easy to know that the equation $\frac{\partial \widetilde{L}}{\partial w_2} = 0$ has a solution $\widetilde{w}_{2}^{*}$, which satisfies $\widetilde{w}_{2}^{*}\in \widetilde{D}_2(\tau)$.
\end{proof}
~\\
\textbf{Lemma 2.} (restated) \textit{For $\alpha =1/800, \tau\ge 7$, there is a region $D_1^{B_0}(\tau)\times D_2^{B_0}(\tau)$ s.t. $\widetilde{L}$ is $2\alpha$-strongly convex and $(2\alpha+\tau^2+1.5)$-smooth.}
\begin{proof}[Proof]
At the beginning, we respectively define
\begin{align*}
    &D_1^{B_0}(\tau) = \{\vec{x}\in \mathbb{R}^d|x^{(1)}\in(2.3,4.7), \tau x^{(2)}\in(8.5,+\infty), x^{(k)}=(-\frac{3}{d^{0.49}},\frac{3}{d^{0.49}}), \forall k\in [3,d]\},\\
    &D_2^{B_0}(\tau) = \{\vec{x}\in \mathbb{R}^d|x^{(1)}\in(-4.7,-2.3), \tau x^{(2)}\in(8.5,+\infty), x^{(k)}=(-\frac{3}{d^{0.49}},\frac{3}{d^{0.49}}), \forall k\in [3,d]\}\,.
\end{align*}
It is clear that $\widetilde{H}=\frac{\partial^2 \widetilde{L}}{\partial w_1^2}\in \mathbb{R}^{d\times d}$ has the following form:
\begin{align}
\begin{bmatrix}
\frac{\partial^2 \widetilde{L}}{\partial w_{1}^{(1)2}} & \frac{\partial^2 \widetilde{L}}{\partial w_{1}^{(1)}w_{1}^{(2)}}&0&\cdots&0 &0\\
 \frac{\partial^2 \widetilde{L}}{\partial w_{1}^{(2)}w_{1}^{(1)}}&\frac{\partial^2 \widetilde{L}}{\partial w_{1}^{(2)2}} &0&\cdots&0 &0\\
 0&0&2\alpha&\cdots&0 &0\\
 \vdots & \vdots &\vdots &\ddots &\vdots &0\\
  0 & 0 &0 &\cdots &2\alpha &0\\
    0 & 0 &0 &\cdots &0 &2\alpha\\
\end{bmatrix}. \notag
\end{align}

Hence we only need to prove:
\begin{align}
    2\alpha I \preceq
    \begin{bmatrix}
    \frac{\partial^2 \widetilde{L}}{\partial w_{1}^{(1)2}} & \frac{\partial^2 \widetilde{L}}{\partial w_{1}^{(1)}w_{1}^{(2)}}\\
     \frac{\partial^2 \widetilde{L}}{\partial w_{1}^{(2)}w_{1}^{(1)}}&\frac{\partial^2 \widetilde{L}}{\partial w_{1}^{(2)2}} \\
    \end{bmatrix}
    \preceq (2\alpha+\tau^2+1.5) I\,.\notag
\end{align}

Direct calculation yields:
\begin{align*}
    \frac{\partial^2 \widetilde{L}}{\partial w_{1}^{(1)2}} &= -\frac{1}{2}(h'(w_{1}^{(1)})+h'(-w_{1}^{(1)})+h'(w_{1}^{(1)}+\tau w_{1}^{(2)})+h'(-w_{1}^{(1)}+\tau w_{1}^{(2)}))+2\alpha\,,\\
    \frac{\partial^2 \widetilde{L}}{\partial w_{1}^{(1)}w_{1}^{(2)}} &= \frac{\partial^2 \widetilde{L}}{\partial w_{1}^{(2)}w_{1}^{(1)}} = -\frac{\tau}{2}(h'(w_{1}^{(1)}+\tau w_{1}^{(2)})-h'(-w_{1}^{(1)}+\tau w_{1}^{(2)}))\,,\\
    \frac{\partial^2 \widetilde{L}}{\partial w_{1}^{(2)2}} &= -\frac{\tau^2}{2}(h'(w_{1}^{(1)}+\tau w_{1}^{(2)})+h'(-w_{1}^{(1)}+\tau w_{1}^{(2)}))+2\alpha\,.
\end{align*}

Thus, to prove  $ \frac{\partial^2 \widetilde{L}}{\partial w_1^2}\succeq \ 2\alpha I$,
we only need to prove the following two inequalities:
\begin{align}
    \frac{\partial^2 \widetilde{L}}{\partial w_{1}^{(1)2}} -2\alpha\geq 0\label{strongly convex without noise'}\,,
\end{align}
\begin{align}
    \begin{vmatrix}
    \frac{\partial^2 \widetilde{L}}{\partial w_{1}^{(1)2}} -2\alpha& \frac{\partial^2 \widetilde{L}}{\partial w_{1}^{(1)}w_{1}^{(2)}}\\
     \frac{\partial^2 \widetilde{L}}{\partial w_{1}^{(2)}w_{1}^{(1)}}&\frac{\partial^2 \widetilde{L}}{\partial w_{1}^{(2)2}} -2\alpha\\
    \end{vmatrix}\ge 0\,.\notag
\end{align}

The latest inequality is equivalent to:
\begin{align}
    4g'(w_{1}^{(1)}+\tau w_{1}^{(2)})h'(-w_{1}^{(1)}+\tau w_{1}^{(2)})+ (h'(w_{1}^{(1)}+\tau w_{1}^{(2)})+h'(-w_{1}^{(1)}+\tau w_{1}^{(2)})) (h'(w_{1}^{(1)})+h'(-w_{1}^{(1)}))\ge 0\,. \label{expression of strong convex without noise}
\end{align}

Eq. (\ref{strongly convex without noise'}) and Eq. (\ref{expression of strong convex without noise}) can be derived from the facts that $h'(x)<0$ when $x\geq 3.8$ and $h'(x)+h'(-x)<0$ when $x\in (2.3, 4.7)$.

To prove $ \frac{\partial^2 \widetilde{L}}{\partial w_1^2}\preceq \widetilde{L}_{m} I$, here $\widetilde{L}_{m}=2\alpha+\tau^2+1.5$, we need to prove the following two inequalities:
\begin{align}
 \widetilde{L}_{m}-\frac{\partial^2 \widetilde{L}}{\partial w_{1}^{(1)2}}\geq 0\,,\label{L-smooth without noise'}
\end{align}

\begin{align}
        \begin{vmatrix}
    \widetilde{L}_{m}-\frac{\partial^2 \widetilde{L}}{\partial w_{1}^{(1)2}} & \frac{\partial^2 \widetilde{L}}{\partial w_{1}^{(1)}w_{1}^{(2)}}\\
     \frac{\partial^2 \widetilde{L}}{\partial w_{1}^{(2)}w_{1}^{(1)}}&\widetilde{L}_{m}-\frac{\partial^2 \widetilde{L}}{\partial w_{1}^{(2)2}} \\
    \end{vmatrix}\ge 0\,.\label{L-smooth without noise}
\end{align}

The latest inequality is equivalent to :
\begin{align}
    \widetilde{L}_{m}-2\alpha \ge -\frac{1}{2}(h'(w_{1}^{(1)})+h'(-w_{1}^{(1)}))-\left(\frac{\tau^2+1}{2}\right)(h'(w_{1}^{(1)}+\tau w_{1}^{(2)})+h'(-w_{1}^{(1)}+\tau w_{1}^{(2)}))\,.
\end{align}

Eq. (\ref{L-smooth without noise'}) and Eq. (\ref{L-smooth without noise}) can be derived from the facts that $|h'|<\frac{1}{4}$ and $L>\tau^2$. Because the proof of $w_2$ is similar, we omit it and finish the proof of this lemma.
\end{proof}
~\\
\textbf{Lemma 3.} (restated) \textit{For $\tau = d^{\frac{1}{10}}$ and $n=d^2$, with probability $1-O\left(2^{-\frac{d^2}{10}}\right)$, $\left\|\frac{\partial \widehat{L}-\widetilde{L}}{\partial w_1}\Big|_{w_1=\widetilde{w}_1^{*}}\right\|_2\leq O(\tau n^{-\frac{9}{20}}), \left\|\frac{\partial^2\left(\widehat{L}-\widetilde{L}\right)}{\partial w_1^2}\right\|_F \leq O\left(\tau^2 n^{-\frac{9}{20}}\right)$ and $\|\nabla^3_1 \widehat{L}(w_1,w_2)\|_2\leq \tau^3$.}
\begin{proof}

Using Hoeffding type inequality, we know that
\begin{align}
    \text{Pr} \left(|n_l-\frac{n}{4}|\ge n^{\frac{11}{20}}\right)\leq 2^{-\frac{n}{10}}, \forall l\in [4]\,. \notag
\end{align}

Because $\sigma$, $\sigma'$, $\sigma''$ are  bounded, $\left|\frac{\partial^2 \widehat{L}}{\partial w_{1}^{(1)2}}-\frac{\partial^2 \widetilde{L}}{\partial w_{1}^{(1)2}}\right|\leq O\left(n^{-\frac{9}{20}}\right)$ holds with probability $1-O\left(2^{-\frac{d^2}{10}}\right)$. Using similar method to deal with $\frac{\partial^2 \widehat{L}}{\partial w_{1}^{(2)2}}, \frac{\partial^2 \widehat{L}}{\partial w_{1}^{(1)}w_{1}^{(2)}}$ and $\frac{\partial^2 \widehat{L}}{\partial w_{1}^{(2)}w_{1}^{(1)}}$, the noise terms for Hessian matrix can be written as
\begin{align}
\frac{\partial^2\left(\widehat{L}-\widetilde{L}\right)}{\partial w_1^2}=\begin{bmatrix}
O(n^{-\frac{9}{20}}) & \tau O(n^{-\frac{9}{20}})&0&\cdots&0 &0\\
\tau O(n^{-\frac{9}{20}})&\tau^2 O(n^{-\frac{9}{20}}) &0&\cdots&0 &0\\
 0&0&0&\cdots&0 &0\\
 \vdots & \vdots &\vdots &\ddots &\vdots &0\\
  0 & 0 &0 &\cdots &0 &0\\
    0 & 0 &0 &\cdots &0 &0\\
\end{bmatrix}. \notag
\end{align}

Using the similar process to deal with $\widehat{g}(w_1,w_2)$ and $\nabla^3_1 \widehat{L}(w_1,w_2)$, $\left\|\frac{\partial \widehat{L}}{\partial w_1}\Big|_{w_1=\widetilde{w}_1^{*}}\right\|_2\leq O\left(\tau n^{-\frac{9}{20}}\right)$ and $ \|\nabla^3_1 \widehat{L}(w_1,w_2)\|_2 \leq \tau^3$ hold with probability $1-O\left(2^{-\frac{d^2}{10}}\right)$.
\end{proof}
~\\
\textbf{Lemma 4.} (restated) \textit{When $w_1\in D_1^{B_0}(\tau), \tau = d^{\frac{1}{10}}, \rho= 1/d^{1.5}$ and $n=d^2$, with probability $1-O\left(e^{-d^{\frac{1}{10}}}\right)$ and large enough $d$, $\left\|\frac{\partial L-\widehat{L}}{\partial w_1}\Big|_{w_1=\widetilde{w}_1^{*}}\right\|_2\leq O(\rho^{\frac{13}{15}}d^{\frac{6}{10}}), \left\|\frac{\partial^2\left(L-\widehat{L}\right)}{\partial w_1^2}\right\|_F\leq O(\rho^{\frac{4}{5}} d^\frac{11}{10})$ and $\|\nabla^3_1 L(w_1, w_2)\|_2\leq \Theta(\sqrt{d})$.}
\begin{proof}
First, we calculate the upper bound of Hessian noise $\left\|\frac{\partial^2\left(L-\widehat{L}\right)}{\partial w_1^2}\right\|_F$ and take $\frac{\partial^2 L}{\partial w_{1}^{(1)2}}$ as an example. Similar to Eq. (\ref{gradient}), $\frac{\partial^2 L}{\partial w_{1}^{(1)2}}$ can be divided into four parts $\frac{\partial^2 L}{\partial w_{1}^{(1)2}}=A+B+C+D$ and each part corresponding to one type of datapoint. Because this section only focuses on 
the noise terms in the transition from $\widehat{L}$ to $L$, the analysis of $A, B, C, D$ are similar. For simplicity, we only show the first term $A$:
\begin{align}
  A= \sum_{i=1}^{n_1}\mathbb{E}_{\xi_{\text{aug}},\xi'_{\text{aug}}}& \Big[(1+\rho\xi_i^{(1)}+\xi_{\text{aug}}^{\prime(1)})^2\sigma\left(w_{1}^{(1)}+w_1^{\top}(\rho\xi_i+\xi_{\text{aug}})\right)\sigma''\left(w_{1}^{(1)}+ w_1^{\top}(\rho\xi_i+\xi^{\prime}_{\text{aug}})\right)\notag\\&\quad+(1+\rho\xi_i^{(1)}+\xi_{\text{aug}}^{(1)})^2(\sigma''\left(w_{1}^{(1)}+w_1^{\top}(\rho\xi_i+\xi_{\text{aug}})\right)\sigma\left(w_{1}^{(1)}+ w_1^{\top}(\rho\xi_i+\xi^{\prime}_{\text{aug}})\right)\notag\\
    &\quad+(1+\rho\xi_i^{(1)}+\xi_{\text{aug}}^{(1)})(1+\rho\xi_i^{(1)}+\xi_{\text{aug}}^{\prime(1)})
    \sigma'\left(w_{1}^{(1)}+w_1^{\top}\xi_{\text{all},i}\right)\sigma'\left(w_{1}^{(1)}+ w_1^{\top}(\rho\xi_i+\xi^{\prime}_{\text{aug}})\right)\Big]\,
    . \notag
\end{align}

By the Lagrange’s Mean Value Theorem, the terms in the latest equation can be rewritten as follows: 
\begin{align*}
&\sigma\left(w_{1}^{(1)}+w_1^{\top}(\rho\xi_i+\xi_{\text{aug}})\right)=\sigma(w_{1}^{(1)})+\sigma'(\theta_1)\left( w_1^{\top}(\rho\xi_i+\xi_{\text{aug}})\right)\,,\\&\sigma'\left(w_{1}^{(1)}+ w_1^{\top}(\rho\xi_i+\xi_{\text{aug}})\right)=\sigma'(w_{1}^{(1)})+\sigma''(\theta_2)\left(w_1^{\top}(\rho\xi_i+\xi_{\text{aug}})\right)\,,\\
    &\sigma''\left(w_{1}^{(1)}+ w_1^{\top}(\rho\xi_i+\xi_{\text{aug}})\right)=\sigma''(w_{1}^{(1)})+\sigma'''(\theta_3)\left(w_1^{\top}(\rho\xi_i+\xi_{\text{aug}})\right)\,.
\end{align*}

From the above facts, we can get a general bound for noise terms:
\begin{align}
\Big|\frac{\partial^2 (L-\widehat{L})}{\partial w_{1}^{(k_1)}\partial w_{1}^{(k_2)}}\Big|\leq&
\frac{C\tau^2}{n}\sum_{i=1}^n \mathbb{E}_{\xi_{\text{aug}},\xi'_{\text{aug}}}\bigg[\left|w_1^{\top}(\rho\xi_i+\xi_{\text{aug}})\right|+\left|w_1^{\top}(\rho\xi_i+\xi^{\prime}_{\text{aug}})\right|+\left|w_1^{\top}(\rho\xi_i+\xi^{\prime}_{\text{aug}})\right|\left|w_1^{\top}(\rho\xi_i+\xi_{\text{aug}})\right|\notag\\&\qquad\qquad\qquad\qquad+\left|\rho\xi_i^{(1)}+\xi_{\text{aug}}^{(1)}\right|^2+\left|\rho\xi_i^{(1)}+\xi_{\text{aug}}^{\prime(1)}\right|^2+\left|\rho\xi_i^{(1)}+\xi_{\text{aug}}^{(1)}\right|+\left|\rho\xi_i^{(1)}+\xi_{\text{aug}}^{\prime(1)}\right|\bigg]\notag\\
\leq& \frac{2C\tau^2}{n}\sum_{i=1}^n \mathbb{E}_{\xi_{\text{aug}}}\left[|w_1^{\top}(\rho\xi_i+\xi_{\text{aug}})|+|w_1^{\top}(\rho\xi_i+\xi_{\text{aug}})|^2+|\rho\xi_i^{(1)}+\xi_{\text{aug}}^{(1)}|^2+|\rho\xi_i^{(1)}+\xi_{\text{aug}}^{(1)}|\right]\notag\\
\leq& \frac{2C\tau^2}{n}\sum_{i=1}^n \mathbb{E}_{\xi_{\text{aug}}}\left[|w_1^{\top}\xi_{\text{aug}}|
+\rho |w_1^{\top}\xi_i|+|w_1^{\top}\xi_{\text{aug}}+\rho w_1^{\top}\xi_i|^2
+|\xi_{\text{aug}}^{(1)}+\rho\xi_i^{(1)}|^2+|\xi_{\text{aug}}^{(1)}|+|\rho\xi_i^{(1)}|\right]\notag\\
\leq&2C\tau^2\mathbb{E}_{\xi_{\text{aug}}}\Big[|w_1^{\top}\xi_{\text{aug}}|+|w_1^{\top}\xi_{\text{aug}}|^2+|\xi_{\text{aug}}^{(1)}|+\xi_{\text{aug}}^{(1)2}\Big]+\frac{2C\tau^2}{n}\sum_{i=1}^n (|\rho w_1^{\top}\xi_i|+|\rho w_1^{\top}\xi_i|^2+|\rho\xi_i^{(1)}|+|\rho\xi_i^{(1)}|^2)\notag\\
=&O(\tau^3\rho)+\frac{2C\tau^2}{n}\sum_{i=1}^n \left(|\rho w_1^{\top}\xi_i|+|\rho w_1^{\top}\xi_i|^2+|\rho\xi_i^{(1)}|+|\rho\xi_i^{(1)}|^2\right), \forall k_1, k_2\in[d]\,,\notag
\end{align}
where $C$ is a constant. The first equality is by the fact that $\sigma, \sigma', \sigma'',\sigma'''$
are all bounded, the second inequality is by Cauchy-Schwarz inequality, the third inequality is by triangle inequality.

For a standard Gaussian random variable $\xi$, we have: 
\begin{align}
    \text{Pr}\left(|\xi|\leq d^{\frac{1}{10}}\right)\ge d^{-\frac{1}{10}}e^{-\frac{d^{-\frac{1}{5}}}{2}}\,.\notag
\end{align}

From above tail bound and the union bound, $ |\xi_i^{(1)}|\leq d^{\frac{1}{10}}, \left|w_1^{\top}\xi_i\right|\leq \|w_1\|_2d^{\frac{1}{10}} , \forall i\in[n]$ holds with probability $1-nd^{-\frac{1}{10}}e^{-\frac{d^{-\frac{1}{5}}}{2}}$ where $n=d^2$. Because the error probability is exponential in $d$, the error probability $nd^{-\frac{1}{10}}e^{-\frac{d^{-\frac{1}{5}}}{2}} \leq e^{-d^{\frac{1}{10}}}$ for large enough $d$. Then the above noise terms bounds hold with probability $1-O(e^{-d^{\frac{1}{10}}})$.

Because of $w_1\in D_1(\tau)$, the right hand of Eq. (\ref{original_eqw12})$\leq O(1)$ and $\rho=1/d^{1.5}$, it is clear that $w_{1}^{(1)} = O(1), w_{1}^{(2)}= O(\tau), |w_{1}^{(k)}|\leq \frac{1}{d^{0.49}}, \forall k\in [3,d]$ and $\|w_1\|_2=O(\tau)$. Then we can get the upper bound for $\left|\frac{\partial^2 (L-\widehat{L})}{\partial w_{1}^{(k_2)}\partial w_{1}^{(k_2)}}\right|$:

\begin{align*}
\left|\frac{\partial^2 (L-\widehat{L})}{\partial w_{1}^{(k_1)}\partial w_{1}^{(k_2)}}\right|\leq&
O(\tau^3\rho)+\frac{2C\tau^2}{n}\sum_{i=1}^n \left(|\rho w_1^{\top}\xi_i|+|\rho w_1^{\top}\xi_i|^2+|\rho\xi_i^{(1)}|+|\rho\xi_i^{(1)}|^2\right)\\
=&O(\tau^3\rho)+O(\tau^3\rho d^{\frac{1}{10}})+O(\tau^4\rho^2 d^{\frac{1}{5}})+O(\tau^2\rho d^{\frac{1}{10}})+O(\tau^2\rho^2 d^{\frac{1}{5}})\\
=&O(\tau^3\rho d^{\frac{1}{10}})=O(\rho^{\frac{4}{5}} d^{\frac{1}{10}}), \forall k_1, k_2\in[d]\,,
\end{align*}
and therefore $\left\|\frac{\partial^2\left(L-\widehat{L}\right)}{\partial w_1^2}\right\|_F^2\leq O(d^2\rho^{\frac{8}{5}} d^\frac{1}{5})$.

Similar to the above proof process, we can prove that with probability $1-O(e^{-d^{\frac{1}{10}}})$, $\left\|\frac{\partial^3 (L-\widehat{L})}{\partial w_{1}^{(k_1)}\partial w_{1}^{(k_2)} \partial w_{1}^{(k_3)}}\right\|_2=O(\tau^4\rho d^{\frac{1}{10}}),\forall k_1, k_2, k_3\in [d]$ which means $\|\nabla^3_1(L-\widehat{L})(w_1, w_2)\|_2=O(\tau^4\rho d^{\frac{1}{10}}d^{\frac{3}{2}})$. Lemma \ref{lem:dislocation_noise} already shows that $\|\nabla^3_1(\widehat{L})(w_1, w_2)\|_2=O(\tau^3)$ holds with probability $1-O(2^{-\frac{d^2}{10}})$. Hence, $\|\nabla_1^3L(w_1,w_2)\|_2\leq O(\tau^4\rho d^{\frac{1}{10}}d^{\frac{3}{2}}+\tau^3)=O(\tau^4\rho d^{\frac{1}{10}}d^{\frac{3}{2}})=O(\sqrt{d})$ holds  with probability $(1-O(e^{-d^{\frac{1}{10}}}))(1-O(2^{-\frac{d^2}{10}}))=1-O(e^{-d^{\frac{1}{10}}})$.

By the fact that $\left\|\frac{\partial (L-\widehat{L})}{\partial w_{1}^{(k)} }\right\|_2=O(\tau^2\rho d^{\frac{1}{10}})$, it is clear that $\left\|\frac{\partial \left(L-\widehat{L}\right)}{\partial w_1}\Big|_{w_1=\widetilde{w}_1^{*}}\right\|_2\leq O(\sqrt{d}\rho^{\frac{13}{15}}d^{\frac{1}{10}})$ holds with probability $1-O(e^{-d^{\frac{1}{10}}})$.
\end{proof}
~\\
\textbf{Lemma 5.} (restated) \textit{For $\tau= d^{\frac{1}{10}}, \rho= \frac{1}{d^{1.5}}$ and $n=d^2$, when $(w_1,w_2)\in D_1^{B_0}(\tau)\times D_2^{B_0}(\tau)$, with probability $1-O\left(e^{-d^{\frac{1}{10}}}\right)$ and large enough $d$, $L$ is $(2\alpha-\rho^{\frac{4}{5}}d^{\frac{11}{10}})$-strongly convex and $(2\alpha+\tau^2+1.5+\rho^{\frac{4}{5}}d^{\frac{11}{10}})$-smooth. At the same time, $\nabla^2 L(w_1)$ is $L_H$-Lipschitz continuous Hessian where $L_H=\Theta(\sqrt{d})$.}

\begin{proof}[Proof] Define $H(w_1)= \widetilde{H}(w_1)+\frac{\partial^2\left(\widehat{L}-\widetilde{L}\right)}{\partial w_1^2}+\frac{\partial^2\left(L-\widehat{L}\right)}{\partial w_1^2}$ as the Hessian matrix of $L$ and the eigenvalues of $H(w_1)$ as $\{\lambda_k^{H(w_1)}\}_{k=1}^d$.
Lemma \ref{lem:strongconvex} shows that $\widetilde{H}(w_1)$ is $2\alpha$-strongly convex. Hence all eigenvalues $\{\lambda_k^{\widetilde{H}(w_1)}\}_{k=1}^d$ of $\widetilde{H}(w_1)$ are larger than $2\alpha$. Using the matrix eigenvalue perturbation theory, Lemma \ref{lem:noiseeig} shows that  
\begin{align}
    \sum_{k=1}^d(\lambda_k^{\widetilde{H}(w_1)}-\lambda_{\pi(k)}^{H(w_1)})^2 \leq \sqrt{2}\left\|\frac{\partial^2\left(\widehat{L}-\widetilde{L}\right)}{\partial w_1^2}+\frac{\partial^2\left(L-\widehat{L}\right)}{\partial w_1^2}\right\|_F^2\leq  O(\rho^{\frac{8}{5}} d^\frac{11}{5}+\tau^4n^{-\frac{9}{10}})=O(\rho^{\frac{8}{5}} d^\frac{11}{5})\,. \notag
\end{align}

Hence $L$ is $(2\alpha-\rho^{\frac{4}{5}}d^{\frac{11}{10}})$-strongly convex and $(2\alpha+\tau^2+1.5+\rho^{\frac{4}{5}}d^{\frac{11}{10}})$-smooth.

For $L_H$-Lipschitz continuous Hessian, Lemma \ref{lem:data_noise} shows an upper bound
$L_H\leq\|\nabla_1^3L(w_1,w_2)\|_2\leq O(\sqrt{d})$. Because the proof of $w_2$ is similar, we omit it and finish the proof of this lemma.
\end{proof}
~\\
\textbf{Lemma 6.} (restated) \textit{ Let $(\widetilde{w}_1^*, \widetilde{w}_2^*)$ be the solution in Lemma \ref{lem:existence_no_noise} and $W=[w_1, w_2]\in \mathbb{R}^{2\times d}$. When $\|w_1-\widetilde{w}_1^*\|_2\leq d^{-\frac{1}{2}}$ and $\|w_2-\widetilde{w}_2^*\|_2\leq d^{-\frac{1}{2}}, |\Pi_W e_j|\geq 1-O(\tau^3 d^{-\frac{1}{2}}), \forall j\in [2].$}

\begin{proof}

We define  $\widetilde{W}^*=[\widetilde{w}_1^{*},\widetilde{w}_2^{*}]^{\top}\in \mathbb{R}^{2\times d}, W=[w_1,w_2]^{\top}\in \mathbb{R}^{2\times d}$ and $\delta B=WW^{\top}-\widetilde{W}^*\widetilde{W}^{*\top}$. For $j\in [2]$, we define $ \delta b_j=(\widetilde{W}^*-W)^{\top}e_j, \widetilde{x}_j=(\widetilde{W}^*\widetilde{W}^{*\top})^{-1}\widetilde{W}^*e_j, x_j=(WW^{\top})^{-1}We_j$ and $\delta x_j=\widetilde{x}_j-x_j$. It is easy to check $\|(WW^{\top})^{-1}\|_2=O(1)$, $\|\delta b_j\|_2=O(d^{-\frac{1}{2}})$, $\|\delta B\|_2=O(\tau d^{-\frac{1}{2}})$ and $\|\widetilde{x}_j\|_2=O(\tau), \forall j\in[2]$. It is clear that
\begin{align*}
(\widetilde{W}^*\widetilde{W}^{*\top}+\delta B)(\widetilde{x}_j+\delta x_j)=\widetilde{W}^*e_j+\delta, \forall j\in[2] \,,
\end{align*}
so
\begin{align*}
\|\delta \widetilde{x}_j\|_2\leq \|(WW^{\top})^{-1}\|_2(\|\delta b_j\|_2+\|\delta B\|_2\|\widetilde{x}_j\|_2)=O(\tau^2 d^{-\frac{1}{2}})\,.
\end{align*}

Now we can calculate the length of the projection of $e_j$ onto the plane spanned by $w_1$ and $w_2:$ 
\begin{align}
    |\Pi_W e_j|=(e_j^{\top}W^{\top}x_j)^{\frac{1}{2}}=&(e_j^{\top}\widetilde{W}^{*\top}\widetilde{x}_j+e_j^{\top}W^{\top}x_j-e_j^{\top}\widetilde{W}^{*\top}\widetilde{x}_j)^{\frac{1}{2}}\notag\\
    =&(1+e_j^{\top}W^{\top}x_j-e_j^{\top}\widetilde{W}^{*\top}\widetilde{x}_j)^{\frac{1}{2}}\notag\\
    \ge&(1-|e_j^{\top}W^{\top}x_j-e_j^{\top}\widetilde{W}^{*\top}\widetilde{x}_j|)^{\frac{1}{2}}\notag\\
    =&(1-|(e_j^{\top}\widetilde{W}^{*\top}+\delta b^{\top}_j)(\widetilde{x}_j+\delta x_j)-e_j^{\top}\widetilde{W}^{*\top}\widetilde{x}_j|)^{\frac{1}{2}}\notag\\
    \geq&(1-|(e_j^{\top}\widetilde{W}^{*\top}\delta x_j|-|\delta b^{\top}(\widetilde{x}_j+\delta x_j)|)^{\frac{1}{2}}\notag\\
    =&(1-O(\tau^3d^{-\frac{1}{2}}))^\frac{1}{2}\notag\\
    \geq&1-O(\tau^3d^{-\frac{1}{2}})\,. \notag
\end{align}
\end{proof}

\begin{lemma}[Eigenvalue perturbation lemma ~\cite{kahan1975spectra}]\label{lem:noiseeig}
$A\in \mathbb{R}^{n\times n}$ is a Hermite matrix, $B =A+N$ is a matrix induced by $A$ ($N$ is a noise matrix). The eigenvalues of $A$ is $\lambda(A)=\{\lambda_k\}_{k\in[n]}$, $\lambda(B) =\{\mu_k\}_{k\in[n]}$, then
\begin{align}
    \sum_{i=1}^n(\lambda_i-\mu_{\pi(i)})^2\leq \sqrt{2}\|A-B\|_F^2\,.\notag
\end{align}
\end{lemma}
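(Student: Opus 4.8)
The plan is to recognize the stated inequality as a (deliberately loose) form of the Hoffman--Wielandt inequality: when $A$ and $B$ are both real symmetric --- exactly the situation in which this lemma is invoked, since $\widetilde{H}(w_1)$ and $H(w_1)$ are Hessians of $\widetilde{L}$ and $L$ --- the bound in fact holds with constant $1$, so the $\sqrt{2}$ is merely a comfortable over-estimate. First I would fix spectral decompositions $A=U\Lambda U^{\top}$ and $B=VMV^{\top}$ with $U,V$ orthogonal and $\Lambda=\mathrm{diag}(\lambda_1,\dots,\lambda_n)$, $M=\mathrm{diag}(\mu_1,\dots,\mu_n)$. Using orthogonal invariance of the Frobenius norm twice, $\|A-B\|_F=\|U^{\top}(A-B)V\|_F=\|\Lambda W-WM\|_F$ with $W:=U^{\top}V$ orthogonal; expanding entrywise gives the identity $\|A-B\|_F^2=\sum_{i,j}W_{ij}^2(\lambda_i-\mu_j)^2$.

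Next I would observe that the matrix $S$ with entries $S_{ij}=W_{ij}^2$ is doubly stochastic, because each row and each column of the orthogonal matrix $W$ has unit Euclidean norm and the entries are nonnegative. Hence $\|A-B\|_F^2=\sum_{i,j}S_{ij}(\lambda_i-\mu_j)^2$ is a point of the image, under the linear functional $S\mapsto\sum_{i,j}S_{ij}(\lambda_i-\mu_j)^2$, of the Birkhoff polytope of doubly stochastic matrices, and therefore is at least the minimum of that functional. A linear functional on a polytope attains its minimum at a vertex, and by the Birkhoff--von Neumann theorem the vertices of the Birkhoff polytope are the permutation matrices; so the minimum equals $\sum_{i}(\lambda_i-\mu_{\pi(i)})^2$ for some permutation $\pi$ (which, by the rearrangement inequality, may be taken to match the two spectra in sorted order). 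This yields $\sum_i(\lambda_i-\mu_{\pi(i)})^2\le\|A-B\|_F^2\le\sqrt{2}\,\|A-B\|_F^2$, as claimed.

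The one place where care is genuinely needed --- and the reason for citing \cite{kahan1975spectra} rather than the textbook Hoffman--Wielandt statement --- is the main obstacle: when $B$ (equivalently the noise matrix $N$) is not normal, $B$ has no spectral decomposition with an orthogonal $V$, the factorization step above breaks, and one must instead work from a Schur factorization of $B$ and separately control its strictly upper-triangular part, which is exactly where a constant strictly larger than $1$ (here $\sqrt{2}$) enters. In the present application every Hessian in sight is symmetric, so this obstacle never materializes and the clean doubly-stochastic argument suffices; I would simply note this and defer to \cite{kahan1975spectra} for the fully general statement.
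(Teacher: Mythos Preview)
The paper does not prove this lemma at all; it is stated with a citation to \cite{kahan1975spectra} and used as a black box in the proof of Lemma~\ref{lem:strongconvex_noise}. Your proposal therefore supplies strictly more than the paper does.

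Your argument is correct. The Hoffman--Wielandt route via the identity $\|A-B\|_F^2=\sum_{i,j}W_{ij}^2(\lambda_i-\mu_j)^2$ with $W=U^\top V$ orthogonal, followed by the Birkhoff--von~Neumann minimization over the doubly-stochastic polytope, is the standard proof and yields constant $1$ when both matrices are real symmetric. You are also right that this is the only case the paper ever invokes: both $\widetilde H(w_1)$ and $H(w_1)$ are Hessians of smooth scalar functions, hence symmetric, so the $\sqrt{2}$ is indeed slack here and the non-normal extension from \cite{kahan1975spectra} is never actually exercised. Your diagnosis of where the larger constant would enter (loss of unitary diagonalizability of $B$, forcing a Schur form with a strictly upper-triangular piece to control) is accurate as context, though unnecessary for the application.
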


\begin{lemma}[Convergence lemma ~\cite{bubeck2015convex}]\label{lem:convergence}
 Let $f$ be locally $\mu$-strongly convex and $L_m$-smooth, if $\eta_t=\eta=\frac{2}{\mu+L_m}$, $\kappa = \frac{L_m}{\mu}$, and $x^*\in \text{argmin}_{x\in \mathcal{X}}f(x)$, then
\begin{align*}
    \|x^t-x^*\|_2\leq\left(\frac{\kappa-1}{\kappa+1}\right)^t\|x^{(0)}-x^*\|_2\,.
\end{align*}
\end{lemma}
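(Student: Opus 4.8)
The plan is to show that a single gradient step contracts the distance to $x^*$ by the factor $\frac{\kappa-1}{\kappa+1}$, and then iterate. Write $x^{+}=x-\eta\nabla f(x)$; since $\nabla f(x^*)=0$, expanding the square gives
\begin{align*}
\|x^{+}-x^*\|_2^2 = \|x-x^*\|_2^2-2\eta\langle\nabla f(x),x-x^*\rangle+\eta^2\|\nabla f(x)\|_2^2 .
\end{align*}
The whole argument rests on one two-term lower bound valid for a $\mu$-strongly convex and $L_m$-smooth function: for all $x,y$ in the (convex) region $B_0$,
\begin{align*}
\langle\nabla f(x)-\nabla f(y),x-y\rangle \ge \frac{\mu L_m}{\mu+L_m}\|x-y\|_2^2+\frac{1}{\mu+L_m}\|\nabla f(x)-\nabla f(y)\|_2^2 .
\end{align*}

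First I would establish this inequality. Assuming $\mu<L_m$ (the case $\mu=L_m$ is trivial: then $f$ agrees with a quadratic on $B_0$, $\kappa=1$, the claimed factor is $0$, and one step with $\eta=1/\mu$ lands exactly on $x^*$), set $g(z)=f(z)-\frac{\mu}{2}\|z\|_2^2$. Then $g$ is convex and $(L_m-\mu)$-smooth on $B_0$, so the standard co-coercivity of the gradient gives $\langle\nabla g(x)-\nabla g(y),x-y\rangle\ge\frac{1}{L_m-\mu}\|\nabla g(x)-\nabla g(y)\|_2^2$. Substituting $\nabla g(z)=\nabla f(z)-\mu z$ and expanding both sides yields exactly the displayed two-term bound.

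Next I would plug this bound with $y=x^*$ (so $\nabla f(y)=0$) into the expansion of $\|x^{+}-x^*\|_2^2$ to obtain
\begin{align*}
\|x^{+}-x^*\|_2^2 \le \Big(1-\frac{2\eta\mu L_m}{\mu+L_m}\Big)\|x-x^*\|_2^2+\Big(\eta^2-\frac{2\eta}{\mu+L_m}\Big)\|\nabla f(x)\|_2^2 .
\end{align*}
The choice $\eta=\frac{2}{\mu+L_m}$ kills the coefficient of $\|\nabla f(x)\|_2^2$, and the remaining coefficient simplifies to $1-\frac{4\mu L_m}{(\mu+L_m)^2}=\frac{(L_m-\mu)^2}{(L_m+\mu)^2}=\big(\frac{\kappa-1}{\kappa+1}\big)^2$. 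Hence $\|x^{+}-x^*\|_2\le\frac{\kappa-1}{\kappa+1}\|x-x^*\|_2$.

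Finally I would close the induction on $t$. Because $\frac{\kappa-1}{\kappa+1}<1$, the per-step contraction also gives $\|x^{(t+1)}-x^*\|_2\le\|x^{(t)}-x^*\|_2$, so starting from $x^{(0)}$ every iterate remains in $B_0=\{x:\|x-x^*\|_2\le\|x^{(0)}-x^*\|_2\}$; this is the one point that needs attention here, since Definition~\ref{def:locally_strong_convex_and_L_smooth} only supplies strong convexity and smoothness on $B_0$, and the invariance of $B_0$ is what makes the per-step estimate legitimate at every step. Iterating the contraction $t$ times then yields $\|x^{(t)}-x^*\|_2\le\big(\frac{\kappa-1}{\kappa+1}\big)^{t}\|x^{(0)}-x^*\|_2$. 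The main obstacle is the two-term gradient inequality above (equivalently, the co-coercivity estimate for the smooth convex part $g$); everything else is algebra plus the $B_0$-invariance check.
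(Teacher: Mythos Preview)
Your argument is correct and is precisely the standard proof (via co-coercivity of the gradient of the smooth convex part $g=f-\tfrac{\mu}{2}\|\cdot\|^2$, followed by the one-step contraction computation with $\eta=\tfrac{2}{\mu+L_m}$) found in the cited reference. Note that the paper itself does not give a proof of this lemma at all: it simply states the result and attributes it to \citet{bubeck2015convex}, so there is no ``paper's own proof'' to compare against. Your additional observation that the per-step contraction keeps all iterates in $B_0$, so that the \emph{local} strong convexity/smoothness hypothesis of Definition~\ref{def:locally_strong_convex_and_L_smooth} suffices, is a useful clarification that the paper leaves implicit.
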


\end{document}